\pdfoutput=1
\documentclass[twoside]{article}

\usepackage[accepted]{aistats2024}
\usepackage[round]{natbib}

\bibliographystyle{apalike}
\usepackage{lipsum,booktabs}
\usepackage{amsmath,mathrsfs,amssymb,amsfonts,bm,enumitem}
\usepackage{rotating}
\usepackage{pdflscape}
\usepackage{tcolorbox}
\usepackage{url,dsfont,nicefrac}
\allowdisplaybreaks
\usepackage{appendix}
\usepackage{multirow,makecell,tabularx}
\usepackage{algorithmic,algorithm}

\renewcommand{\tilde}{\widetilde}
\renewcommand{\hat}{\widehat}

\usepackage{hyperref}

\hypersetup{
    colorlinks=true,
    linkcolor=blue,
    citecolor=blue,
    filecolor=magenta,
    urlcolor=cyan
}

\def \A {\mathcal{A}}
\def \B {\mathbb{B}}
\def \B {\mathcal{B}}
\def \C {\mathcal{C}}

\def \E {\mathbb{E}}
\def \F {\mathcal{F}}

\def \M {\mathcal{M}}

\def \O {\mathcal{O}}
\def \P {\mathcal{P}}

\def \R {\mathbb{R}}
\def \S {\mathcal{S}}

\def \Ph {\hat{P}}
\def \Dp {D_{\psi}}

\def \x {\mathbf{x}}

\def \qh {\hat{q}}

\def \qt {\tilde{q}}

\def \Ot {\tilde{\O}}

\def \ellh {\hat{\ell}}
\def \thetah {\hat{\theta}}

\def \endenv {\hfill\raisebox{1pt}{$\triangleleft$}}

\def \regret {\textsc{Regret}}

\def \error {\textnormal{\textsc{Error}}}

\def \biasone {\textnormal{\textsc{Bias-\uppercase\expandafter{\romannumeral1}}}}
\def \biastwo {\textnormal{\textsc{Bias-\uppercase\expandafter{\romannumeral2}}}}

\def \termone {\textsc{Term-\uppercase\expandafter{\romannumeral1}}}
\def \termtwo {\textsc{Term-\uppercase\expandafter{\romannumeral2}}}

\def \sumk {\sum_{k=1}^K}
\def \sumh {\sum_{h=1}^H}

\usepackage{mathtools}
\let\norm\undefined 
\DeclarePairedDelimiter\norm{\lVert}{\rVert}

\DeclarePairedDelimiter\Bignorm{\Big\lVert}{\Big\rVert}
\DeclarePairedDelimiter\abs{\lvert}{\rvert}

\DeclarePairedDelimiter\Bigabs{\Big\lvert}{\Big\rvert}
\newcommand\inner[2]{\langle #1, #2 \rangle}

\DeclareMathOperator{\indicator}{\mathbb{I}}
\DeclareMathOperator*{\Reg}{Reg}

\DeclareMathOperator*{\argmax}{arg\,max}
\DeclareMathOperator*{\argmin}{arg\,min}


\usepackage{amsthm}
\usepackage{thmtools, thm-restate}

\newtheorem{myThm}{Theorem}

\newtheorem{myLemma}{Lemma}

\theoremstyle{definition}

\newtheorem{myDef}{Definition}

\newtheorem{myRemark}{Remark}
\newtheorem*{myProofSketch}{Proof Sketch}

\usepackage{graphicx,subfigure,color} 

\definecolor{wine_red}{RGB}{228,48,64}
\definecolor{DSgray}{cmyk}{0,1,0,0}

\newcommand\given[1][]{\:#1\vert\:}
\newcommand\givenn[1][]{\:#1\Big\vert\:}

\begin{document}

\runningtitle{Improved Algorithm for Adversarial Linear Mixture MDPs with Bandit Feedback and Unknown Trans.}

\twocolumn[

  \aistatstitle{Improved Algorithm for Adversarial Linear Mixture MDPs \\ with Bandit Feedback and Unknown Transition}

  \aistatsauthor{Long-Fei Li, Peng Zhao, Zhi-Hua Zhou}

  \aistatsaddress{National Key Laboratory for Novel Software Technology, Nanjing University, China\\
    School of Artificial Intelligence, Nanjing University, China\\
    \{lilf, zhaop, zhouzh\}@lamda.nju.edu.cn} ]

\begin{abstract}
  We study reinforcement learning with linear function approximation, unknown transition, and adversarial losses in the bandit feedback setting. Specifically, we focus on linear mixture MDPs whose transition kernel is a linear mixture model. We propose a new algorithm that attains an $\Ot(d\sqrt{HS^3K} + \sqrt{HSAK})$ regret with high probability, where $d$ is the dimension of feature mappings, $S$ is the size of state space, $A$ is the size of action space, $H$ is the episode length and $K$ is the number of episodes. Our result strictly improves the previous best-known $\Ot(dS^2 \sqrt{K} + \sqrt{HSAK})$ result in~\citet{ICLR'23:bandit-unknown-mixture} since $H \leq S$ holds by the layered MDP structure. Our advancements are primarily attributed to (\romannumeral1) a new least square estimator for the transition parameter that leverages the visit information of all states, as opposed to only one state in prior work, and (\romannumeral2) a new self-normalized concentration tailored specifically to handle non-independent noises, originally proposed in the dynamic assortment area and firstly applied in reinforcement learning to handle correlations between different states.
\end{abstract}

\section{INTRODUCTION}
\label{sec:intro}

\begin{table*}[t]
  \vspace{-3mm}
  \caption{Comparisons of regret bounds for adversarial tabular MDPs and linear mixture MDPs with bandit feedback and unknown transition in the literature. $S$ is the size of state space, $A$ is the size of action space, $K$ is the number of episodes and $H$ is the length of each episode, $d$ is the dimension of feature mapping.} \vspace{2mm}
  \label{tab:comparison}
  \centering
  \renewcommand*{\arraystretch}{1.2}
  \newcolumntype{Y}{>{\centering\arraybackslash}X}
  \begin{tabularx}{\textwidth}{cYYY}
    \toprule
    \multicolumn{1}{c}{}         & \textbf{Reference}                     & \textbf{Model}      & \textbf{Regret}                    \\ \midrule
    \multirow{3}{*}{Upper bound} & \citet{ICML'20:bandit-unknown-chijin}  & Tabular MDPs        & $\Ot(HS \sqrt{AK})$                \\
                                 & \citet{ICLR'23:bandit-unknown-mixture} & Linear Mixture MDPs & $\Ot(dS^2\sqrt{K} + \sqrt{HSAK})$  \\
                                 & This work                              & Linear Mixture MDPs & $\Ot(d\sqrt{HS^3K} + \sqrt{HSAK})$ \\ \midrule
    \multirow{2}{*}{Lower bound} & \citet{NIPS'18:Q-learning}             & Tabular MDPs        & $\Omega(H \sqrt{SAK})$             \\
                                 & \citet{ICLR'23:bandit-unknown-mixture} & Linear Mixture MDPs & $\Omega(dH\sqrt{K} + \sqrt{HSAK})$ \\ \bottomrule
  \end{tabularx}
\end{table*}

Reinforcement Learning (RL) studies the problem where a learner interacts with the environment sequentially and aims to improve the strategy over time. RL has achieved great success in the fields of games~\citep{arXiv'13:Atari}, robotic control~\citep{arXiv'17:PPO}, large language models~\citep{arXiv'23:GPT-4} and so on.

One of the most popular models to describe the RL problem is the Markov Decision Process (MDP)~\citep{Book:Puterman94}. Significant advances have emerged in learning MDPs with fixed or stochastic loss functions~\citep{JMLR'10:Jaksch-UCRL, ICML'17:Azar-minimax}, however, in many real-world applications, the losses may not be fixed or sampled from certain underlying distributions. As such, the pioneering works of~\citet{MatOR'09:online-MDP} and~\citet{MatOR'09:MDP-Yu} make the first step to formulate and study \emph{adversarial} MDPs, where the loss functions can be chosen adversarially and may change arbitrarily between each time step. Subsequently, many works explore different settings depending on the knowledge of the transition and the type of feedback received, whether it is full-information or bandit feedback~\citep{NIPS'13:O-REPS, ICML'19:unknown-transition-Rosenberg, NIPS'19:bandit-unknown-Rosenberg, ICML'20:bandit-unknown-chijin}. More detailed discussions are presented in Section~\ref{sec:related-work}.

Most existing works studying adversarial MDPs focus on the tabular setting, where the state and action space are small. Yet, in many problems, the state and action space can be large or even infinite. To overcome this challenge, a widely used approach in the literature is \emph{function approximation}, which reparameterizes the action-value function as a function over some feature mapping that maps the state and action to a low-dimensional space. In particular, linear function approximation has gained significant attention~\citep{COLT'20:Jin-linear-mdp, ICML'20:Ayoub-mixture, COLT'21:Zhou-mixture-minimax, NeurIPS'23:linearMDP}. Amongst these works, linear mixture MDPs~\citep{ICML'20:Ayoub-mixture} and linear MDPs~\citep{COLT'20:Jin-linear-mdp} are two of the most popular models. In this work, we focus on linear mixture MDPs whose transition is a linear mixture model.

The exploration of adversarial linear mixture MDPs remains an emerging area of research. In particular, \citet{ICML'20:OPPO} first study adversarial linear mixture MDPs with the unknown transition and full-information feedback. They propose a policy optimization algorithm OPPO that achieves $\Ot(dH^2 \sqrt{K})$ regret. The subsequent work by~\citet{AISTATS'22:optimal-adversarial-mixture} enhances the result to $\Ot(dH^{3/2} \sqrt{K})$ and shows it is minimax optimal. For the more challenging setting with unknown transition and bandit feedback,~\citet{ICLR'23:bandit-unknown-mixture} achieve an $\Ot(dS^2 \sqrt{K} + \sqrt{HSAK})$ regret, which exhibits a notable gap to the $\Omega(dH \sqrt{K} + \sqrt{HSAK})$ lower bound established therein.

In this work, we study adversarial linear mixture MDPs with bandit feedback and unknown transition. We strictly improve the result of~\citet{ICLR'23:bandit-unknown-mixture} and make a step towards closing the gap between the upper and lower bound. Specifically, we propose an algorithm that attains $\Ot(d\sqrt{HS^3K} + \sqrt{HSAK})$ regret, strictly improving the $\Ot(dS^2 \sqrt{K} + \sqrt{HSAK})$ regret of~\citet{ICLR'23:bandit-unknown-mixture} since $H \leq S$ by the layered MDP structure. As a byproduct, our result improves the best-known $\Ot(HS\sqrt{AK})$ regret of~\citet{ICML'20:bandit-unknown-chijin} for tabular MDPs when $d \leq \sqrt{HA/S}$. We note that though the dependence $S$ of our result is suboptimal, to the best of our knowledge, for this challenging unknown transition and bandit feedback setting, closing the gap regarding the dependence on $S$ for the tabular case is also an open problem. Table~\ref{tab:comparison} summarizes our result and previous related results.

Our algorithm is similar to that of~\citet{ICLR'23:bandit-unknown-mixture}: we first estimate the unknown transition parameter and construct corresponding confident sets. Then we apply Online Mirror Descent (OMD) over the occupancy measure space induced by the estimated transition. The most natural approach to estimate the unknown parameter is solving a linear regression problem with the visit statuses of the next states being the target. However, since the learner only visits one state in each step, the visit statuses across different states are no longer independent. This makes the key self-normalized concentration in~\citep[Theorem 1]{NIPS'11:AY-linear-bandits}, as restated in Lemma~\ref{lem:self-normalized}, not applicable. To address this issue,~\citet{ICLR'23:bandit-unknown-mixture} propose to leverage the transition information of only one state with the largest uncertainty. Though this technique effectively bypasses the issue of state correlation and serves as the first solution for this problem, unfortunately, it discards the visit information of other states, leading to a notable gap to the lower bound.

To enhance the utility of visitation data, we introduce a new least square estimator for the unknown transition parameter that leverages the visit information of \emph{all states}, as opposed to only a single state in~\citet{ICLR'23:bandit-unknown-mixture}. As stated before, the noises now are \emph{non-independent} across different states. We address this key challenge by introducing a new self-normalized concentration lemma tailored specifically to accommodate non-independent random noises. This lemma was originally proposed by~\citet{NIPS'22:Perivier-dynamic-assortment} for the \emph{dynamic assortment} problem, where a seller selects the subset of products to present to the customer who will then purchase at most one \emph{single} item. They use this lemma to manage the product correlations and we make adaptations to handle the state correlations. This enhancement empowers our algorithm to explore the orientations of every state simultaneously, distinguishing our method from the singular direction approach of~\citet{ICLR'23:bandit-unknown-mixture}, and resulting in a tighter bound. To the best of our knowledge, this is the first work that bridges the two distinct fields: dynamic assortment and RL theory. Our innovative use of techniques from dynamic assortment problems to mitigate estimation errors in RL theory is novel and may provide helpful insights for future research.

\vspace{1mm}
\textbf{Organization.}~~The rest of the paper is organized as follows. We first discuss the related work in Section~\ref{sec:related-work} and formulate the problem setup in Section~\ref{sec:setup}. We introduce the proposed algorithm in Section~\ref{sec:alg} and present the regret guarantee in Section~\ref{sec:regret}. Finally, We conclude the paper in Section~\ref{sec:conclusion}. Due to page limits, we defer all the proofs to the appendices.

\vspace{1mm}
\textbf{Notation.}~~We denote by $[n]$ the set $\{1, \ldots, n\}$ and use $\indicator\{\cdot\}$ to denote the indicator function. For a vector $x \in \R^d$ and a positive semi-definite matrix $\Sigma \in \R^{d \times d}$, let $\norm{x}_{\Sigma} = \sqrt{x^\top \Sigma x}$. Let $a \wedge b = \min\{a, b\}$ for all $a, b \in \R$. The $\Ot(\cdot)$-notation hides all logarithmic factors.

\newpage
\section{RELATED WORK}
\label{sec:related-work}

In this part, we review related works in the literature.

\paragraph{RL with adversarial losses.} Learning tabular RL with adversarial losses has been well-studied in the literature~\citep{MatOR'09:online-MDP, MatOR'09:MDP-Yu, NIPS'13:O-REPS, ICML'19:unknown-transition-Rosenberg, NIPS'19:bandit-unknown-Rosenberg, ICML'20:bandit-unknown-chijin, ICML'20:Shani-bandit-unknown, NIPS'21:Luo-adversarial-linear}. In general, these studies can be divided into two categories based on the type of the algorithm. The first category solves adversarial MDPs using policy-optimization-based methods. The pioneering works of~\citet{MatOR'09:online-MDP} and~\citet{MatOR'09:MDP-Yu} first study adversarial MDPs under the known transition and full-information setting. \citet{ICML'20:Shani-bandit-unknown} make the first step to study the more difficult unknown transition and bandit feedback setting and propose an algorithm that achieves an $\Ot(H^2 S \sqrt{A} K^{2/3})$ regret. The subsequent work by~\citet{NIPS'21:Luo-adversarial-linear} improves the result to $\Ot(H^2 S \sqrt{AK})$. The second category solves adversarial MDPs using occupancy-measure-based algorithms. For the known transition setting, \citet{NIPS'13:O-REPS} propose the O-REPS algorithm that achieves near-optimal regret for full-information and bandit feedback respectively. \citet{ICML'19:unknown-transition-Rosenberg} investigate the unknown transition but full-information setting. When the transition is unknown and only bandit feedback is available, \citet{NIPS'19:bandit-unknown-Rosenberg} propose an algorithm and prove it enjoys an $\Ot(HS \sqrt{AK} / \alpha)$ regret with an addition assumption that all states are reachable with probability $\alpha > 0$ for any policy. Without this assumption, the regret bound degenerates to $\Ot(H^{3/2}SA^{1/4}K^{3/4})$. Later,~\citet{ICML'20:bandit-unknown-chijin} achieve $\Ot(H \sqrt{SAK})$ regret without the assumption of~\citet{NIPS'19:bandit-unknown-Rosenberg}. Finally, we remark that the existing tightest lower bound of $\Omega(H \sqrt{SAK})$ is established by~\citet{NIPS'18:Q-learning} for the unknown transition and full-information feedback, which also serves as a lower bound for the bandit feedback directly. In this work, we study the most challenging setting where the transition is unknown and only bandit feedback is available. Moreover, our solution falls into the second category, i.e., the occupancy-measure-based method.

\paragraph{RL with linear function approximation.} To permit RL algorithm handling MDPs with large state and action space, a large body of literature considers solving MDPs with linear function approximation. In general, these studies can be categorized into three lines based on the specific assumption of the underlying MDP. The first line of work is according to the low Bellman-rank assumption~\citep{ICML'17:Jiang-low-rank, ICML'19:Du-low-rank}, which assumes a low-rank factorization of the Bellman error matrix. The second line of work focuses on the linear MDPs~\citep{ICML'19:Yang-linear-mdp, COLT'20:Jin-linear-mdp}, where the transition kernel and loss function are parameterized as a linear function of a feature mapping $\phi: \S \times \A \to \R^d$. The last line of work considers linear mixture/kernel MDPs~\citep{ICML'20:Ayoub-mixture, COLT'21:Zhou-mixture-minimax, COLT'23:Zhao-variance-mixture}, where the transition kernel can be parameterized as a linear function of a feature mapping $\phi: \S \times \A \times \S \to \R^d$. Note that all the above works focus on MDPs with with linear function approximation under the \emph{stochastic} loss functions. In this work, we investigate linear mixture MDPs but with the \emph{adversarial} loss functions.

\paragraph{RL with adversarial losses and linear function approximation.} Recent advances have emerged in learning adversarial RL with linear function approximation~\citep{NIPS'21:Neu-adversarial-linear, NIPS'23:Zhong-adversarial-linear, arXiv'23:Sherman-optimal-linear, NIPS'21:Luo-adversarial-linear, ICML'23:Dai-adversarial-linear, ICML'23:Sherman-improved-linear, TMLR'23:Kong-adversarial-linear, ICLR'24:liu-linear-bandit, ICML'20:OPPO, AISTATS'22:optimal-adversarial-mixture, NeurIPS'23:linearMDP,  AAAI'24:linearMixture, ICLR'23:bandit-unknown-mixture}. Generally, these studies can be divided into two lines. The first line focuses on the linear MDPs. \citet{NIPS'21:Neu-adversarial-linear} first study adversarial linear MDPs with bandit feedback but under the known transition setting.~\citet{NIPS'23:Zhong-adversarial-linear} first investigate the full-information and unknown transition setting and this setting is further studied by~\citet{arXiv'23:Sherman-optimal-linear} recently.~\citet{NIPS'21:Luo-adversarial-linear} make the first step to establish a sublinear regret for the more difficult unknown transition and bandit feedback setting. The result is further improved in~\citep{ICML'23:Dai-adversarial-linear, ICML'23:Sherman-improved-linear, TMLR'23:Kong-adversarial-linear, ICLR'24:liu-linear-bandit}. The second line of work considers the linear mixture MDPs. The seminal work of~\citet{ICML'20:OPPO} first studies adversarial linear mixture MDPs in the unknown transition and full-information feedback setting and proposes an optimistic proximal policy optimization algorithm. The subsequent work by~\citet{AISTATS'22:optimal-adversarial-mixture} improves their results to minimax optimality by using a weighted ridge regression and a Bernstein-type exploration bonus. The most recent work of~\citet{ICLR'24:Ji-horizon-free-mixture} studies the same setting and obtains a horizon-free regret which is independent of $H$ with the assumption that the losses are upper bounded by $1/H$. For the more challenging unknown transition and bandit feedback setting, the only existing work of~\citet{ICLR'23:bandit-unknown-mixture} achieves a regret of $\Ot(dS^2 \sqrt{K} + \sqrt{HSAK})$, which exhibits a gap compared to the $\Omega(dH \sqrt{K} + \sqrt{HSAK})$ lower bound established in their work. In our work, we consider the same unknown transition and bandit feedback setting as~\citet{ICLR'23:bandit-unknown-mixture} and improve the upper bound to $\Ot(d \sqrt{HS^3K} + \sqrt{HSAK})$, making a step towards closing the gap between the upper and lower bounds.

\section{PROBLEM SETUP}
\label{sec:setup}

In this section, we present the problem setup of episodic linear mixture MDPs with adversarial losses.

\paragraph{Episodic adversarial MDPs.} In this paper, we consider episodic adversarial MDP, which is denoted by a tuple $\M=(\S, \A, H, \{P_h\}_{h=1}^H, \{\ell_k\}_{k=1}^K)$. Here $\S$ is the state space with cardinality $|\S| = S$, $\A$ is the action space with cardinality $|\A| = A$, $H$ is the length of each episode, $K$ is the number of episodes, $P_h: \S \times \A \times \S \to [0, 1]$ is the transition kernel with $P_h(s' \given s, a)$ is being the probability of transiting to state $s'$ from state $s$ and taking action $a$ at stage $h$, $\ell_k: \S \times \A \to [0, 1]$ is the loss function, which may be chosen in an adversarial manner. Following previous studies~\citep{NIPS'13:O-REPS, ICML'19:unknown-transition-Rosenberg, ICML'20:bandit-unknown-chijin}, we assume the MDP has a layered structure, satisfying the conditions:
\begin{itemize}[leftmargin=*,topsep=0pt,parsep=0pt]
  \item The state space $\S$ consists of $H + 1$ disjoint layers such that $\S = \cup_{h=1}^{H+1} \S_h$ and $\S_i \cap \S_j = \emptyset$ for $i \neq j$.
  \item $\S_1 = \{s_1\}$ and $\S_{H+1} = \{s_{H+1}\}$ are singletons.
  \item Transition is possible only between adjacent layers, that is $P_h(s' \given s, a) = 0$ for all $s \in \S_h$ and $s' \notin \S_{h+1}$.
\end{itemize}

A policy $\pi = \{\pi_h\}_{h=1}^H$ is a collection of mapping $\pi_h$, where each $\pi_h: \S \to \Delta(\A)$ is a function maps a state $s$ to distributions over $\A$ at stage $h$. Define the expected loss of an policy $\pi$ at episode $k$ as
\begin{align}
  \label{eq:loss}
  L_k(\pi) =\mathbb{E}\left[\sum_{h=1}^H \ell_{k, h}\left(s_h, a_h\right) \givenn P, \pi \right],
\end{align}
where the expectation is taken over the randomness of the stochastic transition and policy.

In the \emph{online MDP} setting, the interaction protocol between the learner and the environment is given as follows. The interaction proceeds in $K$ episodes. At the beginning of episode $k$, the environment chooses a loss function $\ell_k$, which may be in an adversarial manner. Simultaneously, the learner chooses a policy $\pi_k = \{\pi_{k,h}\}_{h=1}^H$. At each stage $h \in [H]$, the learner observes the state $s_{k,h}$, chooses an action $a_{k,h}$ sampled from $\pi_{k,h}(\cdot \given s_{k,h})$, obtains reward $\ell_{k, h}(s_{k,h}, a_{k,h})$ and transits to the next state $s_{k, h+1}\sim P_h(\cdot \given s_{k,h}, a_{k,h})$. In this work, we consider the \emph{bandit feedback} setting where the learner can only observe the losses for the visited state-action pairs: $\{\ell_k(s_{k,h}, a_{k,h})\}_{h=1}^H$. The goal of the learner is to minimize regret, defined as
\begin{align}
  \label{eq:regret}
  \Reg(K) = \sum_{k=1}^K L_k(\pi_k) - \sum_{k=1}^K L_k(\pi^*),
\end{align}
where $\pi^* \in \argmin_{\pi \in \Pi}\sum_{k=1}^K L_k(\pi)$ is the optimal policy and $\Pi$ is the set of all stochastic policy.

\paragraph{Linear Mixture MDPs.} We focus on a special class of MDPs named \emph{linear mixture MDPs}~\citep{ICML'20:Ayoub-mixture, ICML'20:OPPO, COLT'21:Zhou-mixture-minimax, AISTATS'22:optimal-adversarial-mixture, NeurIPS'23:linearMDP}, where the transition kernel is linear in a known feature mapping $\phi: \S \times \A \times \S \to \R^d$ with the following definition.
\begin{myDef}[Linear Mixture MDPs]
  \label{def:mixture-MDPs}
  An MDP instance $\M=(\S, \A, H, \{P_h\}_{h=1}^H, \{\ell_k\}_{k=1}^K)$ is called an inhomogeneous, episodic $B$-bounded linear mixture MDP if there exist a \emph{known} feature mapping $\phi(s' \given s, a): \S \times \A \times \S \to \R^d$ with $\norm{\phi(s' \given s, a)}_2 \leq 1$ and \emph{unknown} vectors $\{\theta_h^*\}_{h=1}^H \in \R^d$ with $\norm{\theta_h^*}_2 \leq B$, such that for all $(s, a, s') \in \S \times \A \times \S$ and $h \in [H]$, it holds that $P_h(s' \given s, a) = \inner{\phi(s' \given s, a)}{\theta_h^*}$.
\end{myDef}

\paragraph{Occupancy measure.} Previous studies~\citep{NIPS'13:O-REPS,ICML'20:bandit-unknown-chijin} have shown the importance of the concept of \emph{occupancy measure} for solving adversarial MDPs via online learning techniques. Specifically, for some policy $\pi$ and transition kernel $P$, the occupancy measure $q^{P, \pi}$ is defined as the probability of visiting the state-action pair $(s, a)$ when executing policy $\pi$ under the transition $P$, that is
\begin{align}
  q^{P, \pi}(s, a) = \Pr[(s_h, a_h) = (s, a) \given P, \pi],
\end{align}
where $h = h(s)$ is the index of the layer that state $s$ belongs to. A valid occupancy measure $q$ satisfies the following two properties. First, according to the loop-free structure, each layer is visited once and only once, and thus for all $h \in [H]$, we have $\sum_{s \in \S_h} \sum_{a \in \A} q(s, a) = 1$. Second, the probability of entering a state when coming from a previous layer is equal to the probability of leaving the state when going to the next layer, that is for all $h = 2, \ldots, H$ and $s \in \S_h$, we have $\sum_{(s', a')\in \S_{h-1} \times \A} q(s', a')P_{h-1}(s \given s', a') = \sum_{a \in \A}q(s, a)$. Clearly, a valid occupancy measure $q$ induce a policy $\pi$ such that $\pi^q(a \given s) = q(s, a) / \sum_{a' \in \A} q(s, a')$. For a fixed transition kernel $P$, we denote by $\Delta(P)$ the set of all valid occupancy measures induced by $P$. Similarly, we denote by $\Delta(\P)$ the set of occupancy measures whose induced transition belongs to a set of transitions $\P$.

With the concept of occupancy measure, we can reduce this problem to the online linear optimization. Specifically, the expected loss of a policy $\pi$ at episode $k$ defined in~\eqref{eq:loss} can be rewritten as
\begin{align*}
  L_k(\pi) = \sum_{h=1}^H \sum_{s \in \S_h} \sum_{a \in \A} q^{P, \pi}(s, a) \ell_k(s, a) = \inner{q^{P, \pi}}{\ell_k}.
\end{align*}
Then the regret in~\eqref{eq:regret} can be rewritten as
\begin{align}
  \Reg(K) = \sum_{k=1}^K \inner{q^{P, \pi_k} - q^{P, \pi^*}}{\ell_k}.
\end{align}
We define $q^* \triangleq  q^{P, \pi^*} \in \Delta(P)$ to simplify the notation.

\section{THE PROPOSED ALGORITHM}
\label{sec:alg}

This section introduces our proposed \textsf{VLSUOB-REPS} algorithm (\underline{V}ector \underline{L}east \underline{S}quare \underline{U}pper \underline{O}ccupancy \underline{B}ound \underline{R}elative \underline{E}ntropy \underline{P}olicy \underline{S}earch) for adversarial linear mixture MDPs with unknown transition in the bandit feedback setting. \textsf{VLSUOB-REPS} consists of three key components: (\romannumeral1) estimating the unknown transition parameter and maintaining corresponding confidence set; (\romannumeral2) constructing loss estimators; and (\romannumeral3) applying online mirror descent over the occupancy measure space. We introduce the details below.

\subsection{Transition Estimator}
One of the main difficulties comes from the unknown transition kernel $P$. To address this issue, most existing works~\citep{ICML'20:Ayoub-mixture, ICML'20:OPPO, COLT'21:Zhou-mixture-minimax} use the method of \emph{value-targeted regression} (VTR) to learn the unknown parameter $\theta_h^*$  together with the corresponding confidence set. Specifically, for any function $V: \S \to \R$, define $\phi_{V}(s_{k,h}, a_{k,h}) = \sum_{s'}\phi(s' \given s_{k,h}, a_{k,h}) V(s')$. By the definition of linear mixture MDPs in Definition~\ref{def:mixture-MDPs}, we have
\begin{align*}
  P_h(\cdot \given s_{k,h}, a_{k,h})^\top V(\cdot) = \inner{\phi_{V}(s_{k,h}, a_{k,h})}{\theta_h^*}.
\end{align*}
Therefore, learning the underlying $\theta_h^*$ can be regarded as solving a ``linear bandit'' problem~\citep{book:bandit}, where the context is $\phi_{V}(s_{k,h}, a_{k,h})$, and the noise is $V(s_{k, h+1}) - P_h(\cdot \given s_{k,h}, a_{k,h})^\top V(\cdot)$. Thus, previous works~\citep{ICML'20:Ayoub-mixture,ICML'20:OPPO} set the estimator $\theta_{k,h}$ as the minimizer of the least squares linear regression objective:
\begin{align*}
  \sum_{i=1}^{k-1}\left[\phi_{V_{i, h+1}}(s_{i,h}, a_{i,h})^\top \theta -V_{i, h+1}(s_{i, h+1})\right]^2 + \lambda_k \|\theta\|_2^2,
\end{align*}
where $V_{k,h}$ is the state value function defined as $V_{k,h}(s) = \E[\sum_{h'=h}^H \ell_{k,h}(s_{k,h}, a_{k,h}) \given P, \pi_k, s_{k,h}=s]$.
A similar \emph{weighted} least squares linear regression method is used in the works~\citep{COLT'21:Zhou-mixture-minimax, AISTATS'22:optimal-adversarial-mixture}, which further utilizes the variance information of the value functions to gain a sharper confidence set.

Although value-targeted regression is the most popular method to estimate the unknown transition parameter in the literature, it is \emph{not} applicable in our setting. The reason is that this method can only guarantee $\hat{P}_h(\cdot \given s, a)^\top V_{k, h+1}(\cdot) \approx P_h(\cdot \given s, a)^\top V_{k, h+1}(\cdot)$, where $\hat{P}$ is the estimated transition kernel. This method learns the transition kernel implicitly and bypasses the need for fully estimating the transition, which can be viewed as ``model-free'' in this sense. However, it is not sufficient for our purpose since the occupancy measure also depends on the transition kernel, which requires us to learn the transition kernel explicitly and ensure the estimated transition is accurate enough, i.e., $\hat{P} \approx P$.

To this end, an alternative way to learn the unknown transition parameter is using the vanilla transition information directly. Specifically, denote $\Phi_{s, a} \in \R^{d \times S}$ with $\Phi_{s, a}(:, s') = \phi(s' \given s, a)$ and let $\delta_s \in \{0, 1\}^S$ be the one-hot vector with $\delta_s(s) = 1$. Then, we can rewrite the transition kernel as $P_h(\cdot \given s, a) = \Phi_{s, a}^\top \theta_h^*$. Thus, to learn the unknown parameter $\theta_h^*$, we consider using $\Phi_{s_{k,h}, a_{k,h}}$ as feature and $\delta_{s_{k, h+1}}$ as the regression target. Then, the estimator $\theta_{k,h}$ is defined as the solution of the following linear regression problem:
\begin{align}
  \label{eq:estimator}
  \theta_{k, h}=\argmin_{\theta \in \R^d} \sum_{i=1}^{k-1}\left\|\Phi_{s_{i, h}, a_{i, h}}^{\top} \theta-\delta_{s_{i, h+1}}\right\|_2^2+\lambda_k\|\theta\|_2^2.
\end{align}
The closed-form solution is $\theta_{k, h} = \Lambda_{k,h}^{-1} b_{k,h}$ with
\begin{align}
  \Lambda_{k,h} & = \sum_{i=1}^{k-1}\sum_{s' \in \S_{h+1}} \phi(s' | s_{i,h}, a_{i,h})\phi(s' | s_{i,h}, a_{i,h})^\top + \lambda_k I_d, \nonumber \\
  b_{k,h}       & = \sum_{i=1}^{k-1}\sum_{s' \in \S_{h+1}} \delta_{s_{i, h+1}}(s') \phi(s' \given s_{i,h}, a_{i,h}). \label{eq:estimator-b}
\end{align}

Nonetheless, a significant challenge remains to be solved. Specifically, let $\varepsilon_{i,h} = P_h(\cdot \given s_{i,h}, a_{i,h}) - \delta_{s_{i, h+1}}$ be the noise at episode $i$ at stage $h$ due to the transition. It is clear that $\varepsilon_{i,h} \in [-1, 1]^S$, $\E_{i,h}[\varepsilon_{i,h}] = \mathbf{0}$. One may consider establishing an ellipsoid confidence set for $\theta_h^*$ by applying the self-normalized concentration for vector-valued martingales~\citep[Theorem 1]{NIPS'11:AY-linear-bandits}, as restated in Lemma~\ref{lem:self-normalized} of Appendix~\ref{appendix:supporting-lemmas}. However, since the learner only transits to one state in each layer, the noises across different states are \emph{no longer independent}. Concretely, it hold that $\sum_{s \in \S} \varepsilon_{i,h}(s) = 0$. Thus the noises $\varepsilon_{i,h}(s)$ of different states are \mbox{$1$-subgaussian} but they are not independent. This fact makes the key self-normalized concentration in Lemma~\ref{lem:self-normalized} no longer applicable.

To address this challenge,~\citet{ICLR'23:bandit-unknown-mixture} propose to use the transition information of \emph{only one} certain state $s'_{i, h+1}$ in the next layer, which they call the \emph{imaginary} next state. They set the estimator $\theta_{k,h}$ as the minimizer of the following linear regression problem:
\begin{align}
  \label{eq:estimator-one}
  \sum_{i=1}^{k-1}\left[\phi(s'_{i,h+1} | s_{i,h}, a_{i,h})^\top \theta-\delta_{s_{i, h+1}}(s'_{i, h+1})\right]^2 + \lambda_k\|\theta\|_2^2.
\end{align}
Note that the imaginary next state $s'_{i, h+1}$ is not the actual next state $s_{i, h+1}$ experienced by the learner. Instead, they choose the imaginary next state $s'_{i, h+1}$ as the state with the largest uncertainty, formally,
\begin{align*}
  s'_{i, h+1} = \argmax_{s \in \S} \norm{\phi(s \given s_{i,h}, a_{i,h})}_{M_{i,h}^{-1}},
\end{align*}
where $M_{i,h}$ is the feature covariance matrix, set as
\begin{align*}
  \sum_{j=1}^{i-1} \phi(s'_{j, h+1} \given s_{j,h}, a_{j,h})\phi(s'_{j, h+1} \given s_{j,h}, a_{j,h})^\top + \lambda_i I.
\end{align*}
With this choice, they can control the uncertainty of other states by that of the imaginary next state.

Though the method of using one state at each stage in~\citet{ICLR'23:bandit-unknown-mixture} is novel and provides an initial solution for this problem, it discards the visit information of other states and leads to a notable gap to the lower bound. To fully utilize the visit information, we use the information of \emph{all} states and construct the estimator as in~\eqref{eq:estimator}, instead of the only one state in~\eqref{eq:estimator-one}. To address the non-independent noise issue, we introduce a new self-normalized concentration lemma tailored specifically for non-independent random noises. This lemma was originally proposed by~\citet{NIPS'22:Perivier-dynamic-assortment} for the \emph{dynamic assortment} problem, where a seller selects the subset of products to present to the customer who will then purchase one \emph{single} item. They also face the non-independent random noises issue as the customer will only purchase at most \emph{one} product, which is similar to our problem where the learner will only visit \emph{one} state. Thus, they use this lemma to manage the product correlations and we make adaptations to handle the state correlations. Differently,~\citet[Theorem C.6]{NIPS'22:Perivier-dynamic-assortment} establish a variance-aware concentration inequality. In our work, we adapt their inequality into a simplified variance-independent form, which is well-suited for our analytical needs.

\begin{myLemma}
  \label{lem:concentration}
  Let $\{\F_t\}_{t=0}^\infty$ be a filtration. Let $\{\delta_t\}_{t=1}^\infty$ be an $\R^N$-valued stochastic process such that $\delta_t$ is $\F_t$-measurable one-hot vector. Furthermore, assume $\E[\delta_t | \F_{t-1}] = p_t$ and define $\varepsilon_t = p_t - \delta_t$. Let $\{x_t\}_{t=1}^\infty$ be a sequence of $\R^{N \times d}$-valued stochastic process such that $x_t$ is $\F_{t-1}$-measurable and $\norm{x_{t, i}}_2 \leq 1, \forall i \in [N]$. Let $\{\lambda_t\}_{t=1}^\infty$ be a sequence of non-negative scalars. Define
  \begin{align*}
    Y_t = \sum_{i=1}^t \sum_{j=1}^N x_{i,j} x_{i,j}^\top + \lambda_t I_d, \quad S_t = \sum_{i=1}^t \sum_{j=1}^N \varepsilon_{i,j} x_{i,j}.
  \end{align*}
  Then, for any $\zeta \in (0, 1)$, with probability at least $1 - \zeta$, we have for all $t \geq 1$,
  \begin{align*}
    \norm{S_t}_{Y_t^{-1}} \leq \frac{\sqrt{\lambda_t}}{4}+\frac{4}{\sqrt{\lambda_t}} \log \left(\frac{2^d \det\left(Y_t\right)^{\frac{1}{2}} \lambda_t^{-\frac{d}{2}}}{\zeta}\right).
  \end{align*}
\end{myLemma}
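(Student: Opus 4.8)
The plan is to run the method of mixtures (pseudo-maximization) of~\citet{NIPS'11:AY-linear-bandits}, but with its sub-Gaussian per-step estimate---which fails here because the coordinates of each $\varepsilon_i$ are negatively correlated, $\sum_{j=1}^N\varepsilon_{i,j}=0$---replaced by a per-step estimate that uses only the \emph{categorical} (one-hot) structure of $\delta_i$, the device borrowed from~\citet{NIPS'22:Perivier-dynamic-assortment}. Write $X_i\in\R^{N\times d}$ for the matrix with rows $x_{i,j}^\top$, set $A_i:=\sum_{j=1}^N x_{i,j}x_{i,j}^\top=X_i^\top X_i$ and $\Sigma_t:=\sum_{i=1}^t A_i$, so that $Y_t=\Sigma_t+\lambda_t I_d$ and $S_t=\sum_{i=1}^t X_i^\top\varepsilon_i$; let $\E_{i-1}[\cdot]$ denote conditional expectation given $\F_{i-1}$. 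For a fixed $\theta\in\R^d$, writing $J_i$ for the random index with $\delta_{i,J_i}=1$, the scalar $\langle\theta,X_i^\top\varepsilon_i\rangle=\E_{i-1}[\theta^\top x_{i,J_i}]-\theta^\top x_{i,J_i}$ is a \emph{bounded} martingale difference with $\abs{\langle\theta,X_i^\top\varepsilon_i\rangle}\le 2\norm{\theta}_2$, since $\norm{x_{i,j}}_2\le1$.

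First I would establish the per-step exponential inequality. Fix $\theta$ with $\norm{\theta}_2\le\rho$ for a universal constant $\rho\le\tfrac12$, so $\abs{\langle\theta,X_i^\top\varepsilon_i\rangle}\le1$; applying $e^x\le 1+x+x^2$ (valid for $x\le1.79$) together with $\E_{i-1}[\langle\theta,X_i^\top\varepsilon_i\rangle]=0$ yields
\[
  \E_{i-1}\!\left[\exp\!\big(\langle\theta,X_i^\top\varepsilon_i\rangle\big)\right]
  \;\le\; 1+\mathrm{Var}_{i-1}\!\big(\theta^\top x_{i,J_i}\big)
  \;\le\; 1+\sum_{j=1}^N(\theta^\top x_{i,j})^2
  \;=\; 1+\theta^\top A_i\theta
  \;\le\; \exp(\theta^\top A_i\theta),
\]
the middle step simply discarding the probabilities $p_{i,j}\le1$ (the one place the variance-aware estimate of~\citet[Theorem C.6]{NIPS'22:Perivier-dynamic-assortment} is relaxed). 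Hence, for each fixed $\norm{\theta}_2\le\rho$, the process $D_t(\theta):=\exp\!\big(\langle\theta,S_t\rangle-\theta^\top\Sigma_t\theta\big)$ is a non-negative supermartingale with $\E[D_t]\le D_0=1$.

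Next I would mix over $\theta$ and invoke a maximal inequality. Let $\mu$ be the density proportional to $e^{-\lambda_t\norm{\theta}_2^2}$ restricted to the box $\{\norm{\theta}_\infty\le\rho/\sqrt d\}$, and set $\bar D_t:=\int D_t(\theta)\,\diff\mu(\theta)$, which is again a non-negative supermartingale with $\E[\bar D_t]\le1$, so Ville's inequality gives $\bar D_t<1/\zeta$ for all $t$ with probability at least $1-\zeta$. On that event, completing the square yields $\langle\theta,S_t\rangle-\theta^\top\Sigma_t\theta-\lambda_t\norm{\theta}_2^2=\tfrac14\norm{S_t}_{Y_t^{-1}}^2-(\theta-\theta_t^\star)^\top Y_t(\theta-\theta_t^\star)$ with $\theta_t^\star=\tfrac12 Y_t^{-1}S_t$, and bounding the normalizer by $(\pi/\lambda_t)^{d/2}$ and the truncated Gaussian bump from below (the bump has width of order $\lambda_t^{-1/2}$ and, in the regime where the lemma is used, leaves the box only near the orthant of $\theta_t^\star$, costing at most a factor $2^{-d}$) produces
\[
  \frac1\zeta \;>\; \bar D_t \;\ge\; 2^{-d}\,\lambda_t^{d/2}\det(Y_t)^{-1/2}\exp\!\big(\tfrac14\norm{S_t}_{Y_t^{-1}}^2\big).
\]
Rearranging gives $\norm{S_t}_{Y_t^{-1}}\le 2\sqrt{\log\!\big(2^d\det(Y_t)^{1/2}\lambda_t^{-d/2}/\zeta\big)}$, and the elementary inequality $2\sqrt{L}\le a+L/a$ with $a=\sqrt{\lambda_t}/4$ turns this into the additively separated form $\tfrac{\sqrt{\lambda_t}}{4}+\tfrac{4}{\sqrt{\lambda_t}}\log(\cdots)$ claimed in the statement.

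I expect the main obstacle to be the tension between the per-step estimate---valid only when $\norm{\theta}_2$ is below an absolute constant---and the mixture, which integrates over all directions. This is what forces the truncation of the prior, requires verifying that $\theta_t^\star=\tfrac12 Y_t^{-1}S_t$ stays inside the box in the relevant regime (using $\norm{S_t}_2\le 2t$ and $\lambda_t$ taken suitably large) and that the truncation loses at most a factor $2^d$---precisely the extra factor relative to the classical sub-Gaussian self-normalized bound of~\citet{NIPS'11:AY-linear-bandits}. A secondary technicality is that $\lambda_t$ varies with $t$, so the mixture supermartingale should be constructed at a fixed regularization level with Ville's inequality specialized through a stopping-time argument, so that the prior's scale matches the $\lambda_t$ appearing in $Y_t$.
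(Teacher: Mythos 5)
Your first half is exactly the paper's argument: the per-step inequality $e^{x}\le 1+x+x^{2}$ combined with the categorical variance bound $\mathrm{Var}_{i-1}(\theta^\top x_{i,J_i})\le\sum_{j}(\theta^\top x_{i,j})^{2}=\theta^\top A_i\theta$ (valid only for $\norm{\theta}_2\le 1/2$ so that the increment is bounded by $1$) is precisely how the paper proves that $\exp(\xi^\top S_t-\norm{\xi}_{Y_t}^2)$ is a supermartingale, and the mixture-plus-Ville step is the same. The gap is in your final lower bound on the mixture. After completing the square you claim the truncated prior retains at least a $2^{-d}$ fraction of the \emph{untruncated} Gaussian integral centered at $\theta_t^\star=\tfrac12 Y_t^{-1}S_t$, which requires $\theta_t^\star$ to lie well inside the truncation region $\{\norm{\theta}_2\le\rho\le 1/2\}$. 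Nothing in the hypotheses guarantees this: a priori $\norm{\theta_t^\star}_2\le\tfrac12\norm{S_t}_2/\lambda_t$ can be of order $t/\lambda_t$, and in the regime the lemma is actually invoked ($\lambda_k=d\log(kS)$, not $\lambda_k\gtrsim k$) this is unbounded; controlling it via the high-probability event you are in the middle of proving would be circular. The tell-tale symptom is that your intermediate conclusion $\norm{S_t}_{Y_t^{-1}}\le 2\sqrt{\log(2^{d}\det(Y_t)^{1/2}\lambda_t^{-d/2}/\zeta)}$ is \emph{strictly stronger} than the lemma (by AM--GM it implies the additive form): the additive form $\tfrac{\sqrt{\lambda_t}}{4}+\tfrac{4}{\sqrt{\lambda_t}}\log(\cdot)$ is exactly the price paid for the constraint $\norm{\theta}_2\le 1/2$, and a proof that delivers the sub-Gaussian-type square-root bound from a supermartingale valid only on a bounded set cannot be correct as stated.

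The paper (following \citet{ICML'20:Faury-improved-logistic}) avoids this by never touching the unconstrained maximizer. It lower-bounds the mixture by $e^{f(\xi_*)}\,N(g)/N(h)$, where $f(\xi)=\xi^\top S_t-\norm{\xi}_{Y_t}^2$, $\xi_*$ is the maximizer over the ball $\norm{\xi}_2\le 1/4$, and $N(h),N(g)$ are the normalizers of two truncated Gaussians; it then evaluates $f$ at the \emph{always-feasible} point $\xi_0=\frac{\beta_t}{4\sqrt2}\,\frac{Y_t^{-1}S_t}{\norm{S_t}_{Y_t^{-1}}}$ with $\beta_t=\sqrt{2\lambda_t}$, which satisfies $\norm{\xi_0}_2\le\frac{\beta_t}{4\sqrt2\sqrt{\lambda_t}}=\frac14$ unconditionally because $\norm{Y_t^{-1}S_t}_2\le\norm{S_t}_{Y_t^{-1}}/\sqrt{\lambda_t}$. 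Plugging in $\xi_0$ produces the additive bound directly, and the $2^{d}$ factor arises from the bound on $\log(N(h)/N(g))$ (Lemma 6 of \citet{ICML'20:Faury-improved-logistic}), not from a geometric orthant argument. Replacing your Laplace step with this evaluation at $\xi_0$ closes the gap; the rest of your outline stands.
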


With the above lemma, we can build the confidence set for the unknown parameter $\theta_h^*$ as follows.
\begin{myLemma}
  \label{lem:confidence-set}
  Let $\zeta \in (0, 1)$, then for any $k \in [K]$ and simultaneously for all $h \in [H]$, with probability at least $1 - \zeta$, it holds that
  $$\theta_h^* \in \C_{k,h} \mbox{ where } \C_{k,h} = \{\theta \in \R^d ~|~ \norm{\theta - \theta_{k,h}}_{\Lambda_{k,h}} \leq \beta_{k}\}$$
  with $\beta_{k} = (B+\frac{1}{4}) \sqrt{\lambda_k} + \frac{2}{\sqrt{\lambda_k}}(2\log(\frac{H}{\zeta}) + d \log(4 + \frac{4Sk}{\lambda_k d}))$.
\end{myLemma}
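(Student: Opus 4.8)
The plan is to reduce the construction of the confidence set to an application of the new self-normalized concentration (Lemma~\ref{lem:concentration}). First, I would fix the stage $h\in[H]$ and identify the abstract quantities in Lemma~\ref{lem:concentration} with the concrete ones in our setting: take the filtration $\{\F_t\}$ to be the one generated by the interaction history up to and including the choice of $(s_{t,h},a_{t,h})$ (so that the next-state indicator $\delta_{s_{t,h+1}}$ is $\F_{t+1}$-measurable), set $N=S$ (or, more economically, $N=|\S_{h+1}|$, exploiting the layered structure), let $x_{t,j}=\phi(s^{(j)}\mid s_{t,h},a_{t,h})$ range over $s^{(j)}\in\S_{h+1}$, and let $\delta_t=\delta_{s_{t,h+1}}$ with $p_t=P_h(\cdot\mid s_{t,h},a_{t,h})$. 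With this dictionary, $Y_t=\Lambda_{t+1,h}$ and $S_t=\sum_{i\le t}\sum_{s'\in\S_{h+1}}\varepsilon_{i,h}(s')\,\phi(s'\mid s_{i,h},a_{i,h})$, where $\varepsilon_{i,h}=p_i-\delta_i$ is exactly the transition noise; the hypotheses $\norm{\phi(\cdot)}_2\le 1$ and "$\delta_t$ is a one-hot vector with conditional mean $p_t$" hold by Definition~\ref{def:mixture-MDPs} and the layered transition structure. The only subtlety here is a bookkeeping shift between the index $k$ used in $\Lambda_{k,h},b_{k,h}$ (which sum over $i=1,\dots,k-1$) and the index $t$ in Lemma~\ref{lem:concentration} (which sums over $i=1,\dots,t$); I would simply invoke the lemma with $t=k-1$.

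Next I would relate the regression error $\theta_{k,h}-\theta_h^*$ to $S_{k-1}$. From the closed-form solution $\theta_{k,h}=\Lambda_{k,h}^{-1}b_{k,h}$ and the identity $\delta_{s_{i,h+1}}(s')=P_h(s'\mid s_{i,h},a_{i,h})-\varepsilon_{i,h}(s')=\inner{\phi(s'\mid s_{i,h},a_{i,h})}{\theta_h^*}-\varepsilon_{i,h}(s')$, a standard ridge-regression manipulation gives
\begin{align*}
  \theta_{k,h}-\theta_h^* = -\lambda_k\Lambda_{k,h}^{-1}\theta_h^* - \Lambda_{k,h}^{-1}S_{k-1},
\end{align*}
so that by the triangle inequality in the $\norm{\cdot}_{\Lambda_{k,h}}$ norm,
\begin{align*}
  \norm{\theta_{k,h}-\theta_h^*}_{\Lambda_{k,h}} \le \lambda_k\norm{\theta_h^*}_{\Lambda_{k,h}^{-1}} + \norm{S_{k-1}}_{\Lambda_{k,h}^{-1}} \le \sqrt{\lambda_k}\,B + \norm{S_{k-1}}_{\Lambda_{k,h}^{-1}},
\end{align*}
using $\lambda_k\norm{\theta_h^*}_{\Lambda_{k,h}^{-1}}^2\le\norm{\theta_h^*}_2^2\le B^2$ since $\Lambda_{k,h}\succeq\lambda_k I_d$. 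Then I would plug in the bound on $\norm{S_{k-1}}_{\Lambda_{k,h}^{-1}}$ from Lemma~\ref{lem:concentration} (applied with confidence parameter $\zeta/H$ so that a union bound over $h\in[H]$ yields the "simultaneously for all $h$" claim), and finally control the determinant term: by $\norm{\phi}_2\le 1$ we have $\Lambda_{k,h}\preceq(\lambda_k+Sk)I_d$ (each of the $k-1$ rounds contributes at most $S$ rank-one terms of norm $\le 1$), hence $\det(\Lambda_{k,h})\le(\lambda_k+Sk)^d$ and
\begin{align*}
  2^d\det(\Lambda_{k,h})^{1/2}\lambda_k^{-d/2} \le \Bigl(4 + \tfrac{4Sk}{\lambda_k}\Bigr)^{d/2} = \exp\Bigl(\tfrac{d}{2}\log\bigl(4+\tfrac{4Sk}{\lambda_k}\bigr)\Bigr).
\end{align*}
Collecting the three contributions $\sqrt{\lambda_k}B$, $\tfrac14\sqrt{\lambda_k}$, and $\tfrac{2}{\sqrt{\lambda_k}}\log(\cdots)$ reproduces the stated $\beta_k$ (the factor $d\log(4+4Sk/(\lambda_k d))$ in the paper's statement versus $d\log(4+4Sk/\lambda_k)$ above differs only in whether one normalizes the determinant bound by $d$ inside the log — I would follow whichever determinant estimate is cleanest and absorb the difference, as both are of the same order).

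The main obstacle is verifying the measurability and one-hot hypotheses of Lemma~\ref{lem:concentration} with the correct filtration: one must be careful that $x_{t,j}=\phi(s^{(j)}\mid s_{t,h},a_{t,h})$ is $\F_{t-1}$-measurable (equivalently, predictable) while $\delta_t$ is $\F_t$-measurable with the prescribed conditional law, which forces the filtration index to track "just before the $h$-th transition of episode $t$" rather than "episode $t$" as a whole. Once the interface is set up correctly, everything else is routine ridge-regression algebra and a union bound over $H$ stages; no genuinely new estimate is needed beyond Lemma~\ref{lem:concentration} itself.
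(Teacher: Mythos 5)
Your proposal is correct and follows essentially the same route as the paper: the same ridge-regression decomposition $\theta_{k,h}-\theta_h^* = -\lambda_k\Lambda_{k,h}^{-1}\theta_h^*-\Lambda_{k,h}^{-1}S_{k-1}$, the bound $\lambda_k\norm{\theta_h^*}_{\Lambda_{k,h}^{-1}}\le\sqrt{\lambda_k}\,B$, an application of Lemma~\ref{lem:concentration} at confidence level $\zeta/H$, and a union bound over $h\in[H]$. The only cosmetic difference is the determinant estimate: to recover the exact $d\log(4+4Sk/(\lambda_k d))$ in the stated $\beta_k$ you should use the determinant--trace inequality (Lemma~\ref{lem:det-trace-inequality}), which gives $\det(\Lambda_{k,h})\le(\lambda_k+Sk/d)^d$, rather than the cruder $\Lambda_{k,h}\preceq(\lambda_k+Sk)I_d$; as you note, your version loses only a factor of $d$ inside the logarithm and does not affect the order of the bound.
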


\begin{myRemark}
  Compared with the confidence set of $\norm{\theta - \theta_{k,h}}_{M_{k,h}} \leq \beta_{k} $ of~\citet{ICLR'23:bandit-unknown-mixture}, our confidence set in Lemma~\ref{lem:confidence-set} is tighter since $M_{k,h} \preceq \Lambda_{k,h}$. A primary challenge in constructing such a confidence set is bounding the self-normalized concentration term $\norm{\sum_{i=1}^k\sum_{s' \in \S_{h+1}}\varepsilon_{i,h+1}(s') \phi(s' \given s_{i,h}, a_{i,h})}_{\Lambda_{k,h}^{-1}}$. Due to the non-independent noises, we can not apply the self-normalized concentration in Lemma~\ref{lem:self-normalized} directly. As a solution,~\citet{ICLR'23:bandit-unknown-mixture} propose to concentrate on a singular state per layer, which only need to bound the term $\norm{\sum_{i=1}^k \varepsilon_{i,h+1}(s'_{i, h+1}) \phi(s'_{i, h+1} \given s_{i,h}, a_{i,h})}_{M_{k,h}^{-1}}$. While this approach bypasses the complications introduced by non-independent noises, it discards the visit information of other states. In contrast, we bound this challenging term by Lemma~\ref{lem:concentration}. This allows us to utilize the information of all states, leading to an improved bound. Intuitively, this new concentration lemma empowers our algorithm to explore the orientations of every state simultaneously, as opposed to the singular direction approach in~\citet{ICLR'23:bandit-unknown-mixture}. \endenv
\end{myRemark}

Based on the above lemma, we can construct the confidence set $\P_k = \{\P_{k,h}\}_{h=1}^H$ for the transition $P$ as
\begin{align}
  \label{eq:confidence-set}
  \P_{k,h} = \{\Ph_h \thinspace | \thinspace \exists \theta \in \C_{k,h}, \Ph_h(s' | s, a) = \phi(s' | s, a)^\top \theta\}
\end{align}
for all $(s, a, s') \in \S \times \A \times \S$. According to Lemma~\ref{lem:confidence-set}, we have $P \in \P_k$ with probability at least $1 - \zeta$.

\subsection{Loss Estimator}

A common technique to deal with the bandit-feedback setting is to construct a loss estimator $\ellh_{k,h}$ for the true loss function $\ell_{k,h}$ based on historical observations. When the transition is known, existing works~\citep{NIPS'13:O-REPS,NIPS'19:bandit-unknown-Rosenberg} construct the unbiased estimator as
\begin{align}
  \label{eq:loss-estimator-unbiased}
  \ellh_{k}(s, a) = \frac{\ell_{k}(s,a)}{q^{P, \pi_k}(s, a)} \indicator_k(s, a),
\end{align}
where $\indicator_k(s, a) = 1$ if $(s, a)$ is visited at episode $k$ and $\indicator_k(s, a) = 0$ otherwise. However, this method can not be directly applied to the unknown transition setting since the occupancy measure $q^{P, \pi_k}$ is unknown.~\citet{NIPS'19:bandit-unknown-Rosenberg} directly use the empirical occupancy measure $\hat{q}^{P, \pi_k}$ in place of $q^{P, \pi_k}$ to construct the estimator that could be either an overestimate or an underestimate, leading a loose regret bound.

To address this issue,~\citet{ICML'20:bandit-unknown-chijin} follow the principle of ``optimistic in the face of uncertainty'' and builds an underestimate for the loss function $\ell_{k,h}$ to encourage exploration. Since the true transition $P$ belongs to the confidence set $\P_k$ with high probability. To build an underestimate for $\ell_{k,h}$, they propose to replace $q^{P, \pi_k}(s, a)$ in~\eqref{eq:loss-estimator-unbiased} with its \emph{upper occupancy bound}, defined as the largest possible probability of visiting $(s, a)$ under the confidence set $\P_k$. Formally,
\begin{align}
  \label{eq:upper-confidence-bound}
  u_k(s, a) = \max_{\Ph \in \P_k} q^{\Ph, \pi_k}(s, a).
\end{align}
The above step can be computed efficiently by the \mbox{\textsc{Comp-UOB}} procedure of~\citet{ICML'20:bandit-unknown-chijin}. Additionally, they adopt the idea of \emph{implicit exploration} of~\citet{NIPS'15:Neu-implicit} to further increase the denominator by some fixed amount $\gamma > 0$, which is for several technical reasons such as obtaining a high probability bound. Finally, the estimator is built as
\begin{align}
  \label{eq:loss-estimator}
  \ellh_{k}(s, a) = \frac{\ell_{k}(s,a)}{u_k(s, a) + \gamma} \indicator_k(s, a).
\end{align}
Clearly, $\ellh_{k}(s, a)$ is an underestimate of $\ell_{k}(s,a)$ since $u_k(s, a) \geq q^{P, \pi_k}(s, a)$ with high probability.

In our algorithm, we follow the work of~\citet{ICML'20:bandit-unknown-chijin} and employ the loss estimator defined in~\eqref{eq:loss-estimator}.

\subsection{Online Mirror Descent}

Online Mirror Descent (OMD) is a powerful framework for solving online convex optimization problems~\citep{book'19:Orabona-OL, COLT'18:Wei-adaptive-bandit, arXiv'21:Sword++}. As discussed in Section~\ref{sec:setup}, our problem is closely related to the online linear optimization problem over the occupancy measure space. Thus, we utilize OMD as a key component of our algorithm. We apply OMD over the occupancy measure space $\Delta(\P_k)$ induced by the confidence set $\P_k$. Specifically, we update the occupancy measure as follows:
\begin{align}
  \label{eq:omd}
  \qh_{k+1} = \argmin_{q \in \Delta(\P_{k})} \eta \inner{q}{\ellh_{k}} + \Dp(q \| \qh_k),
\end{align}
where $\ellh_k$ is the loss estimator defined in~\eqref{eq:loss-estimator}, $\eta > 0$ is step size, $\psi(q) = \sum_{s,a} q(s, a) \log q(s, a) - \sum_{s, a}q(s, a)$ is the unnormalized negative entropy, and $\Dp(q \| q') = \sum_{s, a}q(s, a) \log \frac{q(s, a)}{q'(s, a)} - \sum_{s, a}(q(s, a) - q'(s, a))$ is the unnormalized KL-divergence. The update procedure in~\eqref{eq:omd} can also be implemented efficiently, as discussed in Appendix E of~\citet{ICLR'23:bandit-unknown-mixture}.

The detailed algorithm is presented in Algorithm~\ref{alg}. Line~\ref{line:estimator-start} - \ref{line:estimator-end} estimate the unknown transition, Line~\ref{line:u_k} compute the upper occupancy bound $u_k$, Line~\ref{line:loss-estimator} constructs the loss estimator $\ellh_k$, and Line~\ref{line:omd} runs OMD to update the occupancy measure $\qh_{k+1}$. The learner execute the policy $\pi_{k+1}$ induced by $\qh_{k+1}$ in Line~\ref{line:policy}.

\begin{algorithm}[t]
  \caption{\textsf{VLSUOB-REPS}}
  \label{alg}
  \begin{algorithmic}[1]
    \REQUIRE Confidence parameter $\zeta$, step size $\eta$, regularization parameter $\lambda_k$, exploration parameter $\gamma$.
    \STATE \textbf{Initialization: } Set confidence set $\P_1$ as all transition kernels. For all $h \in [H]$ and all $s \in \S_h$, set $\qh_1(s, a) = \frac{1}{S_h \times A}$. Let $\pi_1 = \pi^{\qh_1}$, $\Lambda_{1, h} = \lambda_1 I_d, \forall h$.
    \FOR{$k = 1, \ldots, K$}
    \FOR{$h = 1, \ldots, H$} \label{line:estimator-start}
    \STATE Take action $a_{k,h} \sim \pi_{k,h}(\cdot \given s_{k,h})$.
    \STATE Suffer and observe loss $\ell_{k,h}(s_{k,h}, a_{k,h})$.
    \STATE Transit to $s_{k, h+1} \sim P_h(\cdot \given s_{k,h}, a_{k,h})$.
    \STATE $\theta_{k, h} = \Lambda_{k,h}^{-1} b_{k,h}$ with $\Lambda_{k,h}$ and $b_{k,h}$ as in~\eqref{eq:estimator-b}.
    \STATE Construct the confidence set as in~\eqref{eq:confidence-set}.
    \ENDFOR \label{line:estimator-end}
    \STATE Compute upper bound $u_k(s, a)$ as in~\eqref{eq:upper-confidence-bound}. \label{line:u_k}
    \STATE Construct loss estimator $\ellh_{k,h}$ as in~\eqref{eq:loss-estimator}. \label{line:loss-estimator}
    \STATE Compute occupancy measure $\qh_{k+1}$ as in~\eqref{eq:omd}. \label{line:omd}
    \STATE Update policy $\pi_{k+1} = \pi^{\qh_{k+1}}$. \label{line:policy}
    \ENDFOR
  \end{algorithmic}
\end{algorithm}

\section{REGRET GUARANTEE}
\label{sec:regret}

In this section, we present the regret upper bound of our algorithm and the proof sketch.

\subsection{Regret Upper Bound}
The regret bound of our algorithm \textsf{VLSUOB-REPS} is guaranteed by the following theorem.
\begin{myThm}
  \label{thm:regret-bound}
  Set the step size $\eta$ and exploration parameter $\gamma$ as $\eta = \gamma = \sqrt{\frac{H \log(HSA/\zeta)}{KSA}}$, the regularization parameter $\lambda_k$ as $\lambda_1 = 1, \lambda_k = d\log(kS)$, $\forall k > 1$. With probability at least $1 - 5\zeta$, \textsf{VLSUOB-REPS} algorithm ensures the regret $\Reg(K)$ is upper bounded by
  \begin{align*}
    \O\Bigg(d  \sqrt{HS^3K} \log^2 \Big(\frac{dSK}{\zeta}\Big) +\sqrt{H S A K \log \Big(\frac{H S A}{\zeta}\Big)}\Bigg).
  \end{align*}
\end{myThm}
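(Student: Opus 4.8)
The plan is to run the occupancy‑measure (UOB‑REPS) analysis of~\citet{ICML'20:bandit-unknown-chijin}, replacing the tabular transition confidence set by the linear‑mixture set $\P_k$ of~\eqref{eq:confidence-set} built from Lemma~\ref{lem:confidence-set}. I condition throughout on the intersection $\event$ of a constant number of high‑probability events: the event of Lemma~\ref{lem:confidence-set} (so that $\theta_h^* \in \C_{k,h}$ for all $k,h$, hence $P \in \P_k$, $q^* \in \Delta(\P_k)$, and the upper occupancy bound satisfies $u_k(s,a) \ge q^{P,\pi_k}(s,a)$); the implicit‑exploration event of~\citet{NIPS'15:Neu-implicit}; two Azuma--Hoeffding events for the martingale parts of the loss‑estimator bias; and one concentration event used below to control a potential‑type sum. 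A union bound over these five events accounts for the $5\zeta$ failure probability. Writing $q_k = q^{P,\pi_k}$ and letting $\qh_k \in \Delta(\P_{k-1})$ be the OMD iterate of~\eqref{eq:omd} (induced by some $\Ph_k \in \P_{k-1}$), decompose
\begin{align*}
  \Reg(K) &= \sum_{k}\inner{q_k - q^*}{\ell_k} \\
  &= \underbrace{\sum_{k}\inner{q_k - \qh_k}{\ell_k}}_{\error} + \underbrace{\sum_{k}\inner{\qh_k}{\ell_k - \ellh_k}}_{\biasone} \\
  &\quad + \underbrace{\sum_{k}\inner{\qh_k - q^*}{\ellh_k}}_{\textsc{Reg}} + \underbrace{\sum_{k}\inner{q^*}{\ellh_k - \ell_k}}_{\biastwo}.
\end{align*}

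The last three terms are controlled exactly as in~\citet{ICML'20:bandit-unknown-chijin}. For $\textsc{Reg}$, the standard online‑mirror‑descent bound with the unnormalized negative entropy over the (nested, on $\event$) domains $\Delta(\P_k)$ — using $q^* \in \Delta(\P_k)$ and $\eta \ellh_k \le 1$ — gives $\textsc{Reg} \le \eta^{-1}\Dp(q^* \| \qh_1) + \eta \sum_{k}\sum_{s,a} \qh_k(s,a)\ellh_k(s,a)^2 \le \eta^{-1}\cdot O(H\log(SA)) + \eta\sum_{k}\sum_{s,a}\frac{\indicator_k(s,a)}{u_k(s,a)+\gamma}$, and the last sum has conditional expectation $\le SA$ (since $\qh_k \le u_k$). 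For $\biastwo$, the conditional expectation of $\ellh_k$ underestimates $\ell_k$ on $\event$, so only a martingale remains and Neu's implicit‑exploration lemma yields $\biastwo = O(H\gamma^{-1}\log(H/\zeta))$. For $\biasone$, split $\ell_k - \ellh_k$ into $\ell_k - \E_k[\ellh_k]$ — bounded by $\sum_{s,a}(u_k(s,a)-q_k(s,a)) + \gamma SA$, where $\sum_{s,a}\abs{u_k(s,a)-q_k(s,a)}$ is controlled by the very same machinery as $\error$ below, because $u_k$ and $q_k$ arise from transitions in $\P_k$ — plus the bounded martingale $\E_k[\ellh_k] - \ellh_k$. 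With $\eta = \gamma = \sqrt{H\log(HSA/\zeta)/(KSA)}$ all these pieces sum to $\Ot(\sqrt{HSAK})$, the second term of the claimed bound.

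It remains to bound $\error \le \sum_{k}\norm{q_k - \qh_k}_1$. By the occupancy‑measure difference lemma for loop‑free MDPs (telescoping layer by layer, each layer's transition error propagating to all later layers), $\norm{q_k - \qh_k}_1 \le H\sum_{h=1}^{H}\sum_{(s,a)\in\S_h\times\A} q_k(s,a)\,\norm{P_h(\cdot\given s,a) - \Ph_{k,h}(\cdot\given s,a)}_1$. On $\event$ both $\theta_h^*$ and the parameter inducing $\Ph_{k,h}$ lie in $\C_{k,h}$, so $\norm{P_h(\cdot\given s,a) - \Ph_{k,h}(\cdot\given s,a)}_1 = \sum_{s'\in\S_{h+1}}\abs{\phi(s'\given s,a)^\top(\theta_h^* - \theta_{k,h})} \le 2\beta_k\sum_{s'\in\S_{h+1}}\norm{\phi(s'\given s,a)}_{\Lambda_{k,h}^{-1}} \le 2\beta_k\sqrt{S_{h+1}}\big(\sum_{s'\in\S_{h+1}}\norm{\phi(s'\given s,a)}_{\Lambda_{k,h}^{-1}}^2\big)^{1/2}$, the last step being Cauchy--Schwarz over the at most $S$ next states. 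Applying Cauchy--Schwarz once more over $(s,a)$ (using $\sum_{(s,a)\in\S_h\times\A}q_k(s,a) = 1$) and over episodes reduces $\error$ to $\Ot\big(\beta_K H\sqrt{K}\sum_{h}\sqrt{S_{h+1}}\,\sqrt{\sum_{k}W_{k,h}}\big)$, where $W_{k,h} = \sum_{(s,a)\in\S_h\times\A}q_k(s,a)\sum_{s'\in\S_{h+1}}\norm{\phi(s'\given s,a)}_{\Lambda_{k,h}^{-1}}^2 = \operatorname{tr}\!\big(\Lambda_{k,h}^{-1}\bar\Sigma_{k,h}\big)$ with $\bar\Sigma_{k,h} = \sum_{(s,a)\in\S_h\times\A}q_k(s,a)\sum_{s'\in\S_{h+1}}\phi(s'\given s,a)\phi(s'\given s,a)^\top$. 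I bound $\sum_{k}W_{k,h}$ by (i) a martingale step passing from the $q_k$‑averaged matrix $\bar\Sigma_{k,h}$ to the realized rank‑one increments that actually build $\Lambda_{k,h}$, and (ii) a batched elliptical‑potential (log‑determinant) argument, with the choice $\lambda_k = d\log(kS)$ both dominating the concentration slack and keeping $\log\det\Lambda_{K,h} = \Ot(d)$ even though each episode appends up to $S_{h+1}$ rank‑one terms per layer; this gives $\sum_{k}W_{k,h} = \Ot(d\,S_{h+1})$. Combining with $\beta_K = \Ot(\sqrt d)$ (from Lemma~\ref{lem:confidence-set} with $\lambda_k = d\log(kS)$) and $\sum_{h}S_{h+1} \le S$ yields $\error = \Ot(d\sqrt{HS^3K})$, and adding the $\Ot(\sqrt{HSAK})$ above proves the theorem.

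\textbf{Main obstacle.} Everything outside $\error$ is essentially the known UOB‑REPS argument; the crux is $\error$, and inside it the control of $\sum_{k}W_{k,h}$. Because the learner transits to a \emph{single} next state per layer, the noises $\varepsilon_{i,h}$ are not independent across states and the usual self‑normalized concentration (Lemma~\ref{lem:self-normalized}) does not apply — already at the level of the confidence radius, obtaining $\beta_k = \Ot(\sqrt d)$ is exactly what Lemma~\ref{lem:concentration} makes possible. Beyond that, the delicate point is showing that regressing against \emph{all} $S$ next‑state features (rather than the single imaginary state of~\citet{ICLR'23:bandit-unknown-mixture}) inflates the elliptical‑potential sum by no more than $\Ot(d\,S_{h+1})$ per layer — this is where the batched‑potential estimate and the scaling $\lambda_k = d\log(kS)$ are essential, and it is precisely the step that turns the information discarded in prior work into the improved $S$‑dependence.
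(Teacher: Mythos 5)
Your four‑term decomposition and your treatment of $\regret$ and $\biastwo$ coincide with the paper's (Lemmas~\ref{lem:regret} and~\ref{lem:bias2}). For $\error$ you take a genuinely different route: the paper reduces $\sum_k\norm{q^{P_k^s,\pi_k}-q_k}_1$ to a sum over \emph{realized} trajectories, $2S\sum_{k,h,s'}\epsilon_{k,h}(s'\given s_{k,h},a_{k,h})$, via a Bernstein martingale step (Lemma~\ref{lem:occupancy-measure-difference-intermediate}), and then applies the \emph{truncated} generalized elliptical potential lemma (Lemma~\ref{lem:generalized-elliptical-potential}), keeping the cap $\norm{\Ph_{k,h}(\cdot\given s,a)-P_h(\cdot\given s,a)}_1\le 2$ inside the Cauchy--Schwarz step so that each layer contributes $\sqrt{dS_{m+1}}$ rather than $S_{m+1}\sqrt d$. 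Your occupancy‑weighted version, with the single‑comparison‑transition factor $H$ and the untruncated estimate $\sum_kW_{k,h}=\Ot(dS_{h+1})$, gives $\error=\Ot(\beta_K H\sqrt{dK}\sum_hS_{h+1})=\Ot(dHS\sqrt K)\le\Ot(d\sqrt{HS^3K})$, so that term goes through (and is in fact slightly sharper), modulo spelling out the martingale step from $\bar\Sigma_{k,h}$ to the realized increments.

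The genuine gap is in $\biasone$. There you must control $\sum_{k,s,a}\abs{u_k(s,a)-q_k(s,a)}$ with $u_k(s,a)=\max_{\Ph\in\P_k}q^{\Ph,\pi_k}(s,a)$: the maximizer is a \emph{different} transition for each $(s,a)$, so $u_k$ is not the occupancy measure of any single $\Ph_k\in\P_k$ and your factor‑$H$ simulation lemma does not apply to it. This is precisely why the paper states Lemma~\ref{lem:occupancy-measure-difference} for an arbitrary collection $\{P_k^s\}_{s\in\S}$ and, in the proof of Lemma~\ref{lem:occupancy-measure-difference-intermediate}, pays $\sum_{h>m}\abs{\S_h}\le S$ in place of your $H$. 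If you substitute that per‑state version into your chain while keeping your untruncated potential bound, you get $\Ot\bigl(\beta_KS\sqrt K\sum_h\sqrt{S_{h+1}}\sqrt{\sum_kW_{k,h}}\bigr)=\Ot(dS^2\sqrt K)$, which is the old bound of \citet{ICLR'23:bandit-unknown-mixture} and exceeds the claimed $\Ot(d\sqrt{HS^3K})$ since $H\le S$. The repair is to retain the truncation: bound each layer's error by $2\wedge 2\beta_k\sum_{s'}\norm{\phi(s'\given s,a)}_{\Lambda_{k,h}^{-1}}$ and push the $1\wedge$ through Cauchy--Schwarz so that the potential sum being controlled is $\sum_k\bigl(1\wedge\sum_{s'}\norm{\phi}_{\Lambda_{k,h}^{-1}}^2\bigr)=\Ot(d)$ rather than $\Ot(dS_{h+1})$; this is exactly what Lemma~\ref{lem:generalized-elliptical-potential} is engineered for and it recovers $\Ot(\beta_KS\sqrt{dHSK})=\Ot(d\sqrt{HS^3K})$. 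As written, ``the very same machinery as $\error$'' does not cover $\biasone$.
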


\begin{myRemark}
  Compared with the regret bound of $\Ot(d S^2 \sqrt{K}+\sqrt{H S A K})$ in~\citet[Theorem 1]{ICLR'23:bandit-unknown-mixture}, our bound is better since $H \leq S$ by the layered structure of MDPs. As a byproduct, our result improves the best-known $\Ot(HS\sqrt{AK})$ regret for tabular MDPs~\citep{ICML'20:bandit-unknown-chijin} when $d \leq \sqrt{HA/S}$. \endenv
\end{myRemark}

\begin{myRemark}
  Compared with the lower bound of $\Omega(d H \sqrt{K}+\sqrt{H S A K})$ established in~\citet[Theorem 2]{ICLR'23:bandit-unknown-mixture}, our regret is suboptimal in the dependence on $S$. However, note that the dependence on $S$ remains suboptimal even for tabular MDPs~\citep{ICML'20:bandit-unknown-chijin}. How to close this gap is an important open question and we leave it as future work. \endenv
\end{myRemark}

\subsection{Occupancy Measure Difference}

In this part, we introduce a key technical lemma that bounds the occupancy measure difference induced by the different transitions in the confidence set and is critical in our analysis.

\begin{myLemma}[Occupancy measure difference for linear mixture MDPs]
  \label{lem:occupancy-measure-difference}
  For any collection of transition kernels $\{P_k^s\}_{s \in \S}$ such that $P_k^s \in \P_k$ for all $s \in \S$, if $\lambda \geq \delta$, with probability at least $1 - 2 \zeta$, it holds that
  \begin{align*}
    \sum_{k=1}^K \Big\|q^{P_k^s, \pi_k}-q_k\Big\|_1 \leq \O\left(d  \sqrt{H S^3 K} \log^2 ({d SK}/{\zeta})\right).
  \end{align*}
\end{myLemma}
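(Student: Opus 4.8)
The plan is to bound $\sum_{k=1}^K \norm{q^{P_k^s,\pi_k}-q_k}_1$ by first relating each term to a sum of per-step, per-layer "confidence widths" of the estimated transition, and then summing those widths over episodes via an elliptical-potential argument. Concretely, I would first establish a difference lemma of the standard form
\begin{align*}
  \norm{q^{P_k^s,\pi_k}-q_k}_1 \le \sum_{h=1}^H \sum_{(s',a')\in\S_h\times\A} q_k(s',a') \norm{(\Ph_{k,h}-\hat P_{k,h})(\cdot\given s',a')}_1,
\end{align*}
or rather, since both transitions involved live in the confidence set $\P_{k,h}$ and $\hat P_{k,h}$ is the center, the bound becomes $2\sum_{h,(s',a')} q_k(s',a') \sup_{\theta\in\C_{k,h}} \norm{\Phi_{s',a'}^\top(\theta-\theta_{k,h})}_1$. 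By Cauchy--Schwarz on each coordinate, $\norm{\Phi_{s',a'}^\top(\theta-\theta_{k,h})}_1 \le \sqrt{S}\norm{\Phi_{s',a'}^\top(\theta-\theta_{k,h})}_2$ and then bounding the $\ell_2$ norm by $\sum_{s''\in\S_{h+1}}\abs{\phi(s''\given s',a')^\top(\theta-\theta_{k,h})} \le \beta_k \sum_{s''}\norm{\phi(s''\given s',a')}_{\Lambda_{k,h}^{-1}}$ using Lemma~\ref{lem:confidence-set}. This yields a bound of roughly $\beta_k\sqrt{S}\sum_h \sum_{(s',a')} q_k(s',a') \sum_{s''\in\S_{h+1}}\norm{\phi(s''\given s',a')}_{\Lambda_{k,h}^{-1}}$.

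Next I would convert the occupancy-weighted sum into an empirical sum over the actually visited trajectories. The usual route is to first move from $q_k$ (the occupancy measure under the \emph{estimated} transition) to $q^{P,\pi_k}$ (under the true transition) — here I'd need an auxiliary bound showing $\sum_k \norm{q_k - q^{P,\pi_k}}_1$ is itself controlled, which can be folded into the same recursion — and then use a Freedman/Azuma argument to replace $\sum_{(s',a')} q^{P,\pi_k}(s',a') (\cdot)$ by its realized value at $(s_{k,h},a_{k,h})$ up to lower-order fluctuation terms. After this step the quantity to control is
\begin{align*}
  \beta_K\sqrt{S}\sum_{k=1}^K\sum_{h=1}^H \sum_{s''\in\S_{h+1}} \norm{\phi(s''\given s_{k,h},a_{k,h})}_{\Lambda_{k,h}^{-1}} + \text{(lower order)}.
\end{align*}
For the main term, I'd apply Cauchy--Schwarz over $s''$ to get $\sqrt{S_{h+1}}\,\bignorm{\bigl(\sum_{s''}\phi\phi^\top\bigr)^{1/2}}_{\text{trace-type}}$... more carefully, $\sum_{s''}\norm{\phi(s''\given\cdot)}_{\Lambda_{k,h}^{-1}} \le \sqrt{S_{h+1}}\sqrt{\sum_{s''}\norm{\phi(s''\given\cdot)}_{\Lambda_{k,h}^{-1}}^2} = \sqrt{S_{h+1}}\sqrt{\mathrm{tr}\bigl(\Lambda_{k,h}^{-1}\sum_{s''}\phi\phi^\top\bigr)}$, and then the elliptical potential lemma applied to the matrix increments $\Delta_{k,h}=\sum_{s''\in\S_{h+1}}\phi(s''\given s_{k,h},a_{k,h})\phi(s''\given s_{k,h},a_{k,h})^\top$ (which are exactly what build $\Lambda_{k,h}$) gives $\sum_k \mathrm{tr}(\Lambda_{k,h}^{-1}\Delta_{k,h}) = \tilde O(d)$, hence $\sum_k \sqrt{\mathrm{tr}(\Lambda_{k,h}^{-1}\Delta_{k,h})} = \tilde O(\sqrt{dK})$ by one more Cauchy--Schwarz over $k$. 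Multiplying out $\beta_K = \tilde O(\sqrt{d})$, the $\sqrt{S}$ from the $\ell_1$-to-$\ell_2$ step, $\sqrt{S_{h+1}}\le\sqrt S$ from the sum over $s''$, the $\sqrt{dK}$, and summing the $H$ layers gives $\tilde O(\sqrt d \cdot \sqrt S \cdot \sqrt S \cdot \sqrt d \cdot \sqrt K \cdot \sqrt H) = \tilde O(d\sqrt{HS^3K})$, as claimed (with the extra $\log^2$ coming from $\beta_K$ and the potential-lemma $\log\det$ terms, and the condition $\lambda\ge\delta$ presumably ensuring $\Lambda_{k,h}\succeq$ something usable).

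The main obstacle, I expect, is the passage from the idealized occupancy-weighted width sum to the realized-trajectory sum: the occupancy measures $q_k$ themselves depend on the confidence sets (hence on past noise), so the martingale structure needs care, and one must handle the coupling $q_k \leftrightarrow q^{P,\pi_k}$ via a self-referential inequality (of the form "the quantity I want to bound appears, with a small coefficient, on the right-hand side") that is then resolved by choosing constants so the coefficient is $<1$. A secondary subtlety is keeping the $S$-dependence tight: naively one loses extra $\sqrt S$ factors in the $\ell_1$/coordinate-wise steps, and getting exactly $S^{3/2}$ (not $S^2$) requires being careful to exploit $\sum_{s''\in\S_{h+1}} 1 = S_{h+1}$ and $\sum_h S_{h+1} = S$ rather than bounding each $S_{h+1}$ by $S$ prematurely — indeed summing $\sqrt{S_{h+1}}$ over $h$ and using $\sum_h S_{h+1}=S$ with Cauchy--Schwarz gives $\sum_h\sqrt{S_{h+1}}\le\sqrt{HS}$, which is where one power of $H$ and a clean power of $S$ enter. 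I would also need to separately verify that the lower-order fluctuation terms from the Freedman step and the implicit-exploration/$\gamma$ bookkeeping are genuinely dominated by the main $d\sqrt{HS^3K}$ term.
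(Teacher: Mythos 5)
Your overall architecture matches the paper's: a simulation-type expansion reducing the occupancy difference to occupancy-weighted transition errors $\epsilon_{k,m}$, a Bernstein/Freedman martingale step with the ``coefficient $<1$'' self-referential trick to pass from occupancy-weighted sums to realized-trajectory sums, Cauchy--Schwarz over next states and episodes, an elliptical potential lemma applied to the increments $\sum_{s''}\phi\phi^\top$ that build $\Lambda_{k,h}$, and the observation that $\sum_h\sqrt{S_{h+1}}\le\sqrt{HS}$. All of these appear in the paper's proof (Lemmas~\ref{lem:transition-difference}, \ref{lem:occupancy-measure-difference-intermediate}, \ref{lem:bernstein-martingale}, \ref{lem:generalized-elliptical-potential}). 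However, there is a genuine gap in your accounting of the powers of $S$, and your own arithmetic does not produce the stated bound: $\sqrt{d}\cdot\sqrt{S}\cdot\sqrt{S}\cdot\sqrt{d}\cdot\sqrt{K}\cdot\sqrt{H}=d\sqrt{HS^2K}$, not $d\sqrt{HS^3K}$. The two $\sqrt S$ factors you track are not the right ones. First, the $\sqrt S$ from your $\ell_1$-to-$\ell_2$ detour is spurious: $\norm{\Phi_{s',a'}^\top(\theta-\theta_{k,h})}_1=\sum_{s''}\abs{\phi(s''\given s',a')^\top(\theta-\theta_{k,h})}\le\beta_k\sum_{s''}\norm{\phi(s''\given s',a')}_{\Lambda_{k,h}^{-1}}$ directly, with no intermediate $\ell_2$ norm. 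Second, and more importantly, you are missing a full multiplicative factor of $S$ that comes from the fact that the lemma allows a \emph{different} kernel $P_k^s$ for every target state $s$ (this is needed downstream because $u_k(s,a)$ maximizes over $\P_k$ separately for each $(s,a)$). Your ``standard form'' difference inequality $\norm{q^{\hat P,\pi_k}-q_k}_1\le\sum_h\sum_{(s',a')}q_k(s',a')\norm{(\hat P-P)(\cdot\given s',a')}_1$ presupposes a single kernel and cannot be telescoped across target states when the kernel varies with $s$. The paper instead expands each entry $\abs{q^{P_k^s,\pi_k}(s,a)-q_k(s,a)}$ separately and sums over all $(s,a)$, picking up $\sum_h\abs{\S_h}=S$ as a prefactor (the $2S$ in Lemma~\ref{lem:occupancy-measure-difference-intermediate}); combined with $\beta_K\sqrt{dK}\cdot\sqrt{HS}$ from the potential argument this gives $S\cdot d\sqrt{HSK}=d\sqrt{HS^3K}$.

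Two smaller points. In this paper $q_k$ \emph{is} $q^{P,\pi_k}$, the occupancy under the true transition, so the auxiliary coupling $q_k\leftrightarrow q^{P,\pi_k}$ you anticipate does not arise; the self-referential inequality is needed only to replace the expectation $\sum_{s,a}q_k(s,a)\epsilon_{k,m}(\cdot\given s,a)$ by the realized sum $\sum_{s''}\epsilon_{k,m}(s''\given s_{k,m},a_{k,m})$, exactly as you describe. The implicit-exploration/$\gamma$ bookkeeping is irrelevant to this lemma (it only enters the loss-estimator terms).
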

\begin{myRemark}
  Compared with the occupancy measure difference $\Ot(d S^2 \sqrt{K})$ of~\citet[Lemma 2]{ICLR'23:bandit-unknown-mixture}, our bound $\Ot(d  \sqrt{HS^3K})$ in Lemma~\ref{lem:occupancy-measure-difference} is better since $H \leq S$ by the layed structure of MDPs. This improvement comes from the new self-normalized concentration for non-independent random noises in Lemma~\ref{lem:concentration}, which allows us to use the transition information of all states instead of only one as in~\citet{ICLR'23:bandit-unknown-mixture}. \endenv
\end{myRemark}
\begin{myRemark}
  Compared with the occupancy measure difference $\Ot(HS \sqrt{AK})$ of~\citet[Lemma 4]{ICML'20:bandit-unknown-chijin}, our bound $\Ot(d  \sqrt{HS^3 K})$ in Lemma~\ref{lem:occupancy-measure-difference} is better when $d \leq \sqrt{HA/S}$. Though our bound gets rid of the dependence on the action space size $A$, it still keeps the dependence on the state space $S$. As pointed by~\citet{ICLR'23:bandit-unknown-mixture}, the main hardness of simultaneously eliminating the dependence of the occupancy measure difference on both $S$ and $A$ is that though the transition kernel $P$ of a linear mixture MDP admits a linear structure, the occupancy measure still has a complicated recursive form: $q_k(s, a)=\pi_k(a | s)\langle{\theta}_{h(s)-1}^*, \sum_{\left(s^{\prime}, a^{\prime}\right) \in \mathcal{S}_{h(s)-1} \times \mathcal{A}} q_k\left(s^{\prime}, a^{\prime}\right) \phi\left(s | s^{\prime}, a^{\prime}\right)\rangle$. We leave the question of whether it is possible to eliminate the dependence on $S$ as future work. \endenv
\end{myRemark}

Next, we present the proof sketch of Lemma~\ref{lem:occupancy-measure-difference}.
\begin{myProofSketch}[of Lemma~\ref{lem:occupancy-measure-difference}]
  First, we introduce the following lemma, which bounds the error between the transition $P_h$ and the estimated transition $\Ph_{k,h} \in \P_{k,h}$.

  \begin{myLemma}
    \label{lem:transition-difference}
    Let $\Ph_k = \{\Ph_{k,h}\}_{h=1}^H$  with $\Ph_{k,h} \in \P_{k,h}$ such that $\Ph_{k,h}(s' \given s, a) = \phi(s' \given s, a)^\top \thetah_{k,h}$ for all $(s, a, s') \in \S_h \times \A \times \S_{h+1}$ for some $\thetah_{k,h} \in \C_{k,h}$. Then for any $\zeta \in (0, 1)$ and simultaneously for all $k \in [K]$ and $h \in [H]$, with probability at least $1 - \zeta$, it holds that
    \begin{align*}
      |\Ph_{k,h}(s' | s, a) - P_h(s' | s, a)| \leq 1 \wedge \beta_{k}\norm{\phi(s' \given s, a)}_{\Lambda_{k,h}^{-1}},
    \end{align*}
    where $\beta_{k}$ is the diameter of confidence set defined in Lemma~\ref{lem:confidence-set} and $\Lambda_{k,h}$ is covariance matrix defined in~\eqref{eq:estimator-b}.
  \end{myLemma}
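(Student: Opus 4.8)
The plan is to reduce this to a single application of Cauchy--Schwarz, since by Definition~\ref{def:mixture-MDPs} both transition probabilities are linear functionals of the parameter. First I would write $P_h(s'\given s,a)=\inner{\phi(s'\given s,a)}{\theta_h^*}$ and, by the hypothesis of the lemma, $\Ph_{k,h}(s'\given s,a)=\inner{\phi(s'\given s,a)}{\thetah_{k,h}}$ with $\thetah_{k,h}\in\C_{k,h}$, so that
\[
  \Ph_{k,h}(s'\given s,a)-P_h(s'\given s,a)=\inner{\phi(s'\given s,a)}{\thetah_{k,h}-\theta_h^*}.
\]
Inserting $\Lambda_{k,h}^{-1/2}\Lambda_{k,h}^{1/2}$ and applying Cauchy--Schwarz then gives
\[
  \bigl|\Ph_{k,h}(s'\given s,a)-P_h(s'\given s,a)\bigr|\le\norm{\phi(s'\given s,a)}_{\Lambda_{k,h}^{-1}}\,\norm{\thetah_{k,h}-\theta_h^*}_{\Lambda_{k,h}}.
\]

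Next I would control $\norm{\thetah_{k,h}-\theta_h^*}_{\Lambda_{k,h}}$ using the confidence set. On the event of Lemma~\ref{lem:confidence-set} we have $\theta_h^*\in\C_{k,h}$; moreover, because the self-normalized bound of Lemma~\ref{lem:concentration} is uniform in the time index and $\beta_k$ already carries the $\log(H/\zeta)$ union over stages, this event can be taken to hold simultaneously for all $k\in[K]$ and $h\in[H]$ with probability at least $1-\zeta$, which matches the claim. Since $\thetah_{k,h}\in\C_{k,h}$ by hypothesis and $\theta_h^*\in\C_{k,h}$ on this event, both points lie within $\Lambda_{k,h}$-radius $\beta_k$ of the common center $\theta_{k,h}$, so the triangle inequality gives $\norm{\thetah_{k,h}-\theta_h^*}_{\Lambda_{k,h}}\le 2\beta_k$, the diameter of $\C_{k,h}$; the factor $2$ is absorbed into the quantity denoted $\beta_k$ in the statement (called the ``diameter'' there). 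Combined with the second display, this yields $|\Ph_{k,h}(s'\given s,a)-P_h(s'\given s,a)|\le\beta_k\norm{\phi(s'\given s,a)}_{\Lambda_{k,h}^{-1}}$.

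Finally, the $1\wedge(\cdot)$ in the statement is the trivial clip: in every place this lemma is used, $\Ph_{k,h}(\cdot\given s,a)$ is a genuine transition distribution, so it and $P_h(\cdot\given s,a)$ both take values in $[0,1]$ and hence differ by at most $1$; taking the minimum with the bound just derived gives the stated estimate, and the only randomness involved is the single good event of Lemma~\ref{lem:confidence-set}. I do not expect a genuine obstacle inside this lemma -- it is essentially one line of Cauchy--Schwarz plus the triangle inequality. All the difficulty sits upstream, in proving the non-independent self-normalized concentration (Lemma~\ref{lem:concentration}) and converting it into the ellipsoidal confidence set (Lemma~\ref{lem:confidence-set}); once those are in hand, Lemma~\ref{lem:transition-difference} follows immediately, and it is this per-coordinate transition-error bound that then feeds the occupancy-measure-difference estimate of Lemma~\ref{lem:occupancy-measure-difference}.
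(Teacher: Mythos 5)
Your proof is correct and follows essentially the same route as the paper's: write the difference as $\phi(s'\given s,a)^\top(\thetah_{k,h}-\theta_h^*)$, apply Cauchy--Schwarz (the paper invokes H\"older) in the $\Lambda_{k,h}$-weighted norms, control the parameter distance via the event of Lemma~\ref{lem:confidence-set}, and clip at $1$. You are in fact slightly more careful than the paper, which bounds $\norm{\thetah_{k,h}-\theta_h^*}_{\Lambda_{k,h}}$ by $\beta_k$ directly rather than by $2\beta_k$ via the triangle inequality through the common center $\theta_{k,h}$; your ``diameter versus radius'' remark correctly identifies this harmless constant-factor imprecision.
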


  Then, we present the following lemma, which bounds the error between the occupancy measure by the estimation error of the transition. For the sake of brevity, we define $\epsilon_{k,h}(s' \given s, a) = 1 \wedge \beta_{k}\norm{\phi(s' \given s, a)}_{\Lambda_{k,h}^{-1}}$, then we have $|\Ph_{k,h}(s' | s, a) - P_h(s' | s, a)| \leq \epsilon_{k,h}(s' \given s, a)$ by Lemma~\ref{lem:transition-difference}. Then we have the following lemma.

  \begin{myLemma}
    \label{lem:occupancy-measure-difference-intermediate}
    For any collection of transition kernels $\{P_k^s\}_{s \in \S}$ such that $P_k^s \in \P_k$ for all $s \in \S$, if $\lambda \geq \zeta$, with probability at least $1 - 2 \zeta$, it holds that
    \begin{align*}
       & \sum_{k=1}^K \Big\|q^{P_k^s, \pi_k}-q_k\Big\|_1  \leq 2 S \sum_{k, h, s'} \epsilon_{k,h}^{s'} + 4 \beta_K S^2 \log \Big(\frac{H}{\zeta} \Big).
    \end{align*}
    Here $\sum_{k, h, s'}\epsilon_{k,h}^{s'} \triangleq \sum_{k,h} \sum_{s' \in \S_{h+1}}\epsilon_{k,h}(s' | s_{k,h}, a_{k,h})$.
  \end{myLemma}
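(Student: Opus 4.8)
The plan is to prove the bound in two stages: a deterministic occupancy-measure decomposition that reduces the $\ell_1$ difference to a sum of the per-layer transition-estimation errors $\epsilon_{k,h}$ weighted by the \emph{true} occupancy $q_k=q^{P,\pi_k}$, followed by a martingale concentration that converts this expectation-weighted sum into the realized-trajectory sum $\sum_{k,h,s'}\epsilon_{k,h}^{s'}$. The two $1-\zeta$ events I invoke (Lemma~\ref{lem:transition-difference} and the martingale bound) combine to give the stated $1-2\zeta$.

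\emph{First stage (decomposition).} Fixing the policy $\pi_k$, I would unroll the occupancy-measure recursion layer by layer to obtain the standard occupancy-difference identity. Writing $q^{P_k^s,\pi_k}(s\mid w')$ for the probability of reaching the target state $s$ from $w'$ under $(P_k^s,\pi_k)$, for each $s$ at layer $h(s)$,
\begin{align*}
q^{P_k^s,\pi_k}(s)-q_k(s)=\sum_{h'<h(s)}\ \sum_{\substack{w\in\S_{h'},\,a\in\A\\ w'\in\S_{h'+1}}} q_k(w,a)\big(P^s_{k,h'}(w'\mid w,a)-P_{h'}(w'\mid w,a)\big)\,q^{P_k^s,\pi_k}(s\mid w').
\end{align*}
Since both occupancy measures share $\pi_k$, the sum $\sum_{s,a}=\sum_s\sum_a\pi_k(a\mid s)$ collapses to $\sum_s\bigl|q^{P_k^s,\pi_k}(s)-q_k(s)\bigr|$. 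I take absolute values and invoke Lemma~\ref{lem:transition-difference} to replace each transition gap by $\epsilon_{k,h'}(w'\mid w,a)$ (this is where the confidence-set event is used). Swapping the order of summation, the \emph{only} place where the target-state-dependent transition $P_k^s$ enters is the factor $q^{P_k^s,\pi_k}(s\mid w')$; bounding each reaching probability by $1$ and counting at most $S$ target states gives the $S$-factor in the statement:
\begin{align*}
\big\|q^{P_k^s,\pi_k}-q_k\big\|_1\le S\sum_{h=1}^{H}\ \sum_{(w,a)\in\S_h\times\A}q_k(w,a)\sum_{w'\in\S_{h+1}}\epsilon_{k,h}(w'\mid w,a).
\end{align*}

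\emph{Second stage (concentration).} The inner sum equals the conditional expectation $\mu_{k,h}\triangleq\E[X_{k,h}\mid\F_{k-1}]$ of the realized quantity $X_{k,h}\triangleq\sum_{w'\in\S_{h+1}}\epsilon_{k,h}(w'\mid s_{k,h},a_{k,h})$, so that $\sum_{k,h}X_{k,h}=\sum_{k,h,s'}\epsilon_{k,h}^{s'}$. This holds because $q_k=q^{P,\pi_k}$ is exactly the law of the visited pair $(s_{k,h},a_{k,h})$, while $\Lambda_{k,h}$, and hence $\epsilon_{k,h}$, is $\F_{k-1}$-measurable. Treating $\{X_{k,h}\}$ as a single nonnegative adapted sequence with increments bounded by $S\beta_K$ (each of the $\le S$ widths is at most $\beta_K$), I would apply a multiplicative Freedman/Bernstein-type martingale inequality, whose variance-dominated-by-mean structure yields $\sum_{k,h}\mu_{k,h}\le 2\sum_{k,h}X_{k,h}+4S\beta_K\log(H/\zeta)$ with probability at least $1-\zeta$. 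Multiplying by the stage-one factor $S$ produces the main term $2S\sum_{k,h,s'}\epsilon_{k,h}^{s'}$ and the additive remainder $4\beta_K S^2\log(H/\zeta)$, which is the claim.

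The crux of the argument, and the main obstacle, is the dependence of $P_k^s$ on the target state $s$: it breaks the usual telescoping in which $\sum_{s\in\S_h}q^{P,\pi}(s\mid w')=1$ would let one pay only an $H$-type factor. I resolve this by the crude per-state bound $q^{P_k^s,\pi_k}(s\mid w')\le 1$, trading that factor for the $S$ that appears in the statement. The second delicate point is obtaining a concentration with a purely additive (in $K$) remainder: a plain Azuma bound would introduce a spurious $\sqrt{K}$ term, so I must instead apply the multiplicative inequality \emph{once} to the doubly-indexed sequence $\{X_{k,h}\}$ rather than per layer, which both avoids an extra $H$ and produces the constant-$2$ leading coefficient. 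Tracking the increment bound $S\beta_K$ and the stage-one $S$ through this inequality gives the exact $4\beta_K S^2\log(H/\zeta)$ remainder; the regularity condition $\lambda\ge\zeta$ is the mild requirement needed to keep these constants clean.
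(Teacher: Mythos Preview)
Your two-stage plan matches the paper's proof: the deterministic occupancy decomposition in your first stage is exactly what the paper does (unroll the trajectory product, telescope the transition difference, bound the forward reaching probability $q^{P_k^s,\pi_k}(s\mid w')\le 1$ and pay the factor $S$), and the second stage is the same Bernstein/Freedman-type conversion from the $q_k$-weighted sum to the realized sum.

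Two small points where your write-up diverges from the paper and should be tightened. First, the paper does \emph{not} apply the concentration once to the doubly-indexed sequence; it fixes a layer $m$, applies Lemma~\ref{lem:bernstein-martingale} to the episode-indexed sequence $\{Y_{k,m}\}_k$ (normalized by $|\S_{m+1}|$ so that $|Y_{k,m}|\le 1$), and then union-bounds over $m=1,\dots,H$. This is why $\log(H/\zeta)$ appears, and no extra $H$ is incurred because $\sum_m |\S_{m+1}|=S$. Your worry about an ``extra $H$'' from the per-layer route is therefore unfounded. Second, your proposed single application to $\{X_{k,h}\}$ has a filtration mismatch: you define $\mu_{k,h}=\E[X_{k,h}\mid\F_{k-1}]$, but in the $(k,h)$-lexicographic filtration the one-step-back conditional mean is $\E[X_{k,h}\mid\F_{k,h-1}]$, which is \emph{not} $\mu_{k,h}$ (within episode $k$ the pair $(s_{k,h},a_{k,h})$ depends on the earlier steps of that same episode). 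The clean fix is to aggregate per episode, i.e.\ apply the inequality to $\bar X_k=\sum_h X_{k,h}$ with $\bar\mu_k=\E[\bar X_k\mid\F_{k-1}]$; this is a genuine martingale-difference sequence and is consistent with your stated increment bound $S\beta_K$. With that fix your argument goes through and gives the statement (the paper's proof actually uses $\epsilon_{k,h}\le 1$ rather than $\le\beta_K$, so its additive term is $4S^2\log(H/\zeta)$, slightly tighter than what the lemma asserts).
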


  According to Lemma~\ref{lem:occupancy-measure-difference-intermediate}, to bound the occupancy measure difference, it suffices to bound the estimation error of the transition, that is the cumulative error of $\epsilon_{k,h}(s' \given s_{k,h}, a_{k,h})$ over all episodes $k$ and all stages $h$. Bounding this term is the main difference between our work and~\citet{ICLR'23:bandit-unknown-mixture}. In particular,~\citet{ICLR'23:bandit-unknown-mixture} only use the transition information of one state $s'_{k,h+1}$ with the maximum uncertainty in the next layer. Thus they can bound the estimation error of other states by that of $s'_{k,h+1}$. Specifically, note that $\epsilon_{k,h}(s' \given s, a) \leq \beta_{k}(1 \wedge \norm{\phi(s' \given s, a)}_{\Lambda_{k,h}^{-1}})$, they bound the cumulative error of transition as below:
  \begin{align*}
            & \sumk \sumh \sum_{s'\in \S_{h+1}} \beta_{k}(1 \wedge \norm{\phi(s' \given s_{k,h}, a_{k,h})}_{M_{k,h}^{-1}})  \\
    \leq \  & \beta_K \sumk \sumh S_{h+1} (1 \wedge \norm{\phi(s'_{k, h+1} \given s_{k,h}, a_{k,h})}_{M_{k,h}^{-1}})        \\
    \leq \  & \beta_K S \sqrt{K \sumk \big(1 \wedge \norm{\phi(s'_{k, h+1} \given s_{k,h}, a_{k,h})}_{M_{k,h}^{-1}}\big)^2} \\
    \leq \  & \Ot \big(\beta_K S \sqrt{d K} \big).
  \end{align*}
  Here the first inequality holds by the choice that $s'_{i, h+1} = \argmax_{s \in \S} \norm{\phi(s \given s_{i,h}, a_{i,h})}_{M_{i,h}^{-1}}$, the second inequality holds by the Cauchy-Schwarz inequality, the last inequality follows from the self-normalized concentration of~\citet[Lemma 10]{NIPS'11:AY-linear-bandits}.

  Instead, we use the transition information of all states and thus can bound this term directly as follows
  \begin{align*}
            & \sumk \sumh \sum_{s'\in \S_{h+1}} \beta_{k}(1 \wedge \norm{\phi(s' \given s_{k,h}, a_{k,h})}_{\Lambda_{k,h}^{-1}})                 \\
    \leq \  & \beta_K \sumh \sqrt{K S_{h+1} \sumk \sum_{s'\in \S_{h+1}}(1 \wedge \norm{\phi(s' \given s_{k,h} a_{k,h})}_{\Lambda_{k,h}^{-1}})^2} \\
    \leq \  & \Ot \big(\beta_K  \sqrt{d H SK} \big),
  \end{align*}
  which replaces a dependence on $\sqrt{S}$ with $\sqrt{H}$, resulting the final improvement over~\citet{ICLR'23:bandit-unknown-mixture}. \qed
\end{myProofSketch}

\subsection{Proof Sketch of Theorem~\ref{thm:regret-bound}}
\label{sec:proof-sketch}

Finally, we present the proof sketch of Theorem~\ref{thm:regret-bound}.

Define the occupancy measure under the true transition $P$ and policy $\pi_k$ as $q_k = q^{P, \pi_k}$. Then, the regret can be written as $\Reg = \sumk \inner{q_k - q^*}{\ell_k}$. Following the work of~\citet{ICML'20:bandit-unknown-chijin}, we decompose the regret as the following four terms:
\begin{align*}
  \Reg(K) \leq & \underbrace{\sumk \inner{\qh_k - q^*}{\ellh_k}}_{\regret} + \underbrace{\sumk \inner{q_k - \qh_k}{\ell_k}}_{\error}            \\
               & + \underbrace{\sumk \inner{\qh_k}{\ell_k - \ellh_k}}_{\biasone} + \underbrace{\sumk \inner{q^*}{\ellh_k - \ell_k}}_{\biastwo}.
\end{align*}
Here, the first $\regret$ term is the regret of the corresponding online linear optimization problem with respect to the loss estimator $\ellh_k$, which can be controlled by OMD via standard analysis and can be bounded by $\O(\sqrt{H S A K \log (H S A / \zeta)}+H \log (H / \zeta))$. The last $\biastwo$ term measures the bias of the loss estimator $\ellh_k$ with respect to the true loss $\ell_k$, which can be bounded by $\O(\sqrt{H S A K \log (S A / \zeta)})$ by the concentration of the implicit exploration~\citep{NIPS'15:Neu-implicit}. Finally, the remaining two terms $\error$ and $\biasone$ come from the error of using $\qh_k$ and $u_k$ to approximate $q_k$ respectively, which are closely related to the occupancy measure difference in Lemma~\ref{lem:occupancy-measure-difference}. We bound these two terms in the rest. Bounding $\regret$ and $\biastwo$ is relatively standard and we defer the proofs to Appendix~\ref{appendix:proof-regret-bound}.

\textbf{Bounding} $\error$ \textbf{Term.} With Lemma~\ref{lem:occupancy-measure-difference}, we immediately obtain the following bound on $\error$.

\begin{myLemma}
  \label{lem:error}
  For any $\zeta \in (0, 1)$, with probability at least $1 - 2\zeta$, \textsf{VLSUOB-REPS} algorithm ensures that
  \begin{align*}
    \error \leq \O\Big(d\sqrt{HS^3K}\log^2 (dKS/\zeta)\Big).
  \end{align*}
\end{myLemma}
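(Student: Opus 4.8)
The plan is to directly relate the $\error$ term to the occupancy measure difference controlled in Lemma~\ref{lem:occupancy-measure-difference}. Recall $\error = \sum_{k=1}^K \inner{q_k - \qh_k}{\ell_k}$ where $q_k = q^{P,\pi_k}$ is the occupancy measure under the true transition and $\qh_k$ is the occupancy measure produced by the OMD update, which lies in $\Delta(\P_k)$ and is therefore induced by \emph{some} transition kernel $\Ph_k \in \P_k$ together with the policy $\pi_k$. Since the losses satisfy $\ell_k(s,a) \in [0,1]$, Hölder's inequality gives
\begin{align*}
\error = \sum_{k=1}^K \inner{q_k - \qh_k}{\ell_k} \leq \sum_{k=1}^K \norm{q_k - \qh_k}_1 \cdot \norm{\ell_k}_\infty \leq \sum_{k=1}^K \norm{q^{P,\pi_k} - q^{\Ph_k, \pi_k}}_1.
\end{align*}

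First I would verify the structural fact that $\qh_k \in \Delta(\P_k)$ implies $\qh_k = q^{\Ph_k,\pi_k}$ for a transition $\Ph_k \in \P_k$ and the policy $\pi_k$ induced by $\qh_k$; this is essentially the definition of $\Delta(\P_k)$ and the correspondence between valid occupancy measures and (transition, policy) pairs discussed in Section~\ref{sec:setup}. Then I would invoke Lemma~\ref{lem:occupancy-measure-difference} with the particular choice $P_k^s \equiv \Ph_k$ for all $s \in \S$ (a constant collection is certainly an admissible collection of kernels in $\P_k$), which yields
\begin{align*}
\sum_{k=1}^K \norm{q^{\Ph_k,\pi_k} - q_k}_1 \leq \O\!\left(d\sqrt{HS^3 K}\log^2(dSK/\zeta)\right)
\end{align*}
with probability at least $1 - 2\zeta$ (using that the choice of $\lambda_k = d\log(kS) \geq \zeta$ meets the hypothesis $\lambda \geq \zeta$ of that lemma for the relevant range). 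Chaining the two displays gives the claimed bound.

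The step that requires the most care is checking that Lemma~\ref{lem:occupancy-measure-difference} is genuinely applicable here, i.e.\ that $\Ph_k$ is measurable with respect to the correct filtration and lies in $\P_k$ on the high-probability event where $P \in \P_k$ for all $k$ — otherwise the collection $\{P_k^s\}$ would not satisfy the lemma's precondition. Because $\qh_k$ is computed in Line~\ref{line:omd} from quantities available at the end of episode $k-1$, the associated $\Ph_k$ is indeed predictable, so no circularity arises. The only subtlety is that Lemma~\ref{lem:occupancy-measure-difference} is stated for an \emph{arbitrary} admissible collection, so it covers the deterministic constant collection we need; no additional union bound is incurred beyond the $2\zeta$ already accounted for in that lemma. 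Everything else is a one-line application of Hölder's inequality, so I do not anticipate any real obstacle.
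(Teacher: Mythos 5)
Your proposal is correct and follows essentially the same route as the paper's own proof: both bound $\error$ by $\sum_{k}\norm{q_k-\qh_k}_1$ using $\ell_k\in[0,1]$, identify $\qh_k=q^{\Ph_k,\pi_k}$ for some $\Ph_k\in\P_k$, and invoke Lemma~\ref{lem:occupancy-measure-difference} with the constant collection $P_k^s\equiv\Ph_k$. Your additional remarks on measurability and the admissibility of a constant collection are sound and only make explicit what the paper leaves implicit.
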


\begin{proof}
  Let $P_k^s = P^{\qh_k} \in \P_k$ for all $s$ such that $\qh_k = q^{P_k, \pi_k}$. Since $\ell_k(s, a) \in [0, 1]$ for all $k \in [K]$ and $(s, a) \in \S \times \A$, we have $\error \leq \sumk \sum_{s,a} \abs{\qh_k(s, a) - q_k(s, a)} = \sumk \sum_{s,a} \abs{q^{P_k^s, \pi_k} - q_k(s, a)}$. The proof is then completed by applying Lemma~\ref{lem:occupancy-measure-difference}.
\end{proof}

\textbf{Bounding} $\biasone$ \textbf{Term.} To bound this term, we need to show the loss estimator $\ellh_k$ is close to the true loss function $\ell_k$. This is guaranteed by the fact that the confidence set becomes more and more accurate for frequently visited state-action pairs.

\begin{myLemma}
  \label{lem:bias1}
  For any $\zeta \in (0, 1)$, with probability at least $1 - 3\zeta$, \textsf{VLSUOB-REPS} algorithm ensures that
  \begin{align*}
    \biasone \leq \O\left(d\sqrt{HS^3K}\log^2 (dKS/\zeta) + \gamma SAK\right).
  \end{align*}
\end{myLemma}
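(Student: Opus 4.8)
\textbf{Proof plan for Lemma~\ref{lem:bias1}.}
The plan is to follow the bias analysis of~\citet{ICML'20:bandit-unknown-chijin}: split $\biasone$ into the bias of the loss estimator in conditional expectation plus a martingale remainder, and then reduce the former to the occupancy measure difference controlled by Lemma~\ref{lem:occupancy-measure-difference}. Write $\E_k[\cdot]$ for the expectation conditioned on all randomness before episode $k$. Since $\qh_k$, $u_k$ and $\ell_k$ are determined before episode $k$ while $\E_k[\indicator_k(s,a)] = q_k(s,a)$, the conditional mean of the estimator in~\eqref{eq:loss-estimator} is $\E_k[\ellh_k(s,a)] = \ell_k(s,a)\, q_k(s,a)/(u_k(s,a)+\gamma)$, so I would decompose
\[
  \biasone = \sum_{k} \inner{\qh_k}{\ell_k - \E_k[\ellh_k]} + \sum_k \inner{\qh_k}{\E_k[\ellh_k] - \ellh_k}.
\]

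First I would dispatch the martingale term $\sum_k \inner{\qh_k}{\E_k[\ellh_k] - \ellh_k}$. Because $\ellh_k$ is supported on the $H$ state--action pairs actually visited in episode $k$, and because $\qh_k \in \Delta(\P_k)$ gives $\qh_k(s,a) \le u_k(s,a)$ and hence $\qh_k(s,a)/(u_k(s,a)+\gamma) \le 1$, the random variable $\inner{\qh_k}{\ellh_k}$ lies in $[0,H]$, and on the event $P \in \P_k$ its conditional mean $\inner{\qh_k}{\E_k[\ellh_k]}$ is likewise in $[0,H]$. Thus this is a bounded martingale difference sequence, and Azuma--Hoeffding gives $\sum_k \inner{\qh_k}{\E_k[\ellh_k] - \ellh_k} \le \O(H\sqrt{K\log(1/\zeta)})$ with probability at least $1-\zeta$, which is absorbed into the $d\sqrt{HS^3K}$ term.

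Next, for the deterministic term, on the event $P \in \P_k$ of Lemma~\ref{lem:confidence-set} we have $u_k(s,a) \ge q_k(s,a)$, so each summand satisfies
\[
  \qh_k(s,a)\ell_k(s,a)\Big(1 - \tfrac{q_k(s,a)}{u_k(s,a)+\gamma}\Big) \le \frac{\qh_k(s,a)\big(u_k(s,a)-q_k(s,a)\big)}{u_k(s,a)+\gamma} + \frac{\qh_k(s,a)\,\gamma}{u_k(s,a)+\gamma}
\]
using $\ell_k \le 1$. Bounding the ratio $\qh_k(s,a)/(u_k(s,a)+\gamma)$ by $1$ once more and summing over $(s,a)$ and $k$ leaves $\sum_k\sum_{s,a}\big(u_k(s,a)-q_k(s,a)\big) + \gamma SAK$. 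To bound the first piece I would use that $u_k(s,a) = \max_{\Ph \in \P_k} \pi_k(a\given s)\,q^{\Ph,\pi_k}(s)$, so the maximizing transition may be chosen to depend on the state $s$ alone; calling it $P_k^s \in \P_k$, we get $\sum_{s,a}\big(u_k(s,a)-q_k(s,a)\big) = \sum_s \sum_a \big(q^{P_k^s,\pi_k}(s,a) - q_k(s,a)\big) \le \norm{q^{P_k^s,\pi_k}-q_k}_1$, and Lemma~\ref{lem:occupancy-measure-difference} applied to the collection $\{P_k^s\}_{s\in\S}$ bounds $\sum_k \norm{q^{P_k^s,\pi_k}-q_k}_1$ by $\O(d\sqrt{HS^3K}\log^2(dSK/\zeta))$. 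A union bound over the event of Lemma~\ref{lem:occupancy-measure-difference} (probability $1-2\zeta$) and the Azuma event (probability $1-\zeta$) yields the stated $1-3\zeta$.

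The hard part will be the reduction $\sum_{s,a}(u_k(s,a)-q_k(s,a)) \le \norm{q^{P_k^s,\pi_k}-q_k}_1$: one must verify that the upper-occupancy-bound transition can be selected consistently across the actions of each state so that Lemma~\ref{lem:occupancy-measure-difference} applies with a single collection indexed by $\S$, and carefully carry the implicit-exploration constant $\gamma$ and the good event $\{u_k \ge q_k\}$ through the algebra. The martingale bound and the elementary estimator-bias inequalities are routine.
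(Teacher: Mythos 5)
Your proposal is correct and follows essentially the same route as the paper's proof: the identical decomposition into a conditional-expectation bias term plus a martingale remainder, the same bound $\sum_{k,s,a}|u_k(s,a)-q_k(s,a)| + \gamma SAK$ for the former via $u_k \ge q_k$ on the confidence-set event, the same reduction to Lemma~\ref{lem:occupancy-measure-difference} through the state-indexed maximizers $P_k^s$, and the same Azuma--Hoeffding bound (using $\qh_k \le u_k$ to get $\inner{\qh_k}{\ellh_k} \le H$) for the latter. The union-bound accounting ($2\zeta + \zeta$) also matches.
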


\begin{proof}
  To bound the $\biasone$, we first bound the term $\sumk \inner{\qh_k}{\ell_k - \E_{k-1, H}[\ellh_k]}$ as follows.
  \begin{align*}
         & \sum_k\left\langle\widehat{q}_k, \ell_k-\mathbb{E}_{k-1,H}\left[\widehat{\ell}_k\right]\right\rangle                                       \\
    =    & \sum_{k, s, a} \widehat{q}_k(s, a) \ell_k(s, a)\left(1-\frac{\mathbb{E}_{k-1,H}\left[\mathbb{I}_k\{s, a\}\right]}{u_k(s, a)+\gamma}\right) \\
    =    & \sum_{k, s, a} \widehat{q}_k(s, a) \ell_k(s, a)\left(1-\frac{q_k(s, a)}{u_k(s, a)+\gamma}\right)                                           \\
    \leq & \sum_{k, s, a}\left|u_k(s, a)-q_k(s, a)\right|+\gamma S A K .
  \end{align*}
  Since $u_k = q^{P_k^s, \pi_k}$ where $P_k^s = \argmax_{\Ph \in \P_k} q^{\Ph, \pi_k}(s)$, the term $\sum_{k, s, a}\left|u_k(s, a)-q_k(s, a)\right|$ can be controlled by Lemma~\ref{lem:occupancy-measure-difference} again. It remains to bound the term $\sumk \inner{\qh_k}{\E_{k-1, H}[\ellh_k] - \ellh_k}$. Since the fact that $P^{\qh_k} \in \P_k$ and $u_k(s, a) = \max_{\Ph \in \P_k} q^{\Ph, \pi_k}(s, a)$, we have $\sum_{s, a} \widehat{q}_k(s, a) \widehat{\ell}_k(s, a) \leq \sum_{s, a} u_k(s, a) \widehat{\ell}_k(s, a) = H$. Then using the Azuma-Hoeffding inequality, with probability at least $1 - \zeta$, we have
  \begin{align*}
    \sumk \inner{\qh_k}{\E_{k-1, H}[\ellh_k] - \ellh_k} \leq H \sqrt{2K\log\left({1}/{\zeta}\right)}.
  \end{align*}
  Applying the union bound finishes the proof.
\end{proof}

\section{CONCLUSION}
\label{sec:conclusion}

In this work, we consider learning adversarial linear mixture MDPs with bandit feedback and unknown transition. We propose a new algorithm that achieves an $\Ot(d\sqrt{HS^3K} + \sqrt{HSAK})$ regret with high probability. Our result strictly improves the previously best-known $\Ot(dS^2 \sqrt{K} + \sqrt{HSAK})$ regret~\citep{ICLR'23:bandit-unknown-mixture}. As a byproduct, it improves the best-known $\Ot(HS\sqrt{AK})$ result for tabular MDPs~\citep{ICML'20:bandit-unknown-chijin} when $d \leq \sqrt{HA/S}$. To achieve this result, we first propose a new least square estimator for the unknown transition that leverages the visit information of all states, as opposed to only a single state in~\citet{ICLR'23:bandit-unknown-mixture}. Then we introduce a new self-normalized concentration designed specifically for non-independent noises to handle the state correlations.

Several questions remain open for future study. First, the dependence on $S$ of our result is suboptimal, and how to close this gap is an important open problem. Moreover, optimizing the dynamic regret of adversarial MDPs is an emerging direction to facilitate algorithms with more robustness in non-stationary environments. Recent literature has explored the dynamic regret of adversarial MDPs with full-information feedback~\citep{NIPS'20:Fei-dynamic, ICML'22:mdp, NeurIPS'23:linearMDP}. Extending their results to the bandit feedback setting is an important future direction.

\section*{Acknowledgements}

This research was supported by NSFC (62206125, U23A20382, 61921006). Peng Zhao was supported in part by the Xiaomi Foundation.

\bibliography{mdp}

\onecolumn
\appendix

\section{Proof of Lemma~\ref{lem:concentration}}

In this section, we present the proof of Lemma~\ref{lem:concentration}, which is a simplified version of the proof of~\citet[Theorem C.6]{NIPS'22:Perivier-dynamic-assortment}. For self-containedness, we present the proof below.

\subsection{Main Proof}

\begin{proof}
  We define a global variable as $z_i = \sum_{j=1}^N \varepsilon_{i,j} x_{i,j}$ and analyze the concentration of $z_i$. Denote $\B_d(x, r)$ the $d$-dimensional $\ell_2$ ball centered at $x$ with radius $r$. For all $\xi \in \B_d(0, 1/2)$, we define
  \begin{align}
    M_0(\xi) = 1, \mbox{ and } M_t(\xi) = \exp(\xi^\top z_t - \norm{\xi}_{Y_t}^2). \label{eq:M}
  \end{align}
  We denote by $\F_t$ the $\sigma$-algebra generated by $\{\{x_i, \varepsilon_i\}_{i=1}^{t-1}, x_t\}$. To prove Lemma~\ref{lem:concentration}, the crucial step is to demonstrate that though $z_i$ is a combination of non-independent variables, relatively to $\F_t$, $M_t(\xi)$ is still a super martingale, which is present in the following lemma.
  \begin{myLemma}
    \label{lem:martingle}
    For all $\xi \in \B_d(0, 1/2)$, $\{M_t(\xi)\}_{t=0}^\infty$ defined in~\eqref{eq:M} is a non-negative super martingale.
  \end{myLemma}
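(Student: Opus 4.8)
The plan is to verify directly the three defining properties of a nonnegative supermartingale for $\{M_t(\xi)\}_{t\ge 0}$: nonnegativity, integrability, and the one-step decrease $\E[M_t(\xi)\mid\F_t]\le M_{t-1}(\xi)$. Nonnegativity is immediate since $M_t(\xi)=\exp(\xi^\top z_t-\norm{\xi}_{Y_t}^2)$ is an exponential, and integrability follows because $z_t$ is a bounded vector: each increment equals $\sum_{j=1}^N\varepsilon_{t,j}x_{t,j}=\E[x_{t,J}\mid\F_t]-x_{t,J}$, where $J$ is the random coordinate with $\delta_{t,J}=1$, and this has Euclidean norm at most $2$. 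The real content is the one-step inequality, for which I would write
\[
  \frac{M_t(\xi)}{M_{t-1}(\xi)}=\exp\!\left(\xi^\top\sum_{j=1}^N\varepsilon_{t,j}x_{t,j}-\bigl(\norm{\xi}_{Y_t}^2-\norm{\xi}_{Y_{t-1}}^2\bigr)\right).
\]
Relative to $\F_t$ the features $x_{t,j}$ and the scalar $\lambda_t$ are known, so $M_{t-1}(\xi)$ is $\F_t$-measurable and the drift $\norm{\xi}_{Y_t}^2-\norm{\xi}_{Y_{t-1}}^2=\sum_{j=1}^N(\xi^\top x_{t,j})^2+(\lambda_t-\lambda_{t-1})\norm{\xi}_2^2\ge\sum_{j=1}^N(\xi^\top x_{t,j})^2$ is deterministic; the only randomness is $\delta_t$, with $\E[\delta_t\mid\F_t]=p_t$ because $x_t$ is predictable. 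Hence everything reduces to the moment generating estimate
\[
  \E\!\left[\exp\!\left(\xi^\top\sum_{j=1}^N\varepsilon_{t,j}x_{t,j}\right)\,\middle|\,\F_t\right]\le\exp\!\left(\sum_{j=1}^N(\xi^\top x_{t,j})^2\right).
\]

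The key step, which is exactly where the non-independence of the coordinates of $\varepsilon_t$ is handled, is that $\delta_t$ being one-hot makes the noise projection collapse to a single scalar: $\xi^\top\sum_{j}\varepsilon_{t,j}x_{t,j}=\sum_j p_{t,j}(\xi^\top x_{t,j})-\xi^\top x_{t,J}=:Y$, a centered random variable. Because $\norm{\xi}_2\le 1/2$ and $\norm{x_{t,j}}_2\le 1$, each $|\xi^\top x_{t,j}|\le 1/2$, so $|Y|\le 1$ --- this is precisely the role of the restriction $\xi\in\B_d(0,1/2)$. Applying the elementary inequality $e^y\le 1+y+(e-2)y^2$ valid for $|y|\le 1$ and taking conditional expectations gives $\E[e^{Y}\mid\F_t]\le 1+(e-2)\E[Y^2\mid\F_t]\le\exp\bigl((e-2)\E[Y^2\mid\F_t]\bigr)$, while $\E[Y^2\mid\F_t]=\mathrm{Var}(\xi^\top x_{t,J}\mid\F_t)\le\sum_j p_{t,j}(\xi^\top x_{t,j})^2\le\sum_j(\xi^\top x_{t,j})^2$. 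Since $e-2<1$, this yields the displayed estimate; combining it with the drift gives $\E[M_t(\xi)/M_{t-1}(\xi)\mid\F_t]\le\exp\bigl((e-3)\sum_j(\xi^\top x_{t,j})^2\bigr)\le 1$, and multiplying through by the $\F_t$-measurable factor $M_{t-1}(\xi)$ closes the argument; the base case $t=1$ is identical, with $M_0(\xi)\equiv 1$.

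I expect the moment generating step to be the main obstacle. For genuinely independent sub-Gaussian coordinates one would simply reuse the pointwise supermartingale underlying Lemma~\ref{lem:self-normalized}, but here the coordinates of $\varepsilon_t$ are coupled through $\sum_s\varepsilon_t(s)=0$, and it is the ``at most one item purchased / exactly one state visited'' structure --- imported from the dynamic-assortment analysis of~\citet{NIPS'22:Perivier-dynamic-assortment} --- that lets us trade the correlated vector noise for a bounded scalar whose variance is self-bounded by $\sum_j(\xi^\top x_{t,j})^2$. A minor bookkeeping point is the sign of $(\lambda_t-\lambda_{t-1})\norm{\xi}_2^2$ in the drift: it is harmless as long as $\{\lambda_t\}$ is nondecreasing (which holds for our choice $\lambda_k=d\log(kS)$), and for arbitrary nonnegative $\{\lambda_t\}$ one would instead drop $\lambda_t I_d$ from the quadratic form in $M_t(\xi)$ and reintroduce it through the mixture prior when integrating out $\xi$ in the proof of Lemma~\ref{lem:concentration}.
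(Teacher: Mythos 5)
Your proof is correct and follows essentially the same route as the paper's: the one-hot structure of $\delta_t$ collapses the correlated vector noise to a single bounded centered scalar ($|\xi^\top z_t|\le 1$, which is exactly where $\|\xi\|_2\le 1/2$ is used), its conditional variance is bounded by $\sum_{j}p_{t,j}(\xi^\top x_{t,j})^2-\bigl(\sum_j p_{t,j}\xi^\top x_{t,j}\bigr)^2\le\sum_j(\xi^\top x_{t,j})^2$, and an elementary MGF bound for bounded centered variables (you use $e^y\le 1+y+(e-2)y^2$ where the paper invokes Lemma~\ref{lem:variance-bound}) closes the one-step inequality. Your observation that the drift actually equals $\sum_j(\xi^\top x_{t,j})^2+(\lambda_t-\lambda_{t-1})\|\xi\|_2^2$, so that the argument needs $\{\lambda_t\}$ nondecreasing (true for the choice $\lambda_k=d\log(kS)$), is a fair point that the paper's proof silently elides by asserting equality with $\|\xi\|_{Y_t}^2-\|\xi\|_{Y_{t-1}}^2$.
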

  We present the proof of Lemma~\ref{lem:martingle} in Appendix~\ref{appendix:martingle-proof}. Then, the remaining proof follows the proof of~\citet[Theorem 1]{ICML'20:Faury-improved-logistic}. The main difference in our analysis is that $\xi$ belongs to $\B_d(0, 1/2)$ instead of $\B_d(0, 1)$ to ensure $\{M_t(\xi)\}_{t=0}^\infty$ is a super martingale.

  Let $h(\xi)$ be a probability density function with support on $\B_d(0, 1/2)$. For $t \geq 0$ let:
  \begin{align*}
    \bar{M}_t \triangleq \int_{\xi} M_t(\xi) d h(\xi)
  \end{align*}
  By Lemma 20.3 of~\citet{book:bandit}, $\bar{M}_t$ is also a non-negative super-martingale, and $\mathbb{E}\left[\bar{M}_0\right]=1$. Let $\tau$ be a stopping time with respect to the filtration $\left\{\F_t\right\}_{t=0}^{\infty}$. We can follow the proof of Lemma 8 in~\citet{NIPS'11:AY-linear-bandits} to justify that $\bar{M}_\tau$ is well-defined (independently of whether $\tau<\infty$ holds or not) and that $\mathbb{E}\left[\bar{M}_\tau\right] \leq 1$. Therefore, with $\zeta \in(0,1)$ and thanks to the maximal inequality:
  \begin{align*}
    \Pr \left[\log \left(\bar{M}_\tau\right) \geq \log \left(\frac{1}{\zeta}\right)\right]=\Pr \left[\bar{M}_\tau \geq \frac{1}{\zeta}\right] \leq \zeta.
  \end{align*}
  Then, we compute the lower bound of $\bar{M}_t$ as follows. Let $\beta_t = \sqrt{2 \lambda_t}$ be a positive scalar and set $h$ to be the density function of an isotropic normal distribution of precision $\beta_t^2$ truncated on $\B_d(0, 1/2)$. Denote $N(h)$ its normalization constant. Then, we have
  \begin{align*}
    \bar{M}_t=\frac{1}{N(h)} \int_{\B_d(0, 1/2)} \exp \left(\xi^T S_t-\|\xi\|_{Y_t}^2\right) d \xi.
  \end{align*}
  To simplify the notation, let $f(\xi):=\xi^T S_t-\|\xi\|_{Y_t}^2$ and $\xi_*=\arg \max _{\|\xi\|_2 \leq 1 / 4} f(\xi)$, we obtain
  \begin{align*}
    \bar{M}_t & =\frac{e^{f\left(\xi_*\right)}}{N(h)} \int_{\mathbb{R}^d} \mathbf{1}_{\|\xi\|_2 \leq 1/2} \exp \left(\left(\xi-\xi_*\right)^T \nabla f\left(\xi_*\right)-\left(\xi-\xi_*\right)^T Y_t\left(\xi-\xi_*\right)\right) d \xi \\
              & =\frac{e^{f\left(\xi_*\right)}}{N(h)} \int_{\mathbb{R}^d} \mathbf{1}_{\left\|\xi+\xi_*\right\|_2 \leq 1/2} \exp \left(\xi^T \nabla f\left(\xi_*\right)-\xi^T Y_t \xi\right) d \xi                                        \\
              & \geq \frac{e^{f\left(\xi_*\right)}}{N(h)} \int_{\mathbb{R}^d} \mathbf{1}_{\|\xi\|_2 \leq 1 / 4} \exp \left(\xi^T \nabla f\left(\xi_*\right)-\xi^T Y_t \xi\right) d \xi                                                   \\
              & =\frac{e^{f\left(\xi_*\right)}}{N(h)} \int_{\mathbb{R}^d} \mathbf{1}_{\|\xi\|_2 \leq 1 / 4} \exp \left(\xi^T \nabla f\left(\xi_*\right)\right) \exp \left(-\frac{1}{2} \xi^T\left(2 Y_t\right) \xi\right) d \xi.
  \end{align*}

  Further, we define $g(\xi)$ as the density of the normal distribution of precision $2Y_t$ truncated on the ball $\B_d(0, 1/4)$ and $N(g)$ its normalization constant. Then, we have
  \begin{align*}
    \bar{M}_t \geq \exp \left(f\left(\xi_*\right)\right) \frac{N(g)}{N(h)} \mathbb{E}_g\left[\exp \left(\xi^T \nabla f\left(\xi_*\right)\right)\right] \geq \exp \left(f\left(\xi_*\right)\right) \frac{N(g)}{N(h)} \exp \left(\mathbb{E}_g\left[\xi^T \nabla f\left(\xi_*\right)\right]\right) \geq \exp \left(f\left(\xi_*\right)\right) \frac{N(g)}{N(h)},
  \end{align*}
  where the last inequality holds by $\E_g[\xi] = 0$. Then, we obtain that for all $\xi_0$ such that $\norm{\xi_0}_2 \leq 1/4$:
  \begin{align*}
    \Pr \left[\bar{M}_t \geq \frac{1}{\zeta}\right] & \geq \Pr \left[\exp \left(f\left(\xi_*\right)\right) \frac{N(g)}{N(h)} \geq 1 / \zeta\right]                                       \\
                                                    & =\Pr \left[\log \left(\exp \left(f\left(\xi_*\right)\right) \frac{N(g)}{N(h)}\right) \geq \log (1 / \zeta)\right]                  \\
                                                    & =\Pr \left[f\left(\xi_*\right) \geq \log (1 / \zeta)+\log \left(\frac{N(h)}{N(g)}\right)\right]                                    \\
                                                    & =\Pr \left[\max _{\|\xi\|_2 \leq 1 / 4} \xi^T S_t-\|\xi\|_{Y_t}^2 \geq \log (1 / \zeta)+\log \left(\frac{N(h)}{N(g)}\right)\right] \\
                                                    & \geq \Pr \left[\xi_0^T S_t-\left\|\xi_0\right\|_{Y_t}^2 \geq \log (1 / \zeta)+\log \left(\frac{N(h)}{N(g)}\right)\right].
  \end{align*}
  In particular, we set $ \xi_0 = \frac{Y_t^{-1} S_t}{\left\|S_t\right\|_{Y_t^{-1}}} \frac{\beta_t}{4 \sqrt{2}} \leq \frac{1}{4}$. Thus, we have
  \begin{align}
    \label{eq:concentration-prob}
    \Pr \left[\left\|S_t\right\|_{Y_t^{-1}} \geq \frac{\beta_t}{4 \sqrt{2}}+\frac{4 \sqrt{2}}{\beta_t} \log \left(\frac{N(h)}{\zeta N(g)}\right)\right] \leq \Pr \left[\bar{M}_t \geq \frac{1}{\zeta}\right].
  \end{align}
  It remains to bound the quantities $N(h)$ and $g(h)$. To this end, we introduce the following lemma in~\citet{ICML'20:Faury-improved-logistic} which bounds the log of their ratio.
  \begin{myLemma}[Lemma 6 of~\citet{ICML'20:Faury-improved-logistic}]
    The following inequality holds:
    \begin{align}
      \label{eq:concentration-quantity}
      \log \left(\frac{N(h)}{N(g)}\right) \leq \log \left(\frac{2^{d / 2} \operatorname{det}\left(Y_t\right)^{1 / 2}}{\beta_t^d}\right)+d \log (2).
    \end{align}
  \end{myLemma}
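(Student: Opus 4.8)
The plan is to rewrite each normalization constant as a Gaussian mass, so that the advertised factor $2^{d/2}\det(Y_t)^{1/2}/\beta_t^d$ emerges from the ratio of the two underlying (non-truncated) Gaussian normalizers, leaving only a truncation correction to control.

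Recall that $h$ is the density of $\mathcal{N}(0,\beta_t^{-2}I_d)$ restricted to $\B_d(0,1/2)$, while $g$ is the density of $\mathcal{N}(0,(2Y_t)^{-1})$ restricted to $\B_d(0,1/4)$. Using the Gaussian integrals $\int_{\R^d}\exp(-\tfrac{\beta_t^2}{2}\norm{\xi}_2^2)\diff\xi=(2\pi/\beta_t^2)^{d/2}$ and $\int_{\R^d}\exp(-\xi^\top Y_t\xi)\diff\xi=\pi^{d/2}\det(Y_t)^{-1/2}$, I would first write
\begin{align*}
  N(h)&=\Big(\tfrac{2\pi}{\beta_t^2}\Big)^{d/2}\Pr\big[\norm{\xi}_2\le \tfrac12\big],\quad \xi\sim\mathcal{N}(0,\beta_t^{-2}I_d),\\
  N(g)&=\pi^{d/2}\det(Y_t)^{-\frac12}\Pr\big[\norm{\xi}_2\le \tfrac14\big],\quad \xi\sim\mathcal{N}(0,(2Y_t)^{-1}).
\end{align*}
Dividing, the prefactors combine to exactly $2^{d/2}\det(Y_t)^{1/2}\beta_t^{-d}$; bounding the numerator probability trivially by $1$, it then suffices to prove $\Pr[\norm{\xi}_2\le 1/4]\ge 2^{-d}$ for $\xi\sim\mathcal{N}(0,(2Y_t)^{-1})$.

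For that last bound I would use $Y_t\succeq\lambda_t I_d$, hence $(2Y_t)^{-1}\preceq(2\lambda_t)^{-1}I_d$, together with the fact that, for a symmetric convex set such as $\B_d(0,1/4)$, the Gaussian mass $\Pr[\mathcal{N}(0,\Sigma)\in\B_d(0,1/4)]$ is non-increasing in $\Sigma$ with respect to the Loewner order — a standard consequence of Anderson's inequality via the independent decomposition $\mathcal{N}(0,\Sigma_2)\stackrel{d}{=}\mathcal{N}(0,\Sigma_1)+\mathcal{N}(0,\Sigma_2-\Sigma_1)$. This yields $\Pr[\norm{\xi}_2\le 1/4]\ge\Pr[\,(2\lambda_t)^{-1/2}\norm{Z}_2\le 1/4\,]=\Pr[\chi_d^2\le\lambda_t/8]$ with $Z\sim\mathcal{N}(0,I_d)$, and the elementary left-tail estimate $\Pr[\chi_d^2\le x]\ge (x/2)^{d/2}e^{-x/2}/\Gamma(d/2+1)$ evaluated at $x=\lambda_t/8$ gives $\Pr[\chi_d^2\le\lambda_t/8]\ge 2^{-d}$ provided $\lambda_t$ is at least a suitable constant multiple of $d$ — which is guaranteed by the choice $\lambda_t=d\log(kS)$ of the theorem (indeed once $\log(kS)\ge 8$ one has $\lambda_t/8\ge d=\E[\chi_d^2]$, so the probability is even $\ge 1/2$). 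Assembling the three estimates and taking logarithms gives $\log(N(h)/N(g))\le\tfrac d2\log 2+\tfrac12\log\det(Y_t)-d\log\beta_t+d\log 2$, which is the claimed inequality.

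I expect the main obstacle to be precisely this last step: lower bounding the truncated Gaussian mass $\Pr[\norm{\xi}_2\le 1/4]$ by $2^{-d}$. This is where the magnitude of $\lambda_t$ enters essentially, and it also explains the particular truncation radii $1/2$ and $1/4$ — the $1/2$ being forced by the domain $\B_d(0,1/2)$ on which $\{M_t(\xi)\}$ is a supermartingale — which must be tuned so that the shrinkage of the ball and the growth of the Gaussian variance stay balanced and no extra $\lambda_t$- or $d$-dependent factor survives. The two earlier steps, namely evaluating the Gaussian integrals and invoking Loewner-monotonicity of Gaussian mass, are routine.
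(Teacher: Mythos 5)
The paper imports this lemma from \citet{ICML'20:Faury-improved-logistic} without reproving it, so the only question is whether your argument is sound — and it has a genuine gap. Your opening moves are right: writing $N(h)$ and $N(g)$ as Gaussian masses, extracting the prefactor ratio $2^{d/2}\det(Y_t)^{1/2}\beta_t^{-d}$, and the Loewner-monotonicity reduction are all correct. The problem is the split that follows. By bounding the truncation probability in $N(h)$ by $1$, you force yourself to prove $\Pr[\|\xi\|_2\le 1/4]\ge 2^{-d}$ for $\xi\sim\mathcal{N}(0,(2Y_t)^{-1})$, which after your isotropic reduction reads $\Pr[\chi_d^2\le\lambda_t/8]\ge 2^{-d}$. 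This is false when $\lambda_t$ is small relative to $d$: at $\lambda_1=1$ (the value actually set in Theorem~\ref{thm:regret-bound}) one has $\Pr[\chi_d^2\le 1/8]\approx(1/16)^{d/2}/\Gamma(d/2+1)$, which is far below $2^{-d}$ already for $d=10$ because of the $\Gamma(d/2+1)$ in the denominator. Your fallback condition ``$\lambda_t$ at least a constant multiple of $d$'' is not part of the lemma, and Lemma~\ref{lem:concentration} is claimed for all $t\ge1$ and arbitrary non-negative $\lambda_t$; even your stronger remark requires $\log(kS)\ge 8$, which fails for small $k$. So as written the proof does not establish the stated, unconditional inequality.

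The missing idea is that the two truncation probabilities are both exponentially small in this regime and must be compared \emph{against each other}; it is their ratio, not either one alone, that is controlled by $2^d$. Concretely, substitute $u=\beta_t\xi$ to get $N(h)=\beta_t^{-d}\int_{\B_d(0,\beta_t/2)}e^{-\|u\|^2/2}\diff u$, and $u=(2Y_t)^{1/2}\xi$ to get $N(g)=\det(2Y_t)^{-1/2}\int_{(2Y_t)^{1/2}\B_d(0,1/4)}e^{-\|u\|^2/2}\diff u\ge\det(2Y_t)^{-1/2}\int_{\B_d(0,\beta_t/4)}e^{-\|u\|^2/2}\diff u$, where the inclusion $(2Y_t)^{1/2}\B_d(0,1/4)\supseteq\B_d(0,\sqrt{2\lambda_t}/4)$ follows directly from $Y_t\succeq\lambda_t I_d$ (no Anderson inequality needed). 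The two surviving integrals are over balls whose radii differ by a factor of $2$, and the substitution $u=2v$ gives $\int_{\B_d(0,\beta_t/2)}e^{-\|u\|^2/2}\diff u=2^d\int_{\B_d(0,\beta_t/4)}e^{-2\|v\|^2}\diff v\le 2^d\int_{\B_d(0,\beta_t/4)}e^{-\|v\|^2/2}\diff v$. Chaining these bounds yields $N(h)/N(g)\le 2^d\cdot\det(2Y_t)^{1/2}\beta_t^{-d}=2^d\cdot 2^{d/2}\det(Y_t)^{1/2}\beta_t^{-d}$ for every $\lambda_t>0$, which is exactly the claim. This also answers your question about the radii $1/2$ and $1/4$: only their ratio matters, and a ratio of $2$ is precisely what produces the $d\log 2$ slack.
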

  Combining~\eqref{eq:concentration-prob} and~\eqref{eq:concentration-quantity}, with probability at least $1-\zeta$, for all $t$ it holds that
  \begin{align*}
    \left\|S_t\right\|_{Y_t^{-1}} \leq \frac{\beta_t}{4 \sqrt{2}}+\frac{4 \sqrt{2}}{\beta_t} \log \left(\frac{2^{d / 2} \operatorname{det}\left(Y_t\right)^{1 / 2}}{\beta_t^d \zeta}\right)+\frac{4 \sqrt{2}}{\beta_t} d \log (2).
  \end{align*}
  Finally, the poof is finished by the definition $\beta_t = \sqrt{2 \lambda_t}$.
\end{proof}

\subsection{Proof of Lemma~\ref{lem:martingle}}
\label{appendix:martingle-proof}

\begin{proof}
  Since $\delta_t$ is a one-hot vector, for all $j \in [N]$, there is a single index $j \in [0]$ for which $\delta_j = 1$ and $\delta_{j'} = 0$ for all $j' \neq j$. Besides, we have $\Pr(\delta_{i, j} =1 | \F_i) = p_{i,j}$. Hence, conditional on $\F_i$, the variance of $\xi^\top z_i$ can be written as follows. For simplicity, we denote $\E[\cdot] = \E[\cdot | \F_i]$ below.
  \begin{align*}
         & \sigma^2(\xi^\top z_i | \F_i)                                                                                                                                                                                                                                                         \\
    = \  & \E\Bigg[\bigg(\sum_{j=1}^N(\delta_{i,j}-p_{i,j}) \xi^\top x_{i,j}\bigg)^2\Bigg] - \E\Bigg[\bigg(\sum_{j=1}^N(\delta_{i,j}-p_{i,j}) \xi^\top x_{i,j}\bigg)\Bigg]^2                                                                                                                     \\
    = \  & \E\Bigg[\bigg(\sum_{j=1}^N(\delta_{i,j}-p_{i,j}) \xi^\top x_{i,j}\bigg)^2\Bigg]                                                                                                                                                                                                       \\
    = \  & \E\Bigg[\bigg(\sum_{j=1}^N \sum_{k=1}^N (\delta_{i,j}\xi^\top x_{i,j}) (\delta_{i,k}\xi^\top x_{i,k})\bigg)^2\Bigg] - 2 \E\Bigg[\sum_{j=1}^N \delta_{i,j}\xi^\top x_{i,j}\Bigg] \bigg(\sum_{j=1}^N p_{i,j}\xi^\top x_{i,j}\bigg) + \bigg(\sum_{j=1}^N p_{i,j}\xi^\top x_{i,j}\bigg)^2 \\
    = \  & \E\Bigg[\sum_{j=1}^N \delta_{i,j} (\xi^\top x_{i,j})^2\Bigg] - 2 \bigg(\sum_{j=1}^N p_{i,j} \xi^\top x_{i,j}\bigg)^2 + \bigg(\sum_{j=1}^N p_{i,j}\xi^\top x_{i,j}\bigg)^2                                                                                                             \\
    = \  & \sum_{j=1}^N p_{i,j} (\xi^\top x_{i,j})^2 - \bigg(\sum_{j=1}^N p_{i,j} \xi^\top x_{i,j}\bigg)^2 \leq \sum_{j=1}^N (\xi^\top x_{i,j})^2 = \norm{\xi}_{Y_t}^2 - \norm{\xi}_{Y_{t-1}}^2.
  \end{align*}

  Then, note that $S_{t-1}$ is $\F_t$-measurable, thus for all $t \geq 1$, we have
  \begin{align*}
    \E \left[\exp(\xi^\top S_t) | \F_t\right] = \exp(\xi^\top S_{t-1}) \E \left[\exp(\xi^\top z_t) | \F_t\right].
  \end{align*}
  Next we apply Lemma~\ref{lem:variance-bound} to bound the term $\E \left[\exp(\xi^\top z_t) | \F_t\right]$ and we need to ensure $\abs{\xi^\top z_t} \leq 1$. Since $\delta_{t}$ is an one-hot vector, let $j$ be the index such that $\delta_{t, j} = 1$. Then, we have $\delta_{t, k} = 0$ for all $k \in [N] \backslash \{j\}$. Note $\norm{x_{t,j}} \leq 1$ for all $t, j$ and $\norm{\xi} \leq 1/2$, thus we have
  \begin{align*}
    \abs{\xi^\top z_{t}} \leq (1 - p_{t, j}) \abs{\xi^\top x_{t, j}} + \sum_{k \in [N] \backslash \{j\}} p_{t, k} \abs{\xi^\top x_{t, k}} \leq \frac{1}{2} \left(1 + \sum_{j \in [N]} p_{t, j}\right) \leq 1.
  \end{align*}
  Since $\abs{\xi^\top z_t} \leq 1$, we can apply Lemma~\ref{lem:variance-bound} and obtain
  \begin{align}
    \E \left[\exp(\xi^\top S_t) | \F_t\right] & = \exp(\xi^\top S_{t-1}) \E \left[\exp(\xi^\top z_t) | \F_t\right] \leq \exp(\xi^\top S_{t-1}) (1 + \sigma^2(\xi^\top z_t | \F_t)) \leq \exp(\xi^\top S_{t-1} + \sigma^2(\xi^\top z_t | \F_t)), \label{eq:expectation-bound}
  \end{align}
  where the first inequality holds by Lemma~\ref{lem:variance-bound} and the last inequality holds by $ 1 + x \leq e^x$.

  Finally, we obtain
  \begin{align*}
    \E \left[M_t(\xi) | \F_t\right]                                               = \E\left[\exp(\xi^\top S_{t} - \norm{\xi}_{Y_t}) | \F_t\right]      = \  & \E\left[\exp(\xi^\top S_{t}) | \F_t\right] \exp(- \norm{\xi}_{Y_t})           \\
    \leq \                                                                                                                                                  & \exp(\xi^\top S_{t-1} + \sigma^2(\xi^\top z_t | \F_t)^2 - \norm{\xi}_{Y_t}^2) \\
    \leq \                                                                                                                                                  & \exp(\xi^\top S_{t-1} + \norm{\xi}_{Y_{t-1}}^2)                               \\
    = \                                                                                                                                                     & M_{t-1}(\xi),
  \end{align*}
  where the first equality holds since $Y_t$ is $\F_t$-measurable, the first inequality holds by~\eqref{eq:expectation-bound} and the second inequality holds by $\sigma^2(\xi^\top z_t | \F_t)^2 \leq \norm{\xi}_{Y_t}^2 - \norm{\xi}_{Y_{t-1}}^2$. This shows that $\{M_t(\xi)\}_{t=0}^\infty$ is a super martingale.
\end{proof}

\section{Proof of Lemma~\ref{lem:confidence-set}}

\begin{proof}
  Recall the closed-form of $\theta_{k,h}$ is given by
  \begin{align*}
    \theta_{k,h} = \Lambda_{k,h}^{-1} \sum_{i=1}^{k-1} \sum_{s' \in \S_{h+1}} \delta_{s_{i, h+1}}(s') \phi(s' \given s_{i,h}, a_{i,h}).
  \end{align*}
  where $\Lambda_{k,h} = \sum_{i=1}^{k-1}\sum_{s' \in \S_{h+1}} \phi(s' | s_{i,h}, a_{i,h})\phi(s' | s_{i,h}, a_{i,h})^\top + \lambda_k I_d$. We decompose the closed form as follows.
  \begin{align*}
    \theta_{k,h} & = \Lambda_{k,h}^{-1} \sum_{i=1}^{k-1} \sum_{s' \in \S_{h+1}} \delta_{s_{i, h+1}}(s') \phi(s' \given s_{i,h}, a_{i,h})                                                                                                                                                \\
                 & = \Lambda_{k,h}^{-1} \sum_{i=1}^{k-1} \sum_{s' \in \S_{h+1}} (P_h(s' \given s_{i,h}, a_{i,h}) + \varepsilon_{i,h}(s')) \phi(s' \given s_{i,h}, a_{i,h})                                                                                                              \\
                 & = \Lambda_{k,h}^{-1} \sum_{i=1}^{k-1} \sum_{s' \in \S_{h+1}} P_h(s' \given s_{i,h}, a_{i,h}) \phi(s' \given s_{i,h}, a_{i,h}) + \Lambda_{k,h}^{-1} \sum_{i=1}^{k-1} \sum_{s' \in \S_{h+1}} \varepsilon_{i,h}(s') \phi(s' \given s_{i,h}, a_{i,h})                    \\
                 & = \Lambda_{k,h}^{-1} \sum_{i=1}^{k-1} \sum_{s' \in \S_{h+1}} \phi_h(s' \given s_{i,h}, a_{i,h})^\top \theta_h^* \phi(s' \given s_{i,h}, a_{i,h}) + \Lambda_{k,h}^{-1} \sum_{i=1}^{k-1} \sum_{s' \in \S_{h+1}} \varepsilon_{i,h}(s') \phi(s' \given s_{i,h}, a_{i,h}) \\
                 & = \Lambda_{k,h}^{-1} (\Lambda_{k,h} - \lambda_k I_d)\theta_h^* + \Lambda_{k,h}^{-1} \sum_{i=1}^{k-1} \sum_{s' \in \S_{h+1}} \varepsilon_{i,h}(s') \phi(s' \given s_{i,h}, a_{i,h})                                                                                   \\
                 & = \theta_h^* + \lambda_k \Lambda_{k,h}^{-1} \theta_h^* + \Lambda_{k,h}^{-1} \sum_{i=1}^{k-1} \sum_{s' \in \S_{h+1}} \varepsilon_{i,h}(s') \phi(s' \given s_{i,h}, a_{i,h}).
  \end{align*}
  Rearranging terms, we obtain for all $\zeta \in (0,1)$, with probability at least $1 - \zeta/H$, it holds that
  \begin{align*}
    \norm{\theta_{k,h} - \theta_h^*}_{\Lambda_{k,h}}
    =    & \ \lambda_k \norm{ \theta_h^*}_{\Lambda_{k,h}} + \Bignorm{\sum_{i=1}^{k-1} \sum_{s' \in \S_{h+1}} \varepsilon_{i,h}(s') \phi(s' \given s_{i,h}, a_{i,h})}_{\Lambda_{k,h}}                \\
    \leq & \ \sqrt{\lambda_k} B + \frac{\sqrt{\lambda_k}}{4}+\frac{4}{\sqrt{\lambda_k}} \log \left(\frac{2^d \det\left(\Lambda_{k,h}\right)^{\frac{1}{2}} \lambda_k^{-\frac{d}{2}}}{\zeta/H}\right) \\
    \leq & \ \sqrt{\lambda_k}(B + \frac{1}{4}) + \frac{4}{\sqrt{\lambda_k}}\left(\log \Big(\frac{H}{\zeta}\Big) + \frac{d}{2} \log\Big(4+\frac{4Sk}{\lambda_k d}\Big)\right),
  \end{align*}
  where the first inequality holds by the self-normalized concentration of Lemma~\ref{lem:concentration}, and the last inequality holds by the determinant-trace inequality in Lemma~\ref{lem:det-trace-inequality}. This shows with probability at least $1 - \zeta/H$, it holds that $\theta_h^* \in \C_{k,h}$. Applying a union bound over $h = 1, \ldots, H$ finishes the proof.
\end{proof}

\section{Proof of Lemma~\ref{lem:occupancy-measure-difference}}

In this section, we present the proof of Lemma~\ref{lem:occupancy-measure-difference}.

\subsection{Main Proof}
\begin{proof}
  By Lemma~\ref{lem:transition-difference}, for any $(s, a) \in \S_m \times \A, m \in [H]$,  we have
  \begin{align*}
    \sum_{s' \in \S_{m+1}} \epsilon_{k,m}(s' \given s, a) = \sum_{s' \in \S_{m+1}} |\Ph_{k,m}(s' | s, a) - P_m(s' | s, a)| \leq 2 \wedge \sum_{s' \in \S_{m+1}} \beta_{k}\norm{\phi(s' \given s, a)}_{\Lambda_{k,m}^{-1}},
  \end{align*}
  where the last inequality holds by the fact $\norm{\Ph_{k,m}(\cdot \given s, a) - P_m(\cdot \given s, a)}_1 \leq \norm{\Ph_{k,m}(\cdot \given s, a)}_1 + \norm{P_m(\cdot \given s, a)}_1 = 2$.

  Then, with probability at least $1 - \zeta$, it holds that
  \begin{align*}
     & \quad \sum_{k=1}^K \sum_{(s, a) \in \mathcal{S} \times \mathcal{A}}\left|q_k^s(s, a)-q_k(s, a)\right|                                                                                                \\
     & \leq 2 S \sumk \sum_{m=1}^H \sum_{s' \in \S_{m+1}} \epsilon_{k,h}(s' \mid s_{k,m}, a_{k,m}) + 4 S^2 \log \Big(\frac{H}{\zeta}\Big)                                                                   \\
     & \leq 4 S \sumk \sum_{m=1}^H  \beta_{k}(1 \wedge \sum_{s' \in \S_{m+1}}\norm{\phi(s' \given s_{k,m}, a_{k,m})}_{\Lambda_{k,m}^{-1}}) + 4 S^2 \log\Big(\frac{H}{\zeta}\Big)                            \\
     & \leq 4\beta_K S \sum_{m=1}^H \sumk  (1 \wedge \sum_{s' \in \S_{m+1}}\norm{\phi(s' \given s_{k,m}, a_{k,m})}_{\Lambda_{k,m}^{-1}}) + 4 S^2 \log\Big(\frac{H}{\zeta}\Big)                              \\
     & \leq 2\beta_K S \sum_{m=1}^H \sqrt{KS_{m+1} \sumk  \left(1 \wedge \sum_{s' \in \S_{m+1}}\norm{\phi(s' \given s_{k,m}, a_{k,m})}_{\Lambda_{k,m}^{-1}}^2\right)} + 4 S^2 \log\Big(\frac{H}{\zeta}\Big) \\
     & \leq 2 \beta_K S \sum_{m=1}^H \sqrt{KS_{m+1} d\log \left(\lambda_{K+1} + \frac{ KS_{m+1}}{d}\right)} + 4 S^2 \log\Big(\frac{H}{\zeta}\Big)                                                           \\
     & \leq 2 \beta_K S \sqrt{KH \sum_{m=1}^H S_{m+1} d\log \left(\lambda_{K+1} + \frac{ KS}{d}\right)} + 4 S^2 \log\Big(\frac{H}{\zeta}\Big)                                                               \\
     & \leq \O\left((d\sqrt{HS^3K} + S^2)\log^2\left(\frac{dKS}{\zeta}\right)\right)                                                                                                                        \\
     & \leq \O\left(d\sqrt{HS^3K}\log^2\left(\frac{dKS}{\zeta}\right)\right),
  \end{align*}
  where the first inequality follows from Lemma~\ref{lem:occupancy-measure-difference-intermediate}, the fourth and sixth inequality holds by the Cauchy-Schwarz inequality, the fifth inequality holds by the specifically designed elliptical potential lemma in Lemma~\ref{lem:generalized-elliptical-potential}, the second last inequality holds by $\lambda_{K+1} = d \log((K+1)S)$ and $\beta_K = \O(\sqrt{d \log(KS)})$ and the last bound holds by $S \leq K$ (otherwise the bound $\sqrt{HSAK}$ becomes vacuous). This completes the proof.
\end{proof}

\subsection{Proof of Lemma~\ref{lem:transition-difference}}

\begin{proof}
  This lemma was first proved in~\citet[Lemma 3]{ICLR'23:bandit-unknown-mixture}. We present their proof for self-containedness.

  By the definition of linear mixture MDPs, for all $k \in [K]$, $h \in [H]$ and $\forall (s, a, s') \in \S_h \times \A \times \S_{h+1}$, we have
  \begin{align*}
    \Bigabs{\Ph_{k,h} (s' \given s, a) - P_h(s' \given s, a)} & = \Bigabs{\phi(s' \given s, a)^\top (\thetah_{k,h} - \theta_h^*)}                                       \\
                                                              & \leq \norm{\phi(s' \given s, a)}_{\Lambda_{k,h}^{-1}} \norm{\thetah_{k,h} - \theta_h^*}_{\Lambda_{k,h}} \\
                                                              & \leq \beta_{k} \norm{\phi(s' \given s, a)}_{\Lambda_{k,h}^{-1}}                                         \\
                                                              & \leq  1 \wedge \beta_{k} \norm{\phi(s' \given s, a)}_{\Lambda_{k,h}^{-1}},
  \end{align*}
  where the first inequality follows from the Holder's inequality, the second inequality holds by Lemma~\ref{lem:confidence-set}, and the last inequality follows from the fact that $\abs{\Ph_{k,h} (s' \given s, a) - P_h(s' \given s, a)} \leq 1$. This completes the proof.
\end{proof}

\subsection{Proof of Lemma~\ref{lem:occupancy-measure-difference-intermediate}}

\begin{proof}
  The main proof is similar to the proof of~\citet[Lemma 2]{ICLR'23:bandit-unknown-mixture}. Let $q_k^s = q^{P_k^s, \pi_k}$ for simplicity, and define $q(s) = \sum_{a \in \A} q(s, a)$. For any $q$ and any $(s,a)$, we have
  \begin{align*}
         & q(s, a)  = q(s) \pi^q(a \given s)                                                                                                                                                                                                             \\
    = \  & \pi^q(s, a) \sum_{s' \in \S_{h(s)-1}} q(s') \sum_{a' \in \A} \pi^q(a' \given s') P^q(s \given s', a')                                                                                                                                         \\
    = \  & \pi^q(s \given a) \sum_{\left\{s_i, a_i\right\}_{i=1}^{h(s)-1} \in \prod_{i=1}^{h(s)-1} \mathcal{S}_i \times \mathcal{A}} \prod_{h=1}^{h(s)-1} \pi^q\left(a_h \given s_h\right) \prod_{h=1}^{h(s)-1} P^q\left(s_{h+1} \given s_h, a_h\right),
  \end{align*}
  where the last equality holds by expressing $q(s_{i+1})$ using $q(s_i)$ recursively for $i = h(s)-1, \ldots, 1$. In the following, we drop the superscript $\prod_{i=1}^{h(s)-1} \mathcal{S}_i \times \mathcal{A}$ of $\left\{s_i, a_i\right\}_{i=1}^{h(s)-1}$ for simplicity. Then, we have
  \begin{align*}
         & \left|q_k^s(s, a)-q_k(s, a)\right|                                                                                                                                                                                                                          \\
    = \  & \pi_k(s \given a) \sum_{\left\{s_i, a_i\right\}_{i=1}^{h(s)-1}} \prod_{h=1}^{h(s)-1} \pi_k\left(a_h \given s_h\right)\left(\prod_{h=1}^{h(s)-1} P_k^s\left(s_{h+1} \given s_h, a_h\right)-\prod_{h=1}^{h(s)-1} P\left(s_{h+1} \given s_h, a_h\right)\right)
  \end{align*}
  Further, we decompose the difference of the transition as follows.
  \begin{align*}
         & \prod_{h=1}^{h(s)-1} P_k^s\left(s_{h+1} \given s_h, a_h\right)-\prod_{h=1}^{h(s)-1} P\left(s_{h+1} \given s_h, a_h\right)                                                                                                                                                \\
    =    & \prod_{h=1}^{h(s)-1} P_k^s\left(s_{h+1} \given s_h, a_h\right)-\prod_{h=1}^{h(s)-1} P\left(s_{h+1} \given s_h, a_h\right) \pm \sum_{m=1}^{h(s)-1} \prod_{h=1}^{m-1} P\left(s_{h+1} \given s_h, a_h\right) \prod_{h=m}^{h(s)-1} P_k^s\left(s_{h+1} \given s_h, a_h\right) \\
    =    & \sum_{m=1}^{h(s)-1}\left(P_k^s\left(s_{m+1} \given s_m, a_m\right)-P\left(s_{m+1} \given s_m, a_m\right)\right) \prod_{h=1}^{m-1} P\left(s_{h+1} \given s_h, a_h\right) \prod_{h=m+1}^{h(s)-1} P_k^s\left(s_{h+1} \given s_h, a_h\right)                                 \\
    \leq & \sum_{m=1}^{h(s)-1}\epsilon_{k,m}(s_{m+1} \given s_m, a_m) \prod_{h=1}^{m-1} P\left(s_{h+1} \given s_h, a_h\right) \prod_{h=m+1}^{h(s)-1} P_k^s\left(s_{h+1} \given s_h, a_h\right),
  \end{align*}
  where $\epsilon_{k,h}(s' \given s, a) = 1 \wedge \beta_{k}\norm{\phi(s' \given s, a)}_{\Lambda_{k,h}^{-1}}$. Therefore, we have
  \begin{align*}
         & \left|q_k^s(s, a)-q_s(s, a)\right|                                                                                                                                                                                                                                                                                     \\
    \leq {} & \pi_k(s \given a) \sum_{\left\{s_i, a_i\right\}_{i=1}^{h(s)-1}} \prod_{h=1}^{h(s)-1} \pi_k\left(a_h \given x_h\right) \sum_{m=1}^{h(s)-1} \epsilon_{k, m}\left(x_{m+1} \given x_m, a_m\right) \prod_{h=1}^{m-1} P\left(s_{h+1} \given s_h, a_h\right) \prod_{h=m+1}^{h(s)-1} P_k^s\left(s_{h+1} \given s_h, a_h\right) \\
    =    {} & \sum_{m=1}^{h(s)-1} \sum_{\left\{s_i, a_i\right\}_{i=1}^{h(s)-1}} \epsilon_{k, m}\left(s_{m+1} \given s_m, a_m\right)\left(\pi_k\left(a_m \given s_m\right) \prod_{h=1}^{m-1} \pi_k\left(a_h \given s_h\right) P\left(s_{h+1} \given s_h, a_h\right)\right)                                                            \\
         & \hspace{70mm} \cdot \left(\pi_k(s \given a) \prod_{h=m+1}^{h(s)-1} \pi_k\left(a_h \given x_h\right) P_k^s\left(s_{h+1} \given s_h, a_h\right)\right)                                                                                                                                                                   \\
    =    {} & \sum_{m=1}^{h(s)-1} \sum_{s_m, a_m, s_{m+1}} \epsilon_{k, m}\left(s_{m+1} \given s_m, a_m\right)\left(\sum_{\left\{s_i, a_i\right\}_{i=1}^{m-1}} \pi_k\left(a_m \given x_m\right) \prod_{h=1}^{m-1} \pi_k\left(a_h \given s_h\right) P\left(s_{h+1} \given s_h, a_h\right)\right)                                      \\
         & \hspace{60mm} \cdot \left(\sum_{a_{m+1}} \sum_{\left\{s_i, a_i\right\}_{i=m+2}^{h(s)-1}} \pi_k(s \given a) \prod_{h=m+1}^{h(s)-1} \pi_k\left(a_h \given s_h\right) P_k^s\left(s_{h+1} \given s_h, a_h\right)\right)                                                                                                    \\
    =    {} & \sum_{m=1}^{h(s)-1} \sum_{s_m, a_m, s_{m+1}} \epsilon_{k, m}\left(s_{m+1} \given s_m, a_m\right) q_k\left(s_m, a_m\right) \pi_k(a \given s) q_k^s\left(s \given s_{m+1}\right)                                                                                                                                         \\
    \leq {} & \pi_k(a \given s) \sum_{m=1}^{h(s)-1} \sum_{s_m, a_m, s_{m+1}} \epsilon_{k, m}\left(s_{m+1} \given s_m, a_m\right) q_k\left(s_m, a_m\right),
  \end{align*}
  where the last inequality holds by $q_k^s\left(s \given s_{m+1}\right) \leq 1$.

  Let $w_m = (s_m, a_m, s_{m+1})$ to simplify the notation. Then, summing over $k \in [K]$ and $(s, a) \in \S \times \A$, we have
  \begin{align}
         {} & \sum_{k=1}^K \sum_{(s, a) \in \mathcal{S} \times \mathcal{A}}\left|q_k^s(s, a)-q_k(s, a)\right| \nonumber                                                                                                  \\
    \leq {} & \sum_{k, s, a} \pi_k(a \mid s) \sum_{m=1}^{h(s)-1} \sum_{w_m} \epsilon_{k, m}\left(s_{m+1} \mid s_m, a_m\right) q_k\left(s_m, a_m\right) \nonumber                                                         \\
    =    {} & \sum_k \sum_{h \leq H} \sum_{m=1}^{h-1} \sum_{w_m} \epsilon_{k, m}\left(s_{m+1} \mid s_m, a_m\right) q_k\left(s_m, a_m\right) \sum_{(s, a) \in \mathcal{S}_h \times \mathcal{A}} \pi_k(a \mid s) \nonumber \\
    =    {} & \sum_{1 \leq m<h \leq H} \sum_{k, w_m} \epsilon_{k, m}\left(s_{m+1} \mid s_m, a_m\right) q_k\left(s_m, a_m\right)\left|\S_h\right| \nonumber                                                               \\
    \leq {} & S \sum_{1 \leq m\leq H} \sum_{k, w_m} \epsilon_{k, m}\left(s_{m+1} \mid s_m, a_m\right) q_k\left(s_m, a_m\right) \label{eq:diff-1}
  \end{align}

  Then, we focus on $\sum_{k, w_m} \epsilon_{k, m}\left(s_{m+1} \mid s_m, a_m\right) q_k\left(s_m, a_m\right)$ with a fixed $m$ at first:
  \begin{align}
      & \sum_{k, w_m} \epsilon_{k, m}\left(s_{m+1} \mid s_m, a_m\right) q_k\left(s_m, a_m\right) \nonumber                                                                                                                                                                                                                       \\
    = & \underbrace{\sum_{k, w_m} \indicator_k\{s_m, a_m\} \epsilon_{k, m}\left(s_{m+1} \mid s_m, a_m\right)}_{\termone} + \underbrace{\sum_{k, w_m} S_{m+1} \left(\frac{q_k(s_m, a_m)}{S_{m+1}} - \frac{\indicator_k(s_m, a_m)}{S_{m+1}}\right) \epsilon_{k, m}\left(s_{m+1} \mid s_m, a_m\right)}_{\termtwo} \label{eq:diff-2}
  \end{align}
  For $\termone$, since $\indicator_k(s, a)$ is the indicator whether the pair $(s, a)$ is visited in episode $k$, thus we have
  \begin{align}
    \sum_{k, w_m} \indicator_k\{s_m, a_m\} \epsilon_{k, m}\left(s_{m+1} \mid s_m, a_m\right) = \sumk \sum_{s' \in \S_{m+1}} \epsilon_{k,h}(s' \mid s_{k,m}, a_{k,m}) \label{eq:term-1}
  \end{align}
  To bound $\termtwo$, we first use Lemma~\ref{lem:bernstein-martingale} to build the connection between $\termone$ and $\termtwo$. Let
  \begin{align*}
    Y_{k, m}=\sum_{w_m}\left(\frac{q_k\left(s_m, a_m\right)}{S_{m+1}}-\frac{\mathbb{I}_k\left\{s_m, a_m\right\}}{S_{m+1}}\right) \epsilon_{k, m}\left(s_{m+1} \mid s_m, a_m\right).
  \end{align*}
  It is easy to verify that $Y_{k,m} \leq 1$. Let $o_{i,j} = (s_{i, j}, a_{i,j}, \ell_i(s_{i,j}, a_{i, j}))$ be the observations in episode $i$, we denote $\F_{k, h}$ the $\sigma$-algebra generated by $\{o_{i,j}\}_{i=1, j=1}^{k, h}$. Then, we have
  \begin{align*}
    \mathbb{E}_{k-1, H}\left[Y_{k, m}^2\right] \leq & \frac{\mathbb{E}_{k-1, H}\left[\left(\sum_{w_m} \mathbb{I}_k\left(s_m, a_m\right)\epsilon_{k, m}\left(s_{m+1} \mid s_m, a_m\right)\right)^2\right]}{S_{m+1}^2} \\
    =                                               & \frac{\mathbb{E}_{k-1, H}\left[\sum_{w_m} \mathbb{I}_k\left(s_m, a_m\right)\epsilon_{k, m}\left(s_{m+1} \mid s_m, a_m\right)^2\right]}{S_{m+1}^2}              \\
    \leq                                            & \frac{\sum_{w_m} q_k\left(s_m, a_m\right)\epsilon_{k, m}\left(s_{m+1} \mid s_m, a_m\right)}{S_{m+1}}
  \end{align*}
  where the equality follows from the fact that $\indicator_k(s_m, a_m) \indicator_k(s'_m, a'_m)$ for $s_m \neq s'_m$, and the last inequality holds by $\epsilon_{k, m}\left(s_{m+1} \mid s_m, a_m\right) \leq 1$ and $\epsilon_{k,m}$ is $\F_{k-1, H}$-measurable. Then, by choosing $\lambda = 1/2$ in Lemma~\ref{lem:bernstein-martingale}, with probability at least $1 - \zeta/H$, we have
  \begin{align*}
         & \sumk \sum_{w_m}\left(\frac{q_k\left(s_m, a_m\right)}{S_{m+1}}-\frac{\mathbb{I}_k\left\{s_m, a_m\right\}}{S_{m+1}}\right) \epsilon_{k, m}\left(s_{m+1} \mid s_m, a_m\right) \\
    \leq & \ \frac{1}{2 S_{m+1}} \sumk \sum_{w_m}q_k\left(s_m, a_m\right) \epsilon_{k, m}\left(s_{m+1} \mid s_m, a_m\right) + 2 \log(H/\zeta)
  \end{align*}
  By applying with a union bound over $m = 1, \ldots, H$, we have with probability at least $1 - \zeta$, it holds that
  \begin{align*}
         & \sumk \sum_{w_m}\left(q_k\left(s_m, a_m\right)-\mathbb{I}_k\left\{s_m, a_m\right\}\right) \epsilon_{k, m}\left(s_{m+1} \mid s_m, a_m\right) \\
    \leq & \ \frac{1}{2} \sumk \sum_{w_m}q_k\left(s_m, a_m\right) \epsilon_{k, m}\left(s_{m+1} \mid s_m, a_m\right) + 2 S_{m+1}\log(H/\zeta).
  \end{align*}
  This shows that
  \begin{align}
    \label{eq:term-relation}
    \termtwo \leq \sumk \sum_{w_m} \indicator_k\left\{s_m, a_m\right\} \epsilon_{k, m}\left(s_{m+1} \mid s_m, a_m\right) + 4 S_{m+1}\log(H/\zeta) \leq \termone + 4 S_{m+1}\log(H/\zeta).
  \end{align}
  Combining~\eqref{eq:diff-1} and~\eqref{eq:diff-2}, we have
  \begin{align*}
         & \sum_{k=1}^K \sum_{(s, a) \in \mathcal{S} \times \mathcal{A}}\left|q_k^s(s, a)-q_k(s, a)\right|                 \\                               \leq & \ S \sum_{1 \leq m\leq H} (\termone + \termtwo)                                                                                   \\
    \leq & \ S \sum_{1 \leq m\leq H} (2 \termone + 4 S_{m+1}\log(H/\zeta))                                                 \\
    \leq & \ 2 S \sumk \sum_{m=1}^H \sum_{s' \in \S_{m+1}} \epsilon_{k,h}(s' \mid s_{k,m}, a_{k,m}) + 4 S^2 \log(H/\zeta),
  \end{align*}
  where the second inequality holds by~\eqref{eq:term-relation} and the last inequality holds by~\eqref{eq:term-1}. This completes the proof.
\end{proof}

\section{Proof of Theorem~\ref{thm:regret-bound}}
\label{appendix:proof-regret-bound}

In this section, we present the proof of Theorem~\ref{thm:regret-bound}.

\subsection{Main Proof}
\begin{proof}
  Define the occupancy measure under the true transition $P$ and policy $\pi_k$ as $q_k = q^{P, \pi_k}$. Then, the regret can be written as $\Reg = \sumk \inner{q_k - q^*}{\ell_k}$. As in Section~\ref{sec:proof-sketch}, we decompose the regret as the follows:
  \begin{align*}
    \Reg(K) \leq & \underbrace{\sumk \inner{\qh_k - q^*}{\ellh_k}}_{\regret} + \underbrace{\sumk \inner{q_k - \qh_k}{\ell_k}}_{\error} + \underbrace{\sumk \inner{\qh_k}{\ell_k - \ellh_k}}_{\biasone} + \underbrace{\sumk \inner{q^*}{\ellh_k - \ell_k}}_{\biastwo}.
  \end{align*}
  The bounds of $\error$ and $\biasone$ term are shown in Lemma~\ref{lem:error} and Lemma~\ref{lem:bias1} of Section~\ref{sec:regret}, respectively. We bound the $\regret$ and $\biastwo$ terms below.

  \textbf{Bounding} $\biastwo$ \textbf{Term.} For this term, we present the following lemma, whose proof is in Appendix~\ref{appendix:proof-bias2}.

  \begin{myLemma}
    \label{lem:bias2}
    For any $\zeta \in (0, 1)$, with probability at least $1 - 2\zeta$, \textsf{VLSUOB-REPS} algorithm ensures that
    \begin{align*}
      \textnormal{\biastwo} \leq \O\left(\frac{H \log(SA/\zeta)}{\gamma}\right).
    \end{align*}
  \end{myLemma}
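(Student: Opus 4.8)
The plan is to bound $\biastwo$ through the concentration property of the \emph{implicit exploration} technique~\citep{NIPS'15:Neu-implicit}, following the template of~\citet{ICML'20:bandit-unknown-chijin}; the linear mixture structure plays no role here, only the generic form~\eqref{eq:loss-estimator} of the loss estimator matters. Recalling that $\biastwo = \sumk \inner{q^*}{\ellh_k - \ell_k}$ with $q^*$ a \emph{fixed} occupancy measure, I would first split each summand as
\[
  \inner{q^*}{\ellh_k - \ell_k} = \inner{q^*}{\ellh_k - \E_{k-1,H}[\ellh_k]} + \inner{q^*}{\E_{k-1,H}[\ellh_k] - \ell_k},
\]
where the conditional expectation is taken over the trajectory of episode $k$. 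For the second term, I would work on the event $\event = \{P \in \P_k \text{ for all } k \in [K]\}$, which holds with probability at least $1-\zeta$ by Lemma~\ref{lem:confidence-set}. On $\event$ we have $u_k(s,a) = \max_{\Ph \in \P_k} q^{\Ph,\pi_k}(s,a) \ge q^{P,\pi_k}(s,a) = q_k(s,a)$, and since $\pi_k$ is $\F_{k-1,H}$-measurable, $\E_{k-1,H}[\ellh_k(s,a)] = q_k(s,a)\ell_k(s,a)/(u_k(s,a)+\gamma) \le \ell_k(s,a)$ pointwise; as $q^* \ge 0$, this term is nonpositive and is dropped.

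For the first term, rewrite $\sumk \inner{q^*}{\ellh_k - \E_{k-1,H}[\ellh_k]} = \sum_{s,a} q^*(s,a) \sumk \big(\ellh_k(s,a) - \E_{k-1,H}[\ellh_k(s,a)]\big)$. For each fixed $(s,a)$ the inner sum is a sum of bounded-increment martingale differences, but a naive Azuma bound would lose a factor $1/\gamma$ at each step; the implicit-exploration concentration lemma instead exploits the extra $\gamma$ in the denominator of $\ellh_k$ to yield $\sumk \big(\ellh_k(s,a) - \E_{k-1,H}[\ellh_k(s,a)]\big) \le \log(1/\zeta')/(2\gamma)$ with probability at least $1-\zeta'$. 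Taking $\zeta' = \zeta/(SA)$ and a union bound over all $SA$ state--action pairs, then summing against $q^*$ and using $\sum_{s,a} q^*(s,a) = \sumh \sum_{s \in \S_h, a} q^*(s,a) = H$, bounds this term by $H\log(SA/\zeta)/(2\gamma)$ with probability at least $1-\zeta$. A final union bound over this event and $\event$ gives $\biastwo \le \O(H\log(SA/\zeta)/\gamma)$ with probability at least $1-2\zeta$, as claimed.

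The step I expect to require the most care is the invocation of the implicit-exploration concentration: one must verify that the supermartingale argument underlying it still goes through with the history-dependent upper occupancy bound $u_k(s,a)$ sitting in the denominator in place of the true $q_k(s,a)$. The key observation is that only nonnegativity of $u_k(s,a)$ and the additive $\gamma$ are actually used in that argument, so the proof of~\citet{ICML'20:bandit-unknown-chijin} transfers essentially verbatim; the remaining ingredients --- the pointwise domination $\E_{k-1,H}[\ellh_k] \le \ell_k$ on $\event$ and the bookkeeping with $\sum_{s,a} q^*(s,a) = H$ --- are routine.
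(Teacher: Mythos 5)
Your overall route is the same as the paper's: apply the implicit-exploration concentration (Lemma~\ref{lem:biased-loss-estimator}) per state--action pair with confidence $\zeta/(SA)$, take a union bound, use the domination $q_k(s,a)\le u_k(s,a)$ on the confidence event, and finish with $\sum_{s,a}q^*(s,a)=H$. The one place you deviate is the centering point of the concentration, and it is exactly the step you flagged as delicate: you center at the conditional expectation $\E_{k-1,H}[\ellh_k(s,a)] = q_k(s,a)\ell_k(s,a)/(u_k(s,a)+\gamma)$, whereas the IX supermartingale argument (and the paper's Lemma~\ref{lem:biased-loss-estimator}) centers at $q_k(s,a)\ell_k(s,a)/u_k(s,a)$. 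These differ, and the supermartingale property fails for your choice: the standard argument only gives $\E_{k-1,H}[\exp(2\gamma\ellh_k(s,a))]\le 1+2\gamma q_k(s,a)\ell_k(s,a)/u_k(s,a)$, which can strictly exceed $\exp\big(2\gamma q_k(s,a)\ell_k(s,a)/(u_k(s,a)+\gamma)\big)$ (e.g.\ $u_k=q_k=\gamma$, $\ell_k=1$), so the claimed bound $\sum_k\big(\ellh_k(s,a)-\E_{k-1,H}[\ellh_k(s,a)]\big)\le\log(1/\zeta')/(2\gamma)$ does not transfer ``verbatim''; the nonnegative drift $\sum_k \gamma\, q_k\ell_k/\big(u_k(u_k+\gamma)\big)$ is unaccounted for. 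The repair is immediate and lands you on the paper's proof: center at $q_k\ell_k/u_k$ instead, in which case the residual term $\sum_k\langle q^*, (q_k/u_k)\ell_k-\ell_k\rangle$ is still nonpositive on the confidence event. Note that the drift you would otherwise need to control is precisely the negative slack you discarded when you dropped your second term as ``nonpositive,'' which is why your decomposition cannot afford to throw that term away while also using the weaker centering.
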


  \textbf{Bounding} $\regret$ \textbf{Term.} For this term, we present the following lemma, whose proof is in Appendix~\ref{appendix:proof-regret}.

  \begin{myLemma}
    \label{lem:regret}
    For any $\zeta \in (0, 1)$, with probability at least $1 - 2\zeta$, \textsf{VLSUOB-REPS} algorithm ensures that
    \begin{align*}
      \textnormal{\regret} \leq \O\left(\frac{H \log(SA/\zeta)}{\eta} + \eta SAK + \frac{\eta H \log(H/\zeta)}{\gamma}\right).
    \end{align*}
  \end{myLemma}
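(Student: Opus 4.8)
\textbf{Proof proposal for Lemma~\ref{lem:regret}.}
The plan is to run the textbook Online Mirror Descent analysis for the update \eqref{eq:omd} — whose mirror map is the unnormalized negative entropy — and then control the resulting stability term with a variance‑aware martingale argument. Throughout I condition on the good event that $P\in\P_k$ for all $k\in[K]$, which by Lemma~\ref{lem:confidence-set} has probability at least $1-\zeta$; on this event $q^*=q^{P,\pi^*}\in\Delta(\P_k)$ for every $k$, so $q^*$ is an admissible comparator at each step, and moreover $q_k(s,a)=q^{P,\pi_k}(s,a)\le u_k(s,a)$ and $\qh_k(s,a)\le u_k(s,a)$, since both $P$ and the transition inducing $\qh_k$ lie in $\P_k$ while $u_k(s,a)=\max_{\Ph\in\P_k}q^{\Ph,\pi_k}(s,a)$.

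First I would invoke the standard one‑step inequality for entropic OMD with nonnegative losses (here $\ellh_k\ge 0$ and $\eta\ellh_k(s,a)\le\eta/\gamma=1$ under the choice $\eta=\gamma$ of Theorem~\ref{thm:regret-bound}): for each $k$,
\[
\eta\inner{\qh_k-q^*}{\ellh_k}\le \Dp(q^*\,\|\,\qh_k)-\Dp(q^*\,\|\,\qh_{k+1})+\eta^2\sum_{s,a}\qh_k(s,a)\ellh_k(s,a)^2 .
\]
Summing over $k$ telescopes the Bregman terms to $\Dp(q^*\,\|\,\qh_1)$; since $\qh_1$ is uniform within each layer while $q^*$ carries unit mass in each layer, the linear part of the unnormalized KL cancels and $\Dp(q^*\,\|\,\qh_1)\le \sumh\log(S_hA)\le H\log(SA)$. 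Hence $\regret\le \frac{H\log(SA)}{\eta}+\eta\sumk\sum_{s,a}\qh_k(s,a)\ellh_k(s,a)^2$, and it remains to bound the stability sum.

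For the stability sum I would use the pathwise bound $\qh_k(s,a)\ellh_k(s,a)^2\le \frac{\indicator_k(s,a)}{u_k(s,a)+\gamma}$, which follows from $\ell_k(s,a)\le 1$, $\indicator_k(s,a)^2=\indicator_k(s,a)$ and $\qh_k(s,a)\le u_k(s,a)$. Writing $Z_k=\sum_{s,a}\frac{\indicator_k(s,a)}{u_k(s,a)+\gamma}=\sumh\frac{1}{u_k(s_{k,h},a_{k,h})+\gamma}$, we have $0\le Z_k\le H/\gamma$ deterministically, and, using $q_k(s,a)\le u_k(s,a)$, the conditional expectation obeys $\E_{k-1,H}[Z_k]=\sumh\sum_{(s,a)\in\S_h\times\A}\frac{q_k(s,a)}{u_k(s,a)+\gamma}<\sumh S_hA\le SA$; combined with $Z_k\le H/\gamma$ this gives $\E_{k-1,H}[Z_k^2]\le\frac{H}{\gamma}\E_{k-1,H}[Z_k]<\frac{HSA}{\gamma}$. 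A Bernstein‑type inequality for martingales (Lemma~\ref{lem:bernstein-martingale}) applied to $\sumk(Z_k-\E_{k-1,H}[Z_k])$ — increments bounded by $H/\gamma$, cumulative conditional variance at most $HSAK/\gamma$ — then yields, with probability at least $1-\zeta$, a deviation of order $\sqrt{\tfrac{HSAK}{\gamma}\log\tfrac{H}{\zeta}}+\tfrac{H}{\gamma}\log\tfrac{H}{\zeta}$. Using $\sumk\E_{k-1,H}[Z_k]<SAK$ and $\sqrt{\tfrac{HSAK}{\gamma}\log\tfrac{H}{\zeta}}\le\tfrac12\big(SAK+\tfrac{H}{\gamma}\log\tfrac{H}{\zeta}\big)$ gives $\sumk\sum_{s,a}\qh_k(s,a)\ellh_k(s,a)^2\le\sumk Z_k=\O\big(SAK+\tfrac{H}{\gamma}\log\tfrac{H}{\zeta}\big)$.

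Putting the two pieces together yields $\regret=\O\big(\tfrac{H\log(SA/\zeta)}{\eta}+\eta SAK+\tfrac{\eta H\log(H/\zeta)}{\gamma}\big)$, and a union bound over the event of Lemma~\ref{lem:confidence-set} and the Bernstein step gives the claimed probability $1-2\zeta$. The only genuinely delicate point is the stability sum: a direct estimate of $\sumk\sum_{s,a}\qh_k\ellh_k^2$ using only $\ellh_k\le 1/\gamma$ produces a term linear in $K$, so one must simultaneously exploit $\qh_k\le u_k$ to convert $\ellh_k^2$ into a single $1/(u_k+\gamma)$ factor and $q_k\le u_k$ to control the drift, and then use a variance‑aware concentration (Bernstein/Freedman) rather than Azuma--Hoeffding, which would reintroduce an extra $\sqrt{K}$ blow‑up.
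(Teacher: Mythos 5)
Your proposal is correct and follows essentially the same route as the paper: the same entropic OMD one-step inequality with telescoping, the same $H\log(SA)$ bound on the initial Bregman term, and the same pathwise bound $\qh_k(s,a)\ellh_k(s,a)^2 \le \indicator_k(s,a)/(u_k(s,a)+\gamma)$ for the stability sum. The only (immaterial) difference is the final concentration step: the paper controls $\sum_{k,s,a}\ellh_k(s,a)$ via the implicit-exploration lemma of \citet{ICML'20:bandit-unknown-chijin} (Lemma~\ref{lem:biased-loss-estimator} with $\alpha_k\equiv 2\gamma$), whereas you apply Freedman's inequality (Lemma~\ref{lem:bernstein-martingale}) directly to the per-episode sums $Z_k$; both give the same $\O(SAK+\frac{H}{\gamma}\log(H/\zeta))$ bound.
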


  Combining Lemma~\ref{lem:error}, Lemma~\ref{lem:bias1}, Lemma~\ref{lem:bias2} and Lemma~\ref{lem:regret} finishes the proof.
\end{proof}

\subsection{Proof of Lemma~\ref{lem:bias2}}
\label{appendix:proof-bias2}

\begin{proof}
  For some $(s, a) \in \S \times \A$, using $\alpha_k(s', a') = 2\gamma \indicator\{(s', a') = (s, a)\}$, we have with probability at least $1 - \frac{\zeta}{SA}$,
  \begin{align*}
    \sum_{k=1}^K\left(\widehat{\ell}_k(s, a)-\frac{q_k(s, a)}{u_k(s, a)} \ell_k(s, a)\right) \leq \frac{1}{2 \gamma} \log \left(\frac{S A}{\zeta}\right)
  \end{align*}
  By using a union bound, the above inequality holds for all $(s, a) \in \S \times \A$ simultaneously with probability at least $1 - \zeta$. Further, under the event that $\theta_h^* \in \C_{k,h}$, we have $q_k(s, a) \leq u_k(s, a)$, which implies that
  \begin{align*}
    \sum_{k=1}^K\left\langle q^*, \widehat{\ell}_k-\ell_k\right\rangle & \leq \sum_{k, s, a} q^*(s, a) \ell_k(s, a)\left(\frac{q_k(s, a)}{u_k(s, a)}-1\right)+\sum_{s, a} \frac{q^*(s, a) \log \frac{S A}{\zeta}}{2 \gamma} \\
                                                                       & =\sum_{k, s, a} q^*(s, a) \ell_k(s, a)\left(\frac{q_k(s, a)}{u_k(s, a)}-1\right)+\frac{H \log \frac{S A}{\zeta}}{2 \gamma}                         \\
                                                                       & \leq \frac{H \log \frac{S A}{\zeta}}{2 \gamma} .
  \end{align*}
  The proof is concluded by applying the union bound again.
\end{proof}

\subsection{Proof of Lemma~\ref{lem:regret}}
\label{appendix:proof-regret}

\begin{proof}
  The update procedure in~\eqref{eq:omd} can be written as the following two-step procedure.
  \begin{align*}
     & \widetilde{q}_{k+1}=\argmin_{q \in \mathbb{R}_{+}^{S A}} \eta\left\langle q, \widehat{\ell}_k\right\rangle+\Dp\left(q, \widehat{q}_k\right), \\
     & \widehat{q}_{k+1}=\argmin_{q \in \Delta\left(\mathcal{P}_{k+1}\right)}\Dp\left(q, \widetilde{q}_{k+1}\right),
  \end{align*}
  The closed form of $\widetilde{q}_{k+1}$ is given by $\qt_{k+1}(s,a) = \qh_{k+1}(s, a)\exp(-\eta \ellh_k(s, a))$. Then, we have
  \begin{align*}
    \left\langle\widehat{q}_k-q^*, \widehat{\ell}_k\right\rangle & =\frac{1}{\eta}\left(\Dp\left(q^*, \widehat{q}_k\right)+\Dp\left(\widehat{q}_k, \widetilde{q}_{k+1}\right)-\Dp\left(q^*, \widetilde{q}_{k+1}\right)\right)    \\
                                                                 & \leq \frac{1}{\eta}\left(\Dp\left(q^*, \widehat{q}_k\right)+\Dp\left(\widehat{q}_k, \widetilde{q}_{k+1}\right)-\Dp\left(q^*, \widehat{q}_{k+1}\right)\right),
  \end{align*}
  where the equality holds by the three-point equality, and the inequality holds by the generalized Pythagorean theorem. Then, summing over $k \in [K]$ and using the telescoping argument, we have
  \begin{align*}
    \sum_{k=1}^K\left\langle\widehat{q}_k-q^*, \widehat{\ell}_k\right\rangle \leq \frac{1}{\eta}\left(\Dp\left(q^*, \widehat{q}_1\right)-\Dp\left(q^*, \widehat{q}_{K+1}\right)+\sum_{k=1}^K \Dp\left(\widehat{q}_k, \widetilde{q}_{k+1}\right)\right).
  \end{align*}
  The first two terms can be rewritten as
  \begin{align*}
    \Dp\left(q^*, \widehat{q}_1\right)-\Dp\left(q^*, \widehat{q}_{K+1}\right) = \sum_{h=1}^{H} \sum_{s \in \mathcal{S}_h} \sum_{a \in \mathcal{A}} q^*(s, a) \log \frac{\widehat{q}_{K+1}(s, a)}{\widehat{q}_1(s, a)} \leq \sum_{h=1}^{H} \sum_{s \in \mathcal{S}_h} \sum_{a \in \mathcal{A}} q^*(s, a) \log \left(S_h A\right) \leq H \log (S A).
  \end{align*}
  It remains to bound the last term.
  \begin{align*}
    \Dp\left(\widehat{q}_k, \widetilde{q}_{k+1}\right) & =\sum_{h=1}^{H} \sum_{s \in \S_h} \sum_{a \in \A}\left(\eta \widehat{q}_k\left(s, a\right) \widehat{\ell}_k(s, a)-\widehat{q}_k\left(s, a\right)+\widehat{q}_k\left(s, a\right) \exp \left(-\eta \widehat{\ell}_k(s, a)\right)\right)                                                                                  \\
                                                       & \leq \eta^2 \sum_{h=1}^{H} \sum_{s \in \S_h} \sum_{a \in \A} \widehat{q}_k\left(s, a\right) \widehat{\ell}_k(s, a)^2                                                                                                                   =\eta^2 \sum_{s \in \S, a \in \A} \widehat{q}_k(s, a) \widehat{\ell}_k(s, a)^2,
  \end{align*}
  where the inequality is due to the fact that $e^{-x} \leq 1 - x + x^2$ for all $x \geq 0$.

  Note that due to the definition of $\ellh_k(s, a)$, we have
  \begin{align*}
    \widehat{q}_k(s, a) \widehat{\ell}_k(s, a)^2=\frac{\widehat{q}_k(s, a) \ell_k(s, a) \mathbb{I}_k\{s, a\}}{u_k(s, a)+\gamma} \widehat{\ell}_k(s, a) \leq \widehat{\ell}_k(s, a),
  \end{align*}
  which is due to the fact that $\qh_k(s, a) \leq u_k(s, a)$ and $\ell_k(s, a) \mathbb{I}_k\{s, a\} \leq 1$. Furthermore, using Lemma~\ref{lem:bernstein-martingale} by setting $\alpha_k(s, a) = 2 \gamma$, with probability at least $1 - \zeta$, we have
  \begin{align*}
    \sum_{k, s, a} \widehat{q}_k(s, a) \widehat{\ell}_k(s, a)^2 \leq \sum_{k, s, a} \frac{q_k(s, a)}{u_k(s, a)} \ell_k(s, a)+\frac{H \log (\frac{H}{\zeta})}{2 \gamma} \leq S A K+\frac{H \log (\frac{H}{\zeta})}{2 \gamma}
  \end{align*}
  where the last inequality comes from that $q_k(s, a) \leq u_k(s, a)$ and $\ell_k(s, a) \leq 1$.

  Applying a union bound over the above bounds, with probability at least $1 - 2\zeta$, we have
  \begin{align*}
    \sum_{k=1}^K\left\langle\widehat{q}_k-q^*, \widehat{\ell}_k\right\rangle \leq \frac{H \log(SA)}{\eta} + \eta SAK + \frac{\eta H \log(H/\zeta)}{\gamma}.
  \end{align*}
  This finishes the proof.
\end{proof}

\section{Supporting Lemmas}
\label{appendix:supporting-lemmas}
In this section, we present some supporting lemmas, which are useful in our proofs.

First, we introduce the following two lemmas, which are used in the analysis of super martingale.
\begin{myLemma}[Lemma 7 of~\citet{ICML'20:Faury-improved-logistic}]
  \label{lem:variance-bound}
  Let $\epsilon$ be a centered random variable of variance $\sigma^2$ such that $|\epsilon|\leq 1$ almost surely. Then for all $\lambda \in [-1, 1]$ we have $\E[\exp(\lambda \epsilon)] \leq 1 + \lambda^2 \sigma^2$.
\end{myLemma}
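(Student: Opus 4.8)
The plan is to reduce the statement to a single elementary pointwise inequality and then take expectations. First I would observe that since $\abs{\lambda} \le 1$ and $\abs{\epsilon} \le 1$ almost surely, the product satisfies $\abs{\lambda \epsilon} \le 1$ almost surely, so it suffices to control $\exp(x)$ for $x$ ranging over $[-1,1]$ and then substitute $x = \lambda \epsilon$.

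The key step is the elementary bound $e^x \le 1 + x + x^2$ for all $x \in [-1,1]$. I would prove this by setting $g(x) = 1 + x + x^2 - e^x$, noting $g(0) = 0$ and $g'(0) = 0$, and analyzing $g''(x) = 2 - e^x$, which is positive on $(-1, \log 2)$ and negative on $(\log 2, 1)$; together with the boundary values $g'(-1) < 0$ and $g'(1) > 0$ this shows that $g$ is nonincreasing on $(-1,0)$ and nondecreasing on $(0,1)$, hence attains its minimum on $[-1,1]$ at $x = 0$, so $g \ge 0$ throughout the interval. (Alternatively, one may simply invoke the standard fact that $e^x \le 1 + x + x^2$ holds for every $x \le 1$, via the Taylor expansion with a geometric bound on the tail.)

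Applying this bound with $x = \lambda \epsilon$ gives $\exp(\lambda \epsilon) \le 1 + \lambda \epsilon + \lambda^2 \epsilon^2$ pointwise, almost surely. Taking expectations and using linearity, together with $\E[\epsilon] = 0$ (since $\epsilon$ is centered) and $\E[\epsilon^2] = \sigma^2$ (the definition of the variance of a centered random variable), yields $\E[\exp(\lambda \epsilon)] \le 1 + \lambda \cdot 0 + \lambda^2 \sigma^2 = 1 + \lambda^2 \sigma^2$, which is exactly the claim; boundedness of $\epsilon$ ensures all these moments are finite.

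The only part requiring any care is the elementary inequality $e^x \le 1 + x + x^2$ on $[-1,1]$ — specifically, that the coefficient of $x^2$ can be kept equal to $1$, since a looser constant would weaken the downstream self-normalized concentration in Lemma~\ref{lem:martingle}. I expect this calculus verification to be the main, albeit minor, obstacle; the remainder is a one-line computation of an expectation.
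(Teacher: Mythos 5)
Your proof is correct: the pointwise bound $e^x \le 1 + x + x^2$ on $[-1,1]$ followed by taking expectations and using $\E[\epsilon]=0$, $\E[\epsilon^2]=\sigma^2$ is exactly the standard argument, and it matches the source (the paper itself imports this lemma from \citet{ICML'20:Faury-improved-logistic} without reproving it). Your calculus verification that the coefficient of $x^2$ can indeed be taken equal to $1$ on this interval is sound, so nothing is lost in the downstream use within Lemma~\ref{lem:martingle}.
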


\begin{myLemma}[Theorem 1 of~\citet{AISTATS'11:Alina-contextual-bandits}]
  \label{lem:bernstein-martingale}
  Let $Y_1, \ldots, Y_K$ be a martingale difference sequence with respect to a filtration $\F_1, \ldots, \F_K$. Suppose that $|Y_k| \leq R$ for all $k \in [K]$. Then, for any $\zeta \in (0, 1)$ and $\lambda \in [0, 1/R]$, with probability at least $1 - \zeta$, we have $\sum_{k=1}^K Y_k \leq \lambda \sum_{k=1}^K \E[Y_k^2 | \F_{k-1}] + \frac{\log(1/\zeta)}{\lambda}$.
\end{myLemma}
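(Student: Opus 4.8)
The plan is to run the classical Freedman-type exponential-supermartingale argument. First I would dispose of the trivial case $\lambda = 0$, where the right-hand side is $+\infty$, and assume $\lambda \in (0, 1/R]$. The core object is the process $M_0 = 1$ and $M_k = \exp\big(\lambda \sum_{i=1}^k Y_i - \lambda^2 \sum_{i=1}^k \E[Y_i^2 \given \F_{i-1}]\big)$ for $k \in [K]$, which is nonnegative and $\F_k$-measurable.

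The heart of the argument is to show $\{M_k\}_{k=0}^K$ is a supermartingale. Conditioning on $\F_{k-1}$ and extracting the $\F_{k-1}$-measurable factor $M_{k-1}\exp(-\lambda^2\E[Y_k^2\given\F_{k-1}])$, it remains to control $\E[\exp(\lambda Y_k)\given \F_{k-1}]$. Here I would use that $\lambda|Y_k| \le \lambda R \le 1$, so the elementary inequality $e^x \le 1 + x + x^2$, valid for $x \le 1$, applies pointwise to $x = \lambda Y_k$; equivalently one may invoke Lemma~\ref{lem:variance-bound} conditionally after rescaling $Y_k$ by $R$. Taking conditional expectations and using the martingale-difference property $\E[Y_k\given\F_{k-1}]=0$ gives $\E[\exp(\lambda Y_k)\given\F_{k-1}] \le 1 + \lambda^2\E[Y_k^2\given\F_{k-1}] \le \exp(\lambda^2\E[Y_k^2\given\F_{k-1}])$, where the last step is $1+x\le e^x$. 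Plugging this back yields $\E[M_k\given\F_{k-1}]\le M_{k-1}$, and iterating gives $\E[M_K]\le\E[M_0]=1$.

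Finally I would apply Markov's inequality: $\Pr[M_K \ge 1/\zeta] \le \zeta\E[M_K]\le\zeta$, so on an event of probability at least $1-\zeta$ we have $\lambda\sum_{k=1}^K Y_k - \lambda^2\sum_{k=1}^K\E[Y_k^2\given\F_{k-1}] < \log(1/\zeta)$; dividing by $\lambda>0$ and rearranging produces the claimed bound. The only point requiring care --- the ``main obstacle,'' though it is a mild one --- is the one-sided exponential-moment estimate $\E[\exp(\lambda Y_k)\given\F_{k-1}]\le 1+\lambda^2\E[Y_k^2\given\F_{k-1}]$, which needs $|\lambda Y_k|\le 1$ almost surely; this is precisely why the statement restricts $\lambda$ to $[0,1/R]$. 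Everything else is routine bookkeeping.
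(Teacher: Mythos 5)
The paper states this lemma purely as an imported result (Theorem~1 of the cited contextual-bandits paper) and gives no proof of its own, so there is nothing internal to compare against. Your Freedman-type supermartingale argument is correct and is essentially the standard proof of that cited theorem: the disposal of $\lambda=0$, the pointwise bound $e^x\le 1+x+x^2$ for $x\le 1$ (which is exactly where the restriction $\lambda\le 1/R$ is used), the resulting supermartingale property of $M_k$, and the final Markov step are all in order.
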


Then, we introduce the lemma that guarantees the biased loss estimator is close to the true loss function.

\begin{myLemma}[Lemma 11 of~\citet{ICML'20:bandit-unknown-chijin}]
  \label{lem:biased-loss-estimator}
  For any sequence of functions $\alpha_1, \ldots, \alpha_K$ such that $\alpha_k \in [0, 2\gamma]^{S \times A}$ if $\F_{k-1, H}$-measurable for all $k \in [k]$, with probability at least $1 - \zeta$, we have
  \begin{align*}
    \sum_{k=1}^K \sum_{(s, a) \in \mathcal{S} \times \mathcal{A}} \alpha_k(s, a)\left(\widehat{\ell}_k(s, a)-\frac{q_k(s, a)}{u_k(s, a)} \ell_k(s, a)\right) \leq H \log \left(\frac{H}{\zeta}\right).
  \end{align*}
\end{myLemma}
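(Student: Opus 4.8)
The plan is to adapt Neu's implicit-exploration concentration~\citep{NIPS'15:Neu-implicit} to the layered MDP. A direct Freedman bound on $\sum_k\sum_{s,a}\alpha_k(s,a)(\widehat{\ell}_k(s,a)-\mathbb{E}_{k-1,H}[\widehat{\ell}_k(s,a)])$ would leave an unwieldy variance term, because $\widehat{\ell}_k$ has a $1/u_k$ spike; the role of the comparison point $\tfrac{q_k(s,a)}{u_k(s,a)}\ell_k(s,a)$ — which is strictly larger than $\mathbb{E}_{k-1,H}[\widehat{\ell}_k(s,a)]=\tfrac{q_k(s,a)}{u_k(s,a)+\gamma}\ell_k(s,a)$ — is precisely to absorb that variance into a clean exponential-moment estimate. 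So I would first introduce the $\gamma$-free estimator $\bar{\ell}_k(s,a)=\ell_k(s,a)\indicator_k(s,a)/u_k(s,a)$ (using the convention $0/0=0$, innocuous on the event $P\in\P_k$ maintained throughout the analysis) and its logarithmic surrogate $\tilde{\ell}_k(s,a)=\tfrac{1}{2\gamma}\log\!\big(1+2\gamma\bar{\ell}_k(s,a)\big)$; since $\alpha_k,u_k,\ell_k$ are $\F_{k-1,H}$-measurable, $\mathbb{E}_{k-1,H}[\bar{\ell}_k(s,a)]=\tfrac{q_k(s,a)}{u_k(s,a)}\ell_k(s,a)$.

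Next I would run the argument layer by layer. For fixed $h\in[H]$: (i) a pointwise domination $\widehat{\ell}_k(s,a)\le\tilde{\ell}_k(s,a)$, obtained from $\log(1+x)\ge \tfrac{2x}{2+x}$ ($x\ge0$) together with $\ell_k(s,a)\le1$; (ii) by the loop-free structure exactly one pair in $\S_h\times\A$ is visited in episode $k$, so $\sum_{(s,a)\in\S_h\times\A}\alpha_k(s,a)\tilde{\ell}_k(s,a)$ collapses to a single summand, and with $c=\alpha_k(s_{k,h},a_{k,h})/(2\gamma)\in[0,1]$ the concavity bound $(1+y)^{c}\le1+cy$ gives $\exp\!\big(\sum_{(s,a)\in\S_h\times\A}\alpha_k(s,a)\tilde{\ell}_k(s,a)\big)=\big(1+2\gamma\bar{\ell}_k(s_{k,h},a_{k,h})\big)^{c}\le 1+\sum_{(s,a)\in\S_h\times\A}\alpha_k(s,a)\bar{\ell}_k(s,a)$; (iii) taking $\mathbb{E}_{k-1,H}$ and using $1+x\le e^x$ shows that $M_t^{(h)}=\exp\!\big(\sum_{k\le t}\sum_{(s,a)\in\S_h\times\A}\alpha_k(s,a)(\tilde{\ell}_k(s,a)-\tfrac{q_k(s,a)}{u_k(s,a)}\ell_k(s,a))\big)$ is a nonnegative supermartingale with $M_0^{(h)}=1$; (iv) Ville's maximal inequality at level $\zeta/H$ gives $\sum_{k}\sum_{(s,a)\in\S_h\times\A}\alpha_k(s,a)(\tilde{\ell}_k(s,a)-\tfrac{q_k(s,a)}{u_k(s,a)}\ell_k(s,a))\le\log(H/\zeta)$ with probability $\ge1-\zeta/H$. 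Combining with (i) and taking a union bound over $h\in[H]$ then yields the stated bound $H\log(H/\zeta)$.

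I expect the only delicate point to be step (ii), and it is exactly where the hypothesis $\alpha_k\in[0,2\gamma]^{\S\times\A}$ is used: the exponent $c=\alpha_k/(2\gamma)$ must lie in $[0,1]$ for the linearization $(1+y)^c\le1+cy$ to apply, and that is what turns the logarithmic surrogate back into a quantity affine in $\bar{\ell}_k$ whose conditional mean I can evaluate. The reason for doing everything per layer rather than over all of $\S\times\A$ at once is that the ``single visited summand'' collapse only holds inside one layer; handling all layers jointly would force control of a product of $H$ conditionally dependent factors, whereas the per-layer union bound sidesteps this cleanly at the cost of the harmless extra $\log H$. The remaining ingredients — the two elementary inequalities $\log(1+x)\ge 2x/(2+x)$ and $(1+y)^c\le1+cy$, and the check that $u_k,\ell_k,\alpha_k$ are $\F_{k-1,H}$-measurable so the conditional expectations behave — are routine.
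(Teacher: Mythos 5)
Your proof is correct and is essentially the standard argument behind this lemma, which the paper itself does not prove but simply imports from \citet{ICML'20:bandit-unknown-chijin} (itself an adaptation of the implicit-exploration lemma of \citet{NIPS'15:Neu-implicit}): dominate $\widehat{\ell}_k$ by the logarithmic surrogate, exploit the single-visit-per-layer structure plus $(1+y)^c\le 1+cy$ to get a per-layer supermartingale, and finish with Ville's inequality and a union bound over the $H$ layers. All the delicate points you flag (the role of $\alpha_k\le 2\gamma$, the measurability of $\alpha_k,u_k,\ell_k$, and the per-layer decomposition) are handled exactly as in the original proof.
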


Next, we present the self-normalized concentration and determinant-trace lemma of~\citet{NIPS'11:AY-linear-bandits}.

\begin{myLemma}[Theorem 1 of~\citet{NIPS'11:AY-linear-bandits}]
  \label{lem:self-normalized}
  Let $\left\{\F_t\right\}_{t=0}^{\infty}$ be a filtration. Let $\left\{\eta_t\right\}_{t=1}^{\infty}$ be a real-valued stochastic process such that $\eta_t$ is $\F_t$-measurable and $\eta_t$ is conditionally zero-mean $R$-sub-Gaussian for $R \geq 0$ i.e. $\forall \lambda \in \mathbb{R}, \E\left[e^{\lambda \eta_t} \mid \F_{t-1}\right] \leq \exp \left({\lambda^2 R^2}/{2}\right)$. Let $\left\{X_t\right\}_{t=1}^{\infty}$ be an $\mathbb{R}^d$-valued stochastic process such that $X_t$ is $\F_{t-1}$-measurable. Assume that $V$ is a $d \times d$ positive definite matrix. For any $t \geq 0$, define $\bar{V}_t=V+\sum_{s=1}^t X_s X_s^{\top}$ and $S_t=\sum_{s=1}^t \eta_s X_s$. Then, for any $\zeta>0$, with probability at least $1-\zeta$, for all $t \geq 0$,
  \begin{align*}
    \left\|S_t\right\|_{\bar{V}_t^{-1}}^2 \leq 2 R^2 \log \left(\frac{\det\left(\bar{V}_t\right)^{1 / 2} \det (V)^{-1 / 2}}{\zeta}\right).
  \end{align*}
\end{myLemma}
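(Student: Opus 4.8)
The plan is to prove this by the \emph{method of mixtures} (pseudo-maximization), the standard route to self-normalized tail inequalities. The first step is to build, for each fixed $\lambda\in\R^d$, a nonnegative supermartingale from the sub-Gaussian MGF of the noise: set $M_0^\lambda=1$ and
\[
  M_t^\lambda \;=\; \exp\!\left(\frac{\langle\lambda,S_t\rangle}{R^2}\;-\;\frac{1}{2R^2}\sum_{s=1}^{t}\langle\lambda,X_s\rangle^2\right).
\]
Factoring $M_t^\lambda = M_{t-1}^\lambda\cdot\exp\!\big(\tfrac{\eta_t\langle\lambda,X_t\rangle}{R^2}-\tfrac{\langle\lambda,X_t\rangle^2}{2R^2}\big)$ and conditioning on $\F_{t-1}$ (under which $X_t$ is measurable), the sub-Gaussian hypothesis used with parameter $\alpha=\langle\lambda,X_t\rangle/R^2$ gives $\E[e^{\alpha\eta_t}\mid\F_{t-1}]\le e^{\alpha^2R^2/2}$, so the conditional expectation of the extra factor is at most $1$; hence $\{M_t^\lambda\}_{t\ge0}$ is a nonnegative supermartingale with $\E[M_0^\lambda]=1$. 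I would also note $\sum_{s\le t}\langle\lambda,X_s\rangle^2=\lambda^\top(\bar V_t-V)\lambda$, so the exponent of $M_t^\lambda$ equals $\langle\lambda,S_t\rangle/R^2 - \lambda^\top(\bar V_t-V)\lambda/(2R^2)$.

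The second step mixes over the direction $\lambda$: let $h$ be the density of $\N(0,R^2V^{-1})$ and define $\bar M_t=\int_{\R^d}M_t^\lambda\,dh(\lambda)$. Since the mixing distribution does not depend on $t$ and each $M^\lambda$ is a nonnegative supermartingale, $\bar M_t$ is again a nonnegative supermartingale with $\E[\bar M_0]=1$; this is where the effective ``supremum over $\lambda$'' gets carried out without any union bound. Moreover $\bar M_t$ has a closed form: collecting the quadratic in $\lambda$ contributed by $M_t^\lambda$ together with the one from the Gaussian density, completing the square around $\bar V_t^{-1}S_t$, and evaluating the resulting Gaussian integral yields
\[
  \bar M_t \;=\; \left(\frac{\det V}{\det\bar V_t}\right)^{1/2}\exp\!\left(\frac{\|S_t\|_{\bar V_t^{-1}}^2}{2R^2}\right).
\]

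The third step applies a maximal (Ville-type) inequality for nonnegative supermartingales: for any $\zeta\in(0,1)$, $\Pr[\exists\,t\ge0:\ \bar M_t\ge1/\zeta]\le\E[\bar M_0]\,\zeta=\zeta$. I would make this precise as in Abbasi-Yadkori et al.\ through a stopping-time argument: take $\tau=\inf\{t:\bar M_t\ge1/\zeta\}$, use the almost-sure convergence of the nonnegative supermartingale on $\{\tau=\infty\}$ to see that $\bar M_\tau$ is well defined with $\E[\bar M_\tau]\le1$, and apply Markov's inequality. On the complementary event, of probability at least $1-\zeta$, $\bar M_t<1/\zeta$ for every $t\ge0$; substituting the closed form, taking logarithms and rearranging gives exactly $\|S_t\|_{\bar V_t^{-1}}^2\le 2R^2\log\!\big(\det(\bar V_t)^{1/2}\det(V)^{-1/2}/\zeta\big)$ for all $t\ge0$, which is the claim.

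The sub-Gaussian MGF bound and the Gaussian integral are routine; the only care required is keeping the normalization constants straight so that the ratio $\det V/\det\bar V_t$ emerges correctly. The genuine obstacle, and the reason a naive approach fails, is obtaining a bound that holds \emph{uniformly over all $t\ge0$ with no $\log t$ inflation} while effectively maximizing over the continuum of directions $\lambda$: a union bound over $t$ is lossy and a union bound over $\lambda$ is impossible. The mixture device handles both at once, but it relies on (i) choosing the prior with precision proportional to $V$, so that after completing the square the quadratic is governed by $\bar V_t$ and the determinant ratio appears, and (ii) the delicate application of the maximal inequality at the stopping time, including the event $\{\tau=\infty\}$.
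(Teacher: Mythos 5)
Your proposal is correct: it is the canonical method-of-mixtures argument for Theorem 1 of \citet{NIPS'11:AY-linear-bandits} (per-direction sub-Gaussian supermartingale, Gaussian mixture with precision proportional to $V$, exact evaluation of the mixture integral, and a Ville/stopping-time maximal inequality), and the paper itself imports Lemma~\ref{lem:self-normalized} without proof, so there is no in-paper argument to diverge from. It is worth noting that the paper's proof of its own Lemma~\ref{lem:concentration} follows exactly this skeleton but must restrict the mixture parameter to a ball (where the supermartingale property can be established for the non-independent, non-sub-Gaussian-in-the-usual-sense noise), which forces a truncated Gaussian prior and only a \emph{lower bound} on the mixture integral rather than the closed form you obtain here; your exact Gaussian integral is precisely what breaks down in that harder setting.
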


\begin{myLemma}[Lemma 10 of~\citet{NIPS'11:AY-linear-bandits}]
  \label{lem:det-trace-inequality}
  Suppose $x_1, \ldots, x_t \in \R^d$ and for any $1 \leq s \leq t$, $\norm{x_s}_2 \leq L$. Let $V_t = \lambda I_d + \sum_{s=1}^t x_s x_s^\top $ for some $\lambda \geq 0$. Then, for any $1 \leq s \leq t$, we have $\det(V_t) \leq (\lambda + {tL^2}/{d})^d$.
\end{myLemma}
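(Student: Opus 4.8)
The plan is to bound $\det(V_t)$ in terms of $\mathrm{tr}(V_t)$ using the arithmetic--geometric mean inequality on eigenvalues, and then to bound the trace directly from the definition of $V_t$. Since $\lambda \geq 0$ and each summand $x_s x_s^\top$ is positive semidefinite, the matrix $V_t = \lambda I_d + \sum_{s=1}^t x_s x_s^\top$ is symmetric positive semidefinite; write $\mu_1, \ldots, \mu_d \geq 0$ for its eigenvalues, repeated according to multiplicity, so that $\det(V_t) = \prod_{i=1}^d \mu_i$ and $\mathrm{tr}(V_t) = \sum_{i=1}^d \mu_i$.

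First I would compute the trace. By linearity of the trace, the identity $\mathrm{tr}(x_s x_s^\top) = \norm{x_s}_2^2$, and the hypothesis $\norm{x_s}_2 \leq L$ for all $1 \leq s \leq t$, one obtains
\begin{align*}
  \mathrm{tr}(V_t) = \lambda\, \mathrm{tr}(I_d) + \sum_{s=1}^t \mathrm{tr}\big(x_s x_s^\top\big) = \lambda d + \sum_{s=1}^t \norm{x_s}_2^2 \leq \lambda d + t L^2.
\end{align*}
Then, applying AM--GM to the nonnegative numbers $\mu_1, \ldots, \mu_d$ and inserting the trace bound just derived, I would conclude
\begin{align*}
  \det(V_t) = \prod_{i=1}^d \mu_i \leq \left(\frac{1}{d}\sum_{i=1}^d \mu_i\right)^d = \left(\frac{\mathrm{tr}(V_t)}{d}\right)^d \leq \left(\frac{\lambda d + t L^2}{d}\right)^d = \left(\lambda + \frac{t L^2}{d}\right)^d,
\end{align*}
which is exactly the asserted bound.

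I do not expect any genuine obstacle here: the single conceptual ingredient is recognizing that comparing $\det(V_t)$ with $\mathrm{tr}(V_t)$ is precisely AM--GM applied to the spectrum of $V_t$, and everything else is the elementary trace computation. The only edge case worth a remark is $\lambda = 0$ together with $\{x_s\}$ failing to span $\R^d$, in which some $\mu_i$ vanish and the inequality is trivial since then $\det(V_t) = 0$; when $\lambda > 0$ every $\mu_i > 0$ and the argument above goes through verbatim.
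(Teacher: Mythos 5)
Your proof is correct, and it is exactly the standard argument: the paper itself states this lemma as a quoted result from \citet{NIPS'11:AY-linear-bandits} without reproving it, and the original proof there is precisely your combination of the trace computation $\mathrm{tr}(V_t) = \lambda d + \sum_{s=1}^t \norm{x_s}_2^2 \leq \lambda d + tL^2$ with the AM--GM inequality applied to the eigenvalues of $V_t$. Nothing is missing; your remark on the degenerate case $\lambda = 0$ is a nice touch but not needed, since AM--GM holds for nonnegative reals regardless.
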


Finally, we introduce the generalized elliptical potential lemma, which is designed specifically for our analysis.
\begin{myLemma}[Generalized elliptical potential lemma]
  \label{lem:generalized-elliptical-potential}
  Suppose $\x_1, \ldots, \x_t \in \R^{N\times d}$ and for any $1 \leq s \leq t$, $\norm{x_{s,i}}_2 \leq L$. Let $\Lambda_t = \lambda_t I_d + \sum_{s=1}^{t-1}\sum_{i=1}^N \x_{s,i} \x_{s,i}^\top$ with $\lambda_t \geq \lambda_{t-1}$ and $\lambda_1= 1$. Then, for any $1 \leq s \leq t$, we have
  \begin{align*}
    \sum_{s=1}^t \left(1 \wedge \sum_{i=1}^N \norm{\x_{s,i}}_{\Lambda_{s}^{-1}}\right) \leq 2d \log \left(\lambda_{t+1} + \frac{tNL^2}{d}\right).
  \end{align*}
\end{myLemma}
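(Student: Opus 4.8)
The plan is to prove the bound through the classical log-determinant telescoping that underlies the elliptical potential lemma, adapted to the rank-$N$ update $\sum_{i=1}^N \x_{s,i}\x_{s,i}^\top$ performed at each step. First I would record that, since $\lambda_{s+1} \geq \lambda_s$, the matrix $\Lambda_s + \sum_{i=1}^N \x_{s,i}\x_{s,i}^\top = \lambda_s I_d + \sum_{u \leq s}\sum_i \x_{u,i}\x_{u,i}^\top$ is dominated in the Loewner order by $\Lambda_{s+1}$, so that $\det(\Lambda_s + \sum_i \x_{s,i}\x_{s,i}^\top) \leq \det(\Lambda_{s+1})$. Factoring out $\Lambda_s$ and using $\det(I_d + M) \geq 1 + \mathrm{tr}(M)$ for positive semidefinite $M = \sum_i \Lambda_s^{-1/2}\x_{s,i}\x_{s,i}^\top \Lambda_s^{-1/2}$, whose trace is exactly $\sum_{i=1}^N \norm{\x_{s,i}}_{\Lambda_s^{-1}}^2$, gives the per-step inequality $\log\det\Lambda_{s+1} - \log\det\Lambda_s \geq \log\bigl(1 + \sum_{i=1}^N\norm{\x_{s,i}}_{\Lambda_s^{-1}}^2\bigr)$.

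Next I would telescope this across $s = 1, \ldots, t$. Using $\Lambda_1 = \lambda_1 I_d = I_d$ so that $\det\Lambda_1 = 1$, and the elementary inequality $1 \wedge y \leq 2\log(1+y)$ valid for all $y \geq 0$, each capped increment is controlled by $1 \wedge \sum_i \norm{\x_{s,i}}_{\Lambda_s^{-1}}^2 \leq 2\bigl(\log\det\Lambda_{s+1} - \log\det\Lambda_s\bigr)$. Summing and telescoping yields $\sum_{s=1}^t \bigl(1 \wedge \sum_i\norm{\x_{s,i}}_{\Lambda_s^{-1}}^2\bigr) \leq 2\log\det\Lambda_{t+1}$. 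Finally I would invoke the determinant-trace inequality (Lemma~\ref{lem:det-trace-inequality}, applied to the $tN$ vectors $\{\x_{s,i}\}$, each of norm at most $L$) to get $\det\Lambda_{t+1} \leq (\lambda_{t+1} + tNL^2/d)^d$, and hence the stated right-hand side $2d\log(\lambda_{t+1} + tNL^2/d)$.

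The hard part — and where I expect the real difficulty to lie — is the discrepancy between what this argument delivers and the quantity displayed in the statement. The telescoping above naturally controls the capped sum of the \emph{squared} Mahalanobis norms $\sum_i\norm{\x_{s,i}}_{\Lambda_s^{-1}}^2$, since only the squares arise as the trace of the rank-$N$ update. The display as written instead caps the \emph{unsquared} inner sum $\sum_i \norm{\x_{s,i}}_{\Lambda_s^{-1}}$, which is the genuinely larger object: passing from one to the other via Cauchy--Schwarz, $\sum_i\norm{\x_{s,i}}_{\Lambda_s^{-1}} \leq \sqrt{N \sum_i\norm{\x_{s,i}}_{\Lambda_s^{-1}}^2}$, introduces a $\sqrt{N}$ factor that the stated right-hand side does not carry, so the determinant method does not close the literal claim. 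I would therefore single out this square as the crux: the clean, provable target of the telescoping argument is the squared form, and indeed the only place the lemma is invoked downstream — the Cauchy--Schwarz step in the proof of Lemma~\ref{lem:occupancy-measure-difference} — uses exactly $\sum_k \bigl(1 \wedge \sum_{s'}\norm{\phi}_{\Lambda_{k,m}^{-1}}^2\bigr)$. My main effort would thus go into verifying that the intended statement carries this square on the inner norms, after which the determinant telescoping above completes the proof with the displayed constant and argument of the logarithm.
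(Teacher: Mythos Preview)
Your telescoping argument---Loewner monotonicity from $\lambda_{s+1}\ge\lambda_s$, the bound $\det(I+M)\ge 1+\mathrm{tr}(M)$, the elementary $1\wedge y\le 2\log(1+y)$, and the determinant--trace inequality---is exactly the paper's proof, and your diagnosis of the missing square is on the mark: the paper's own final display writes the unsquared $\sum_i\norm{\x_{s,i}}_{\Lambda_s^{-1}}$ on the left while putting the \emph{squared} sum inside the logarithm on the right, a step that is only justified with squares on both sides. The sole downstream invocation (in the proof of Lemma~\ref{lem:occupancy-measure-difference}) applies the lemma to $\sum_k\bigl(1\wedge\sum_{s'}\norm{\phi}_{\Lambda_{k,m}^{-1}}^2\bigr)$, so the squared version you establish is precisely what is needed.
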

\begin{proof}
  By the definition of $\Lambda_t$, we have
  \begin{align*}
    \det(\Lambda_{t+1}) = \det \left(\Lambda_t  + \sum_{i=1}^N \x_{t,i} \x_{t,i}^\top + (\lambda_{t+1} - \lambda_t)I_d\right)            \geq \det \left(\Lambda_t  + \sum_{i=1}^N \x_{t,i} \x_{t,i}^\top \right) = \det(\Lambda_t) \left(1 + \sum_{i=1}^N \norm{\x_{t,i}}_{\Lambda_t^{-1}}^2\right),
  \end{align*}
  where the inequality holds by the fact that $\lambda_{t+1} \geq \lambda_t$. Taking log from both sides and summing from $s=1$ to $t$:
  \begin{align*}
    \sum_{s=1}^t \log \left(1 + \sum_{i=1}^N \norm{\x_{s,i}}_{\Lambda_s^{-1}}^2\right) = \log \left(\frac{\det(\Lambda_{t+1})}{\det(\Lambda_1)}\right) \leq d \log(\lambda_{t+1} + \frac{tNL^2}{d})
  \end{align*}
  where the last inequality holds by the determinant-trace inequality in Lemma~\ref{lem:det-trace-inequality}. For any $a$ such that $0 \leq a \leq 1$, it holds that $a \leq 2 \log(1 + a)$. Thus, we have
  \begin{align*}
    \sum_{s=1}^t \bigg(1 \wedge \sum_{i=1}^N \norm{\x_{s,i}}_{\Lambda_s^{-1}}\bigg) \leq 2 \sum_{s=1}^t \log \bigg(1 + \sum_{i=1}^N \norm{\x_{s,i}}_{\Lambda_s^{-1}}^2\bigg) \leq 2d \log \bigg(\lambda_{t+1} + \frac{tNL^2}{d} \bigg).
  \end{align*}
  This completes the proof.
\end{proof}

\end{document}